\documentclass[11pt]{article}
\usepackage{preamble}
\newcommand{\nc}{\newcommand}
\newcommand{\rnc}{\renewcommand}

\newcommand{\bm}[1]{\textbf{\textit{#1}}}
\nc{\on}[1]{\operatorname{#1}}
\nc{\ol}[1]{\overline{#1}}
\nc{\wt}[1]{\widetilde{#1}}
\nc{\vp}{\varphi}
\nc{\eps}{\varepsilon}

\newcommand\restr[2]{{
  \left.\kern-\nulldelimiterspace
  #1 
  \vphantom{\big|}
  \right|_{#2}
}}

\nc{\R}{\mathbb R}
\nc{\C}{\mathbb C}
\nc{\Q}{\mathbb Q}
\nc{\Z}{\mathbb Z}
\nc{\N}{\mathbb N}
\nc{\F}{\mathbb F}
\nc{\D}{\mathbb D}
\nc{\ind}{\mathbf{1}}

\let\E\relax
\DeclareMathOperator{\E}{\mathbb{E}}
\let\Pr\relax
\DeclareMathOperator*{\Pr}{\mathbb{P}}

\nc{\Proj}{\operatorname{Proj}}

\nc{\CH}{\mathcal H}
\nc{\CL}{\mathcal L}
\nc{\CD}{\mathcal D}
\nc{\CZ}{\mathcal Z}
\nc{\CP}{\mathcal P}
\nc{\CF}{\mathcal F}
\nc{\CG}{\mathcal G}
\nc{\CN}{\mathcal N}
\nc{\CX}{\mathcal{X}}
\nc{\CY}{\mathcal{Y}}
\nc{\CU}{\mathcal{U}}
\nc{\CA}{\mathcal{A}}
\nc{\CE}{\mathcal{E}}
\nc{\CI}{\mathcal{I}}
\nc{\CC}{\mathcal{C}}
\nc{\CR}{\mathcal{R}}
\nc{\CS}{\mathcal{S}}
\nc{\CB}{\mathcal{B}}

\nc{\defn}[1]{\textit{#1}}
\nc{\eg}{\emph{e.g.,~}}
\nc{\ie}{\emph{i.e.}}

\nc{\Lzo}{\mathcal{L}_{0/1}}
\nc{\loss}{\mathcal{L}}
\nc{\lambdamin}{\lambda_{\min}}
\nc{\lambdamax}{\lambda_{\max}}
\rnc{\t}[1]{\text{#1}}
\nc{\sgn}{\on{sgn}}

\usepackage{color-edits}
\addauthor[Dutch]{dh}{ForestGreen}
\addauthor[to-do]{t}{red}
\addauthor[Jerry]{jl}{purple}
\addauthor[Scott]{sg}{cyan}

\title{Weak-to-Strong Generalization is Nearly Inevitable (in Linear Models)}
\author{
    Scott Geng\thanks{University of Washington, Seattle. \texttt{sgeng@cs.washington.edu}.} \and
    Dutch Hansen\thanks{University of Washington, Seattle. \texttt{jmhans13@cs.washington.edu}.} \and
    Jerry Li\thanks{University of Washington, Seattle. \texttt{jerryzli@cs.washington.edu}.}
}
\date{\today}

\begin{document}

\maketitle
\begin{abstract}
    Weak-to-strong generalization is a phenomenon in post-training whereby a strong student model, when finetuned solely with feedback from a weaker teacher, can not only surpass the teacher, but can improve upon its own capabilities. Recent work of~\cite{burns2023weak} demonstrated that this can occur in the setting of frontier language models, and subsequently there has been a flurry of both empirical work trying to exploit this phenomenon, as well as theoretical work attempting to understand it.
In this work, we demonstrate that weak-to-strong generalization occurs in standard linear logistic regression, under mild distributional assumptions on the data.
In fact, we show that this happens for \emph{most} student-teacher pairs, suggesting that weak-to-strong generalization is in fact \emph{almost inevitable}, even in this basic setting.
Notably, our setting does not require the student to be more expressive or have more model capacity in any way compared to the teacher, which runs contrary to the prevailing theoretical belief that a mismatch in model capacity is a central mechanism to weak-to-strong generalization.
\end{abstract}

\thispagestyle{empty}
\setcounter{page}{0}

\newpage
\section{Introduction}\label{sec:introduction}

We consider the fascinating but mysterious phenomenon of \emph{weak-to-strong} generalization in machine learning.
Typically, to improve the performance of a student model, we expect to require guidance either from a stronger teacher model, or from the ground truth.
Yet empirically, it has been observed that a student model can \emph{surpass} its teacher and its own prior capabilities.\footnote{We note that in the theoretical literature, the term ``weak-to-strong'' generalization sometimes refers to a slightly weaker notion than the one considered in this paper, where the student converges to a model which is stronger than the teacher, but may be weaker than the original student, see e.g.~\cite{charikar2024quantifying,medvedev2025weak}.
However, weak-to-strong generalization for us will mean that the student's performance \emph{improves upon itself}. Note that this is what was observed empirically in post-training frontier models by~\cite{burns2023weak}.}
This occurs even though the teacher model is the only source of feedback for the student, and even when the student is blindly following the advice of the teacher.
This flies in the face of conventional wisdom in machine learning, which would suggest that the weaker teacher model should drag down the performance of the student.

While such phenomena have been anecdotally documented since the dawn of machine learning~\cite{samuel1959some}, it has recently gained traction in the context of post-training of top-of-the-line frontier language models.
In a striking experiment,~\cite{burns2023weak} demonstrated that if you post-train a strong pre-trained model such as GPT-4 with a weaker teacher such as GPT-2 using naive supervised finetuning, then for many tasks, the performance of the final post-trained model exceeds both the performance of the base pre-trained model as well as that of the teacher.
In particular, this implies is that there is a point during the training process at which the student model has better performance than the teacher (i.e., it is a stronger model than the teacher), and yet, it improves solely by using feedback from the weaker teacher model.
They go on to argue that this effect can enable us to bootstrap the performance of state-of-the-art models using smaller, weaker models, and further speculate that in a future where AI systems achieve super-human capabilities, this suggests that human feedback can still be used to guide these systems, despite the fact that the human teachers are weaker than the learned AI system.

\begin{figure}[ht]
\centering
\begin{minipage}{0.6\textwidth}
    \includegraphics[width=\linewidth]{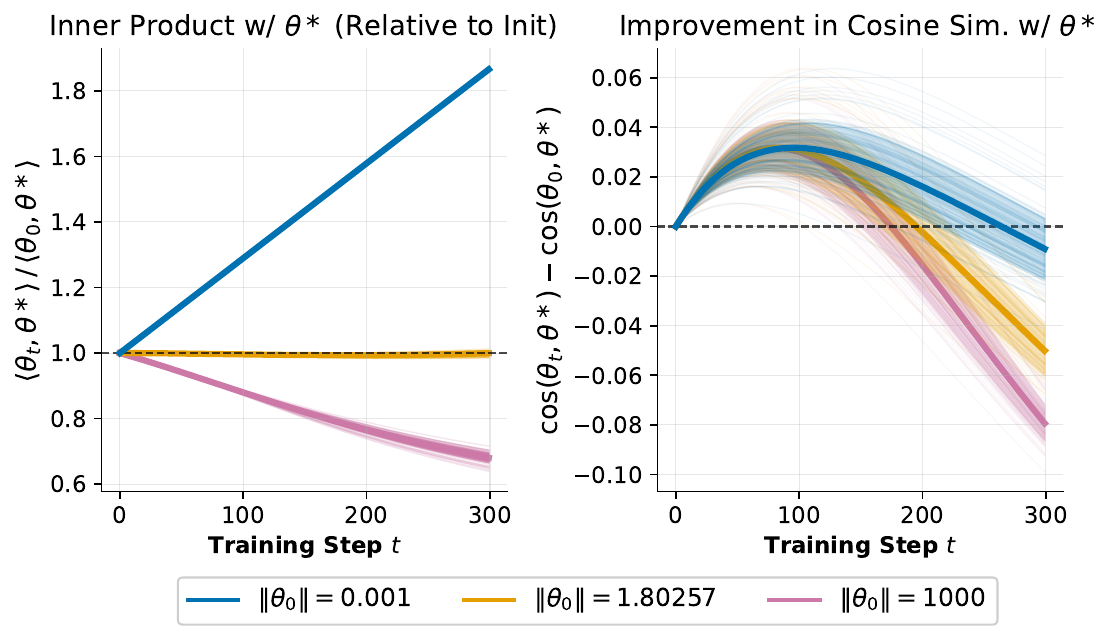}
\end{minipage}%
\hfill
\begin{minipage}{0.385\textwidth}
    \refstepcounter{figure}%
    {\small Figure \thefigure: We train an 80\%-accurate student linear model $\theta_0$ on pseudo-labels from a randomly drawn strictly weaker teacher, from the same family (70\% acc.). Over 100 teacher draws (thin lines), we always observe the student improve despite the teacher's bad advice. Learning dynamics depend on the initial norm $\|\theta_0\|$: notably, weak-to-strong gains occur even when the teacher steers the student away from the true model $\theta^*$ (purple traces)! Thick lines and shading show mean $\pm$~std.}
    \label{fig:teaser}
\end{minipage}
\end{figure}

While weak-to-strong generalization has been well-documented empirically, our theoretical understanding of why and how it occurs is still developing.
To oversimplify the state of affairs (see Section~\ref{sec:related-work} for a more careful discussion), the prevailing hypothesis appears to be that weak-to-strong generalization primarily occurs when the student model has more model capacity, and thus expressive power, than the teacher model (see e.g.~\cite{charikar2024quantifying,mulgund2025relating,lang2024theoretical,medvedev2025weak,wu2024provable,dong2025discrepancies}), or at the very least, has different expressive power than the teacher, such as in classic settings such as co-training~\cite{blum1998combining,balcan2004co}.
But is a difference in model capacity truly the only way that weak-to-strong generalization can occur?
Or in other words:
\begin{center}
    {\it Can weak-to-strong generalization occur even in settings with absolutely no mismatch in expressive power between the teacher and student?}
\end{center}
In this work, we show that, perhaps surprisingly, the answer to this question is yes.
We demonstrate that, under mild assumptions, weak-to-strong generalization can occur even in the extremely classical and well-studied setting of logistic regression for Gaussian data.
In fact, we show that if the student and teacher are sampled randomly from a natural ensemble of models, then weak-to-strong generalization is \emph{almost inevitable} (\cref{fig:teaser})!
We also show that our results extend to a large class of other data distributions satisfying a natural condition called \emph{approximate ellipticity}.
Concretely, we consider the following setup, which is the natural instantiation of supervised finetuning (SFT) for logistic regression:
\begin{definition}[SFT for logistic regression]
\label{def:sft}
    Let $\mu$ be a data distribution over $\R^d$, and let $\theta^* \in \R^d$ be the ground truth classifier.
    For any linear classifier $\phi \in \R^d$, the $0/1$ loss of the classifier is defined as 
    \[
        \Lzo (\phi) := \Pr_{\bm{x} \sim \mu} [\on{sgn} (\phi^\top x) \neq \on{sgn} ((\theta^*)^\top x)] \; .
    \]

    Let $\theta_0 \in \R^d$ be a student model, and let $\psi \in \R^d$ be a teacher model.
    Let $\beta > 0$ be a temperature parameter, $T$ be the total number of steps of post-training, and let $B$ be the batch-size of the update.
    Then, supervised finetuning of $\theta_0$ with $\psi$ is the stochastic process defined by $\hat\theta_0 = \theta_0$, and $\hat\theta_{t+1} = \hat\theta_t - \eta g_t$\, for $t \in \{0, \ldots, T-1\}$, where 
    \begin{align*}
        g_t := \frac{1}{B} \sum_1^B \nabla \loss_\beta (\hat\theta_t, \bm{x}_i^{(t)}, \bm{y}_i^{(t)}),
    \end{align*}
    where $\{\bm{x}_i^{(t)}\}_{t,i} \sim_{\on{iid}} \mu$, and $\bm{y}_i^{(t)} = \ind(\psi^\top \bm{x}_i^{(t)} \geq 0)$ are labels from the teacher, and $\loss_\beta$ is the logistic loss with temperature $\beta$ (see \Cref{def:logistic-loss}).
\end{definition}
\noindent We note that since throughout this process, we only see the signs of $\psi^\top \bm{x}$ and ${\theta^*}^\top \bm{x}$, so we take $\psi$ and $\theta^*$ to be unit vectors without loss of generality throughout this paper.

With this setup, a high level statement of the result that we show can be summarized as follows.
Here and throughout the paper, we will always let $\| \cdot \|$ denote the $\ell_2$ norm, and for two vectors $u, v$, we let $\cos (u, v) = \langle u, v \rangle / (\| u \| \| v \|)$ denote their cosine similarity.
\begin{theorem}[informal, see Theorem~\ref{thm:stochastic-analysis}]
    Let $\mu$ be a sufficiently nice data distribution over $\R^d$.
    Suppose the following conditions hold:
    \vspace{-3pt}
    \begin{enumerate}
        \item \textbf{The teacher must be better than random guessing:} we have that $\Lzo (\psi) < c_1$, for some universal constant $c_1 \leq 1/2$.
            \vspace{-5pt}
        \item \textbf{The student and teacher are poorly correlated orthogonal to $\theta^*$:} we have that 
        \[
        |\cos (\psi, \on{Proj}_{(\theta^*)^\perp} \theta_0)| \leq c_2 \; ,
        \]
        for some universal constant $c_2$, where for any vectors $u, v \in \R^d$, we let $\on{Proj}_{u^\perp} v$ be the projection of $v$ onto the subspace orthogonal to $u$.
    \end{enumerate}
    Then, there exists $\delta = \on{poly} (c_1, c_2)$ so that we have that with high probability, SFT on $\theta_0$ with $\psi$ as in Definition~\ref{def:sft} with constant step-size will yield
    \[
    \Lzo (\theta_0) - \Lzo (\theta_T) \geq \Omega\left(\frac{\rho}{\|\theta_0\| + 1}\right)^2 \; ,
    \]
    as long as $T = \on{poly}(\delta^{-1}, \| \theta_0 \|)$ and $B = d \cdot \on{poly}(\delta^{-1}, \|\theta_0\|)$.
\end{theorem}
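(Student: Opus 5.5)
We would reduce to the population dynamics in the two-dimensional picture and then control the stochastic error. For Gaussian (or approximately elliptical) $\mu$, the $0/1$ loss of $\phi$ is $\angle(\phi,\theta^*)/\pi$, so it suffices to show that the angle between $\hat\theta_t$ and $\theta^*$ decreases by a definite amount. Write $\hat\theta_t = a_t\theta^* + w_t$ with $w_t \perp \theta^*$, and decompose the teacher as $\psi = \cos\alpha\,\theta^* + \sin\alpha\, u$ with $u \perp \theta^*$ a unit vector; condition 1 gives $\alpha < \pi c_1'$ bounded away from $\pi/2$. The angle is $\arctan(\|w_t\|/a_t)$, so our goal is to show that $\|w_t\|/a_t$ shrinks.

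\textbf{Step 1 (population gradient).} We compute $\E[\nabla\loss_\beta(\theta,\bm x,\bm y)] = \E[(\sigma(\beta\theta^\top \bm x) - \ind(\psi^\top \bm x\ge 0))\,\bm x]$. For Gaussian data this lies in $\on{span}(\theta,\psi)$ and can be written explicitly via Stein's lemma:
\[
\E[\sigma(\beta\theta^\top\bm x)\bm x] = \gamma(\|\theta\|)\,\theta/\|\theta\|, \qquad \E[\ind(\psi^\top\bm x\ge0)\bm x] = \psi/\sqrt{2\pi},
\]
where $\gamma(r) = \beta\E[\sigma'(\beta r g)]$. Gradient descent therefore moves $\theta$ toward $\psi/\sqrt{2\pi}$ and shrinks it radially. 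The key observation is that the component along $u$ is pulled toward $\sin\alpha/\sqrt{2\pi}$, while the components of $w_t$ orthogonal to $u$ only decay through the shrinkage term $-\eta\gamma\theta/\|\theta\|$. Meanwhile $a_t$ is pushed toward $\cos\alpha/\sqrt{2\pi}>0$. Condition 2 (small $|\cos(\psi,w_0)|$) ensures that most of $w_0$ is orthogonal to $u$ and hence gets contracted, whereas the component along $\theta^*$ receives the positive push $\eta\cos\alpha/\sqrt{2\pi}$. We then track the two-dimensional ODE in the coordinates $(a_t,\|w_t\|)$ and show that in $T=\on{poly}(\delta^{-1},\|\theta_0\|)$ steps the tangent of the angle drops by an amount of order $\rho/(\|\theta_0\|+1)$. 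This amount is essentially the additive change of $a$ relative to the norm scale, which yields an angular (and hence loss) improvement of order $(\rho/(\|\theta_0\|+1))^2$ after accounting for the fact that $\arctan$ is locally quadratic or linear. The analysis must stop before the iterate converges to $\psi$, which is worse than $\theta_0$; so we pick the stopping time $T$ at the point where the improvement is positive but the iterate has not yet rotated too far toward $u$.

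\textbf{Step 2 (stochastic error).} Each $g_t$ is an average of $B$ i.i.d.\ vectors that are sub-Gaussian with parameter $O(1)$. Hence $\|g_t-\E g_t\|\lesssim\sqrt{d/B}$ with high probability, and a union bound over $T$ steps makes the accumulated error $\eta T\sqrt{d/B}$ small compared to $\delta$ when $B=d\cdot\on{poly}(\delta^{-1},\|\theta_0\|)$. We use a standard discrete Grönwall argument to compare $\hat\theta_t$ with the deterministic iterates, using that the population drift is Lipschitz in $\theta$ on the bounded region the trajectory stays in. Approximate ellipticity would replace the exact Stein identities by identities holding up to small error terms.

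\textbf{Main obstacle.} The hardest part is Step 1: giving a quantitative, monotone-angle argument for the nonlinear 2D dynamics uniformly over $\|\theta_0\|$. The difficulty is that $\gamma(r)$ decays like $1/r$, which governs the $\|\theta_0\|$ dependence. The approach we would try first is to study the derivative of $\|w_t\|^2/a_t^2$ directly. We would show it is negative by at least $\Omega(\rho/(\|\theta_0\|+1))$ over an initial window, using $c_1$ to lower bound $\cos\alpha$ and $c_2$ to upper bound the pull along $u$.
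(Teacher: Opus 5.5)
\paragraph{Main gap: Step~1 is only a plan, and its bookkeeping is wrong.}
Your architecture matches the paper's: reduce $\Lzo$ to the cosine with $\theta^*$, analyze the population dynamics, then compare SGD to the deterministic iterates via Gr\"onwall. You also identify the right mechanism: the radial shrinkage acts on all of $\theta$, while the teacher's pull feeds $a_t$ but is, by condition~2, nearly orthogonal to $w_t$. But Step~1 carries all the weight.

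Your claim that the tangent of the angle drops by order $\rho/(\|\theta_0\|+1)$ cannot hold in general. When the initial first-order advantage is small, the iterate passes the point of maximal cosine along its path after a time proportional to that advantage, so the attainable gain is only quadratic in it. The square in the statement reflects exactly this. It does not come from the curvature of $\arctan$: its derivative never vanishes, so converting a tangent drop into an angle drop only costs a constant factor.

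What is missing is a quantitative second-order estimate that fixes the length of your ``initial window''. The paper supplies it in three steps.
\begin{enumerate}
\item Lemma~\ref{lem:first-order-lower-bound} gives $\frac{d}{dt}\Psi(\Theta_t)\ge\beta\rho/(\lambdamax^{1/2}\|\theta_0\|)$ at $t=0$, using ellipticity only at $\theta_0$.
\item On $[0,\tau_1]$, a Gr\"onwall drift bound keeps $\|\Theta_s\|\ge\gamma\|\theta_0\|$ and $\|\nabla\loss(\Theta_s)\|\le\|\nabla\loss(\theta_0)\|+\gamma\|\theta_0\|$. Then $\|\nabla^2\Psi(\theta)\|\le\frac{2}{\sqrt3}\lambdamax/\|\theta\|_\Sigma^2$ and $\nabla^2\loss\preceq\frac{\beta^2}{4}\Sigma$ give a uniform bound $K$ on $|D_s^2(\Psi\circ\Theta_s)|$.
\item Maximizing $\frac{\beta\rho}{\lambdamax^{1/2}\|\theta_0\|}t-\frac{K}{2}t^2$, attained at $t\propto\rho$ (subject to the cutoff $\tau_1$), gives the quadratic gain $\alpha_*$.
\end{enumerate}

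\paragraph{Other points that need repair.}
\begin{itemize}
\item \emph{Stein's lemma.} It gives $\E[\sigma(\beta\theta^\top\bm{x})\bm{x}]=\beta\,\E[\sigma'(\beta\|\theta\|g)]\,\theta$, not this divided by $\|\theta\|$. The radial coefficient is $m(r)=r\gamma(r)=\E[\sigma(\beta rg)g]$, which is bounded and increases to $1/\sqrt{2\pi}$. So the $1/\|\theta_0\|$ dependence does not come from decay of the shrinkage. It comes from $\|\nabla\Psi(\theta)\|\lesssim1/\|\theta\|$: a step of fixed size rotates a long vector less.
\item \emph{Coordinates.} $(a_t,\|w_t\|)$ is not a closed system, since $\frac{d}{dt}\|w_t\|$ depends on $\langle w_t,u\rangle$. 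The closed two-dimensional system lives in $\on{span}\{\theta_0,\psi\}$.
\item \emph{A possible clean Gaussian route.} Radial shrinkage does not change direction. So for exactly Gaussian data the shrinkage terms cancel identically in $\frac{d}{dt}\log(\|w_t\|^2/a_t^2)$. Moreover, $\Theta_t/\|\Theta_t\|$ moves monotonically along the great circle from $\theta_0/\|\theta_0\|$ toward $\psi$, on which $\cos(\cdot,\theta^*)$ is a sinusoid in arc length. Pursuing this would give a Gaussian-only proof of Step~1, with a gain quadratic in the initial slope along that circle and $\|\theta_0\|$ entering only through the time horizon.
\item \emph{Non-Gaussian $\mu$.} Approximate Stein identities along the trajectory are not available. $\eps$-ellipticity is assumed only along $\on{span}\{\psi,\theta_0\}$, and the iterates leave that span. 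This is why the paper uses ellipticity only at $t=0$ and only distribution-free smoothness and Hessian bounds afterwards.
\item \emph{Step~2.} Your sub-Gaussian increments plus a union bound (instead of the paper's $L^2$ Doob maximal inequality, which needs only second moments) are fine for Gaussian data. You still need to add the Euler comparison between the ODE and discrete gradient descent. You also need to keep $\|\hat\theta_T\|$ bounded away from zero so that the cosine is Lipschitz there.
\end{itemize}
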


We pause here to interpret the theorem and the important conditions therein.

\paragraph{Regularity conditions on $\mu$.} 
We go into more detail about the conditions we require on the distribution $\mu$ in Section~\ref{sec:overview}, but the main assumption on $\mu$ we will require is a notion of approximate sphericity or ellipticity which generalizes previously studied notions of ellipticity for high-dimensional distributions, as well as a mild smoothness condition on the pdf of $\mu$.
These assumptions are satisfied by Gaussians, as well as any spherically symmetric or approximately spherically symmetric distribution with reasonable tails.
For simplicity, on first reading, the reader is encouraged to think of the special case where $\mu$ is the standard normal Gaussian.

\paragraph{A not terrible teacher} The next condition we need is that the teacher is not actively harmful, but otherwise, our assumptions on the teacher's power is very mild: we just need that it achieves $0/1$ loss better than random guessing  by only some small constant.
Note that in the setting of Gaussian data, this just corresponds to the teacher having positive cosine similarity with the ground truth $\theta^*$.

\paragraph{Non-aligned students and teachers} The final condition we require is that the student and the teacher are poorly correlated in the directions orthogonal to the true direction $\theta^*$.
If we think of the directions orthogonal to $\theta^*$ as noise directions, then a natural interpretation of this condition is that the sources of error for the teacher and the student models are not too similar.
This is a natural condition, and one that is easily seen to be necessary for this setting.
For instance, if the student and teacher are random in the directions orthogonal to $\theta^*$, then this quantity goes to $0$ as $d \to \infty$.
Moreover, we also empirically demonstrate that if we train two models from random initialization on overlapping data, then the resulting models end up being quite uncorrelated in practice (see Section~\ref{sec:exp}).

\paragraph{The mechanism for weak-to-strong generalization for logistic regression.}
The theorem then states that under these conditions, and as long as the student is initially neither too good nor too terrible, then SFT with a weaker teacher can provably improve the performance of the student.
We note that the improvement that can be achieved is limited: one cannot typically hope to achieve vanishingly small 0/1 error with this setup, and in our simulations, we indeed see that SFT with a weak teacher improves performance by a small but non-trivial constant (see Figure~\ref{fig:teaser} or Section~\ref{sec:exp}).

Note that unlike most previous setups, there is no assumption about the relative expressive power between the student and the teacher!
Indeed, in our setting, both the student and teacher models are fairly unstructured, and so is the data distribution.
So why does weak-to-strong generalization occur?

To explain this phenomenon, consider the simplified setting where the data distribution is the standard normal Gaussian, i.e. $\mu = \CN(0, I)$.
In this case, improving the zero-one loss of a model corresponds to simply increasing its cosine similarity with $\theta^*$.
Now, suppose we move a student model $\theta$ infinitesimally along a line $\ell (t) = \theta + t v$ for some vector $v$.
A straightforward calculation (see e.g.~\cite{geng2025delta}, Equation (23)) shows that if we let $f(t) = \cos (\ell(t), \theta^*)$ denote change in cosine similarity with $\theta^*$ as we vary $\theta$ along this line, then
\[
\| \theta \| \cdot f'(0) = \underbrace{\vphantom{\frac{\cos (\theta, \theta^*)}{\| \theta \|}}\left( 1 - \cos^2 (\theta, \theta^*) \right) \cdot \left( v^\top \theta^* \right)}_{A(v)}  - \underbrace{ \frac{\cos (\theta, \theta^*)}{\| \theta \|} \cdot \left\langle \on{Proj}_{(\theta^*)^\perp} \theta, \on{Proj}_{(\theta^*)^\perp} v \right\rangle}_{B(v)} \; .
\]
Here the terms $A$ and $B$ have natural interpretations: $A$ represents the change in the cosine similarity due to the change in $\theta$ in the direction of $\theta^*$, and $B$ represents the change in the cosine similarity due to the movement of $\theta$ in the directions orthogonal to $\theta^*$.
In particular, for us to make progress as we move infinitesimally along this line, we need that $f'(0) > 0$.

Now, consider what happens when $v$ is the update direction induced by SFT.
For simplicity, we take the temperature parameter $\beta = 1$, and we take gradient steps with respect to the population gradient, as opposed to sample gradients.
In this case, the SFT update direction $v$ (i.e., negative gradient) for a student model $\theta$ with a teacher $\psi$ is given by
\begin{align*}
v &= \E_{\bm{x}} \left[ \left( \ind (\psi^\top x \geq 0) - \sigma (\theta^\top \bm{x}) \right) \bm{x} \right] = \psi - m (\| \theta \|) \cdot \frac{\theta}{\| \theta \|} \; ,
\end{align*}
where $m(x)$ is some bounded function so that $m (x) \geq 0$ for $x \geq 0$.
The exact form of $m$ can easily be derived but is mostly irrelevant for this discussion (of course, we will need to handle it rigorously later on).

For this choice of $v$, we have that
\[
A(v) = \left( 1 - \cos^2 (\theta, \theta^* )\right) \cdot \left( \cos (\psi, \theta^*) - m(\| \theta \| ) \cos (\theta, \theta^*) \right)  \; .
\]
If the norm of $\theta$ is sufficiently small, it turns out that this is positive, and so we are moving in the direction of $\theta^*$, so it is perhaps not surprising that we can make progress in this case (e.g., the blue lines in Figure~\ref{fig:teaser}).
But, if the norm of $\theta$ is at least some small constant, and if the teacher is weaker than the student, i.e. the teacher's cosine similarity with $\theta^*$ is smaller than the student's cosine similarity, it can easily be the case that $A(v) < 0$, that is, we are actively decreasing our correlation with $\theta^*$.
Yet we claim that even here, we still make progress with SFT!

This is because the second term $B(v)$, i.e. the change in cosine similarity due to moving $\theta$ in the directions orthogonal to $\theta^*$, turns out to help us in the setting.
Note that this can only happen if the SFT update is decreasing the norm of $\theta$ in the directions orthogonal to $\theta^*$.
In fact, for the Gaussian setting, it more or less forces the overall gradient to always be positive, under our assumptions!
If $\psi$ and $\theta$ are more or less orthogonal to each other in the subspace orthogonal to $\theta^*$, as we assume, then
\begin{align*}
B(v) &\approx - m(\| \theta \|) \cdot \cos (\theta, \theta^*) \cdot \frac{\| \on{Proj}_{(\theta^*)^\perp} \theta \|^2}{\| \theta \|^2} \\
&= - m(\| \theta \|) \cdot \cos (\theta, \theta^*) \cdot \left( 1 - \cos^2 (\theta, \theta^* )\right) \; ,
\end{align*}
and so the overall expression for $f'(0)$ is
\begin{align*}
\| \theta \| \cdot f'(0) &\approx A(v) + m(\| \theta \|) \cdot \cos (\theta, \theta^*) \cdot \left( 1 - \cos^2 (\theta, \theta^* )\right) \\
&= \left( 1 - \cos^2 (\theta, \theta^* )\right)  \cos (\psi, \theta^*) > 0 \; .
\end{align*}
Thus, because the norm of $\theta$ is shrinking sufficiently quickly in all directions orthogonal to $\theta^*$ (i.e. all the noise directions), our cosine similarity with $\theta^*$ actually improves, even though the update direction has negative correlation with $\theta^*$! See the purple lines in Figure~\ref{fig:teaser} for empirical illustration.
We can summarize this mechanism with the following tongue-in-cheek aphorism:
\begin{center}
    {\it Given feedback from a bad advisor, a good student will unlearn their own mistakes faster than they will be misled by the advisor's stupidity.}
\end{center}
This will, of course, be of great comfort to many professors throughout academia.

Eventually, this effect must disappear, as eventually, the SFT procedure will converge to the global minimum, which is the labels given by the weak teacher, but not before we have made some non-trivial progress at some intermediate iteration.
In the full paper, we formally prove this effect occurs in a much more general setting beyond Gaussian inputs, but for morally similar reasons.
To show the full theorem, we then demonstrate that this effect still occurs even when we discretize the trajectory, and also even if we use stochastic gradients.
We go over some of these calculations in more detail in Section~\ref{sec:overview}, but we defer the full technical details to the supplementary material.

\paragraph{The role of the student's confidence.} In our theorem statement, our final improvement in the $0/1$-loss, as well as the batch-size and number of steps of SGD, depends inversely on $\| \theta_0 \|$.
Intuitively, in logistic regression, the norm of a model corresponds to its relative confidence: a larger norm model has higher confidence, whereas a lower norm model tends to yield lower confidence scores.
So our results would suggest that the relative gain in SFT is smaller for more confident models.
However, we conjecture that the dependence on $\| \theta_0 \|$ is actually unnecessary, and is a mostly relic of our techniques.
In our simulations, we find that in fact if you run SFT with sufficient hyperparameters, then large norm students make essentially the same amount of progress as small norm students.
Showing that this happens however requires more nuanced understanding of the long time-horizon evolution of SFT, which is beyond the scope of this paper.

\subsection{Related Work}
\label{sec:related-work}

There is by now a vast empirical literature on techniques for post-training frontier models, including many popular techniques such as supervised finetuning, alongside more complex techniques such as reinforcement learning~\cite{ouyang2022training,bai2022training} and various preference optimization and contrastive learning techniques~\cite{schulman2017proximal,dong2023raft,rafailov2023direct}, just to name a few.
We defer the interested reader to \emph{e.g.}~\cite{kumar2025llm} for a more extensive review of this large and ever-shifting literature.
The most relevant paper from this literature is arguably the aforementioned paper of~\cite{burns2023weak} which highlighted the phenomenon of weak-to-strong generalization in a real-world frontier model.
Subsequently, several papers have explored how to exploit this phenomenon for post-training strong models, see e.g.~\cite{li2024superfiltering,shin2024weak,somerstep2024transfer,bansal2024smaller,yang2024weak,sun2024easy,ji2024aligner,yang2024super,guo2024improving,lang2025debate,geng2025delta,nie2025weak,ye2025weak}.

On the more theoretical side, there has been a flurry of recent work attempting to understand the phenomenon of weak-to-strong generalization.
A large literature has begun to focus on the the hypothesis that differences in model capacity play a crucial role, under a variety of different models, see e.g.~\cite{mobahi2020self,lang2024theoretical,shin2024weak,charikar2024quantifying,wu2024provable,medvedev2025weak,mulgund2025relating,oh2025linear,yao2025capabilities,wilop2025predicting,dong2025discrepancies,yao2025weak}.
These results often require that the student and the teacher have some measure of disagreement, see e.g.~\cite{charikar2024quantifying,mulgund2025relating,lang2024theoretical,wu2024provable,dong2025discrepancies}.
But in these prior works, these properties are typically exploited in conjunction with a difference in model capacity.
Our results suggest that teacher/student disagreement may already suffice by itself to explain weak-to-strong generalization. Other explanations have been proposed as well. The work of ~\cite{zhang2024transcendence} posits that temperature may play a key role.
The work of~\cite{lang2024theoretical} points toward a general pseudolabel-correction phenomenon;
they show that a sufficiently robust student model can correct the mistakes of a weaker teacher, under some expansion condition on the teacher errors.
We note that several of these results~\cite{lang2024theoretical,yao2025capabilities,yao2025weak} do not consider the training dynamics, proving point-wise error comparisons or assuming that we can find an optimal solution to the SFT process.
In contrast, our results crucially require that we do not find a globally optimal solution to the finetuning process (and instead perform some kind of early stopping).

Finally, we remark that this setting has some resemblance to the classical literature on co-training, see e.g.~\cite{blum1998combining,balcan2004co}, where several weak models that predict using different features can be boosted to form a stronger model.
This can be viewed as a form of weak-to-strong generalization, where we treat one of the models as a student model, and the others as teacher models.
As in our setting, a key mechanism which enables this boosting is a notion of uncorrelated errors.
However, in these settings, it is typically assumed that the models have different views of the data, i.e. they act on different features of the data.
In other words, the models have different expressive powers.
In contrast, we seek to understand the setting in which the teacher and student models use exactly the same features of the data.

\section{Technical Overview}
\label{sec:overview}

\subsection{Preliminaries}

For a positive-definite matrix $M$, we let $\langle x, y\rangle_M$ denote the inner product given by $x^\top M y$ and $\|x\|_M := \sqrt{\langle x, x\rangle_M}$ denote the induced norm. For a matrix $A$, we write the induced operator norm on this space $\|A\|_M$. The associated cosine similarity is denoted $\cos_M(x, y) := \langle x, y\rangle_M/\|x\|_M\|y\|_M$. For a subspace $E$ of $\R^d$, we denote by $\on{Proj}_E^M$ the orthogonal projection onto $E$ with respect to $\langle \cdot, \cdot\rangle_M$. For brevity, we sometimes drop the subscript/superscript when using Euclidean inner product ($M = I$). Throughout, when we write $\lambdamax$, $\lambdamin$, we refer to the top and bottom eigenvalues of the covariance $\Sigma$, which will be clear from context.

\begin{definition}[Logistic loss]\label{def:logistic-loss}
    We denote by $\sigma_\beta$ the logistic function of inverse temperature $\beta > 0$: \begin{align*}
        \sigma_\beta(x) := \frac{1}{1+e^{-\beta x}}.
    \end{align*}
    For $\theta, x \in \R^d$ and $y \in \{0,1\}$, we define the associated logistic loss: \[\loss_\beta(\theta; x, y):= -\log p_\theta(y|x), \quad \text{where} \quad  p_\theta(1|x) = \sigma_\beta(\theta^\top x).
    \]
\end{definition}

\noindent For $\bm x \sim \mu$ and $\bm y = \ind[\psi^\top \bm{x} \geq 0]$, \begin{align*}
    \nabla_\theta \loss_\beta(\theta;x, y) = \beta \sigma_\beta(\theta^\top x)x - \beta yx, \quad \quad \nabla \loss_\beta(\theta) = \beta \cdot \E\left[ \sigma_\beta(\theta^\top \bm{x})\bm{x} - \ind[\psi^\top {\bm x} \geq 0] {\bm x} \right].
\end{align*}
Note that $\sigma_\beta' = \beta \cdot \sigma_\beta(1-\sigma_\beta) \leq \beta/4$, so $\sigma_\beta$ is $\beta/4$-Lipschitz. As a consequence, the population loss is $\beta^2 \lambdamax/4$-smooth. Taking the Hessian, \begin{align*}
    \nabla^2 \loss_\beta(\theta) = \beta \cdot \E\Big[ \sigma_\beta'(\theta^\top \bm{x}) \bm{x}\bm{x}^\top \Big] \preceq \frac{\beta^2}{4} \Sigma.
\end{align*}

\subsection{Approximate Ellipticity}\label{sec:approximate-ellipticity}

It is known that the low-dimensional projections of high-dimensional distributions, under mild assumptions, are approximately Gaussian \cite{diaconis1984asymptotics,duembgen2011low}. Starting with the seminal work of \cite{hall1993almost}, a closely related line of inquiry revealed that such low-dimensional projections also possess strong characteristics of elliptical symmetry \cite{leeb2013conditional,steinberger2018conditional}. Inspired by these observations, our analysis makes use of a an approximate ellipticity notion with tight connections to this literature. We discuss these connections in \Cref{sec:approximate-ellipticity-assumption}.

\begin{definition}\label{def:ellipticity}
    A random variable $\bm{x}$ over $\R^d$ with mean zero and positive-definite covariance $\Sigma$ is said to be $\eps$-elliptical along a (nontrivial) subspace $E$ if for all $v \in E \setminus \{0\}$ and $u \in \R^d$ such that $u^\top \Sigma v = 0$, it holds that \begin{align*}
        \Big| \E\left[ u^\top \bm{x} \, | \, v^\top \bm{x} \right] \Big| \leq \eps \cdot \|u\|_{\Sigma} \quad a.s.
    \end{align*}
    When $E = \R^d$, we just say that $\bm{x}$ is $\eps$-elliptical. 
\end{definition}

\noindent Note that for $v \in E \setminus \{0\}$ and any $u$, we can also decompose $u$ into its orthogonal projections with respect to the inner product induced by $\Sigma$. It is therefore equivalent to require that for all $u$, \begin{align*}
    \left| \E\left[ u^\top \bm{x} \, | \, v^\top \bm{x} \right] - \frac{u^\top \Sigma v}{v^\top \Sigma v}\cdot v^\top \bm{x} \right| \leq \eps \cdot \|u\|_{\Sigma} \quad a.s.
\end{align*}
One also observes that the property is preserved under full-rank transformations, albeit with a change of the relevant space $E$: if $\bm{x}$ is $\eps$-elliptical along $E$, then $T\bm{x}$ is $\eps$-elliptical along $T^{-\top} E$. As a special case, transforming by $\Sigma^{-1/2}$ gives $\eps$-ellipticity along $\Sigma^{1/2} E$ with covariance $I$. The isotropic setting will be especially useful; when $\Sigma = I$ in \Cref{def:ellipticity}, we will say that $\bm{x}$ is $\eps$-spherical along $E$.

Using the characterization of \cite{eaton1986characterization}, one notes that if $\on{cov}\bm{x}=I$, $E = \R^d$, and $\eps = 0$, then $\bm{x}$ is spherically symmetric in the sense that its law is invariant under unitary transformations. The following computation follows naturally.

\begin{lemma}
    \torestate{\label{lem:arccos-loss}
    Let $\bm{y}$ be a $0$-elliptical random vector on $\R^k$ with covariance $\Sigma$ and Lebesgue density. The for nonzero $\phi, \psi \in \R^d$: \begin{align*}
        \Pr[\, \on{sgn} \phi^\top \bm{y} \neq \on{sgn} \psi^\top \bm{y} \,] = \frac{1}{\pi} \arccos \circ \cos_\Sigma(\phi, \psi).
    \end{align*}
    }
\end{lemma}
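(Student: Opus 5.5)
Write $\bm{z} := \Sigma^{-1/2}\bm{y}$. By the remark after \Cref{def:ellipticity}, $\bm{z}$ is $0$-spherical on all of $\R^k$ with identity covariance, and it still has a Lebesgue density. For $\phi,\psi$ set $a := \Sigma^{1/2}\phi$ and $b := \Sigma^{1/2}\psi$. Then $\phi^\top\bm{y} = a^\top\bm{z}$ and $\psi^\top\bm{y} = b^\top\bm{z}$. We also have $\langle a,b\rangle = \langle\phi,\psi\rangle_\Sigma$ and $\|a\| = \|\phi\|_\Sigma$, so $\cos(a,b) = \cos_\Sigma(\phi,\psi)$. This reduces the lemma to the isotropic claim
\[
\Pr[\on{sgn} a^\top\bm{z} \neq \on{sgn} b^\top\bm{z}] = \tfrac{1}{\pi}\arccos\cos(a,b)
\]
for a spherically symmetric $\bm{z}$. (Here $d=k$; the statement's ``$\R^d$'' is a typo.)

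Next, invoke the characterization of \cite{eaton1986characterization} quoted above: $0$-sphericity with identity covariance means the law of $\bm{z}$ is invariant under all orthogonal maps. Since $\bm{z}$ has a density, $\Pr[\bm{z}=0]=0$, so we may write $\bm{z} = \|\bm{z}\|\,\bm{u}$ with $\bm{u} = \bm{z}/\|\bm{z}\|$. Orthogonal invariance makes the law of $\bm{u}$ a rotation-invariant probability measure on $S^{k-1}$, hence uniform by uniqueness of Haar measure. Scaling by $\|\bm{z}\|>0$ does not change signs, so the event depends only on $\bm{u}$. Moreover, the hyperplanes $a^\perp$ and $b^\perp$ are null sets for $\bm{z}$, so ties in the sign (the value $0$) occur with probability zero and the $\on{sgn}$ convention is irrelevant.

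Now reduce to two dimensions. Let $P$ be the orthogonal projection onto $\on{span}(a,b)$; if $a,b$ are parallel the claim is trivial, since the angle is $0$ or $\pi$ and the probability is $0$ or $1$ almost surely. Otherwise the signs depend only on $P\bm{z}$. By rotation invariance within that plane, the direction of $P\bm{z}$, which is nonzero almost surely, is uniform on the circle $S^1$. The event $\{\on{sgn}\langle a,w\rangle \neq \on{sgn}\langle b,w\rangle\}$ for $w$ on the circle is the union of two antipodal arcs, each of angular length equal to the angle $\alpha = \arccos\cos(a,b)$ between $a$ and $b$. These are the directions lying between the perpendicular lines $a^\perp$ and $b^\perp$. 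The probability is therefore $2\alpha/(2\pi) = \alpha/\pi$.

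The main obstacle is making rigorous the step from $0$-ellipticity, stated only in terms of conditional expectations, to full orthogonal invariance. I would lean on the cited Eaton characterization rather than reprove it. I would also double-check that the uniformity of the projected direction in the plane follows from invariance under rotations of that plane that fix its orthogonal complement. Everything else is elementary geometry.
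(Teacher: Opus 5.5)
Your proof is correct and follows essentially the same route as the paper. Both whiten by $\Sigma^{-1/2}$ so that $\cos(\Sigma^{1/2}\phi,\Sigma^{1/2}\psi)=\cos_\Sigma(\phi,\psi)$, both invoke Eaton's characterization to get orthogonal invariance, and both reduce the probability to the fraction of the sphere on which the two signs disagree. The differences are cosmetic: the paper integrates in polar coordinates against a radial density, while you use uniqueness of the rotation-invariant law of $\bm{z}/\|\bm{z}\|$, which avoids having to pick a radial version of the density, and you spell out the two-arc computation giving $\alpha/\pi$ that the paper leaves implicit.
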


While we view approximate sphericity on small subspaces as a relatively mild assumption (see \Cref{sec:approximate-ellipticity-assumption}), one can use this property to obtain a similar characterization of the zero-one loss. To do so, we go through the Fourier transform. For sufficiently nice $f : \R^d \to \R$: \begin{align*}
    \hat{f}(\xi) = \CF[f](\xi) := \int_{\R^d} f(x) \, e^{-i2\pi \langle x, \xi\rangle} dx.
\end{align*}
Here, we adopt the unitary convention.

\begin{proposition}
    \torestate{\label{prop:arccos-loss-approx}
    Let $\bm{x} \sim \mu$ be a random vector in $\R^d$, $d \geq 2$, with covariance $\Sigma$ and Lebesgue density satisfying \begin{align*}
        \Big| \widehat{\frac{d\mu}{dm}}(\xi)\Big| \leq C(1+\|\xi\|)^{-k},
    \end{align*} for constant $C \geq 1$ and integer $k \geq 3$. Suppose $\phi, \psi$ are linearly independent and $\bm{x}$ is $\eps$-elliptical along $\on{span}\{\phi, \psi\}$. Then \begin{align*}
        \Big| \Pr[\,\on{sgn} \phi^\top \bm{x} \neq \on{sgn} \psi^\top \bm{x}\,] -\frac{1}{\pi} \arccos \circ \cos_\Sigma(\phi, \psi) \Big| \lesssim C^{\frac{3}{k+1}} \max\{1, \lambda_{\max}^{3/2}\} \cdot \eps^{\frac{1}{2}-\frac{3}{2(k+1)}}.
    \end{align*}
    }
\end{proposition}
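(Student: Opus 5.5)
The plan is to reduce to the two-dimensional marginal on $E = \on{span}\{\phi,\psi\}$ and compare it with an exactly elliptical reference law to which \Cref{lem:arccos-loss} applies. First I will whiten. Set $\bm z = \Sigma^{-1/2}\bm x$, which is $\eps$-spherical along $\Sigma^{1/2}E$, and replace $\phi,\psi$ by $\Sigma^{1/2}\phi,\Sigma^{1/2}\psi$. This preserves both the event and $\cos_\Sigma$, and the Fourier decay transfers with constants depending on $\lambda_{\max}$ and $\lambda_{\min}$. It is cleaner, though, to keep the original variables and track the factor $\max\{1,\lambda_{\max}^{3/2}\}$ from the change of variables in the two-dimensional density. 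Next I pick an orthonormal basis $e_1,e_2$ of the plane and let $\bm w = (e_1^\top \bm z, e_2^\top\bm z)\in\R^2$. Both $\phi^\top\bm x$ and $\psi^\top\bm x$ are linear functions of $\bm w$. The event is therefore $\{\bm w \in W\}$, where $W$ is a double wedge with opening angle $\arccos\cos_\Sigma(\phi,\psi)$. Under a rotationally invariant law on $\R^2$ (with no atom at the origin), the probability of $W$ is exactly $\frac1\pi\arccos\cos_\Sigma(\phi,\psi)$.

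The key step is to show that the law of $\bm w$ is nearly rotation invariant. Ellipticity gives, for each unit $v$ in the plane and $u\perp v$ in the plane, $|\E[u^\top\bm w\mid v^\top\bm w]|\le\eps$. I will use the Fourier side to exploit this. Writing $\chi$ for the characteristic function of $\bm w$, we have $\nabla\chi(tv)\cdot u = i\,\E[u^\top\bm w\, e^{it v^\top\bm w}] = i\,\E[\E[u^\top\bm w\mid v^\top \bm w]e^{itv^\top\bm w}]$. This has modulus at most $\eps$, so the angular derivative of $\chi$ at radius $t$ is at most $\eps |t|$. Integrating around circles gives $|\chi(\xi)-\bar\chi(\|\xi\|)|\lesssim \eps\|\xi\|$, where $\bar\chi$ is the radial average. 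Let $\nu$ be the radially symmetrized law of $\bm w$, which is exactly $0$-spherical. The marginal density of $\bm w$ has Fourier transform equal to $\widehat{d\mu/dm}$ restricted to the plane, up to the linear change of variables. This gives the decay $|\chi(\xi)|\le C'(1+\|\xi\|)^{-k}$, and the same decay holds for $\nu$.

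Finally I need to compare $\Pr[\bm w\in W]$ with $\nu(W)$. The indicator of $W$ is not smooth, so I will mollify it at scale $h$ and split into two errors. The smoothing error is controlled by the mass near the boundary rays and near the origin, which is $O(h\cdot\|\text{density}\|_\infty)$. The density bound comes from $\|\chi\|_{L^1}\lesssim C$ since $k\ge3$. The Fourier error I will handle by truncating at frequency radius $R$. Inside the ball, the difference $\eps\|\xi\|$ is integrated against the mollified indicator's transform, which decays like $\|\xi\|^{-1}$ to $\|\xi\|^{-2}$ up to $1/h$; the ball therefore contributes roughly $\eps R^{2}$ up to log factors. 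Outside, the tail $\int_{\|\xi\|>R}C(1+\|\xi\|)^{-k}\cdot\|\widehat{\ind_W * \rho_h}\|$ contributes about $C R^{1-k}$ or better. Balancing $\eps R^2$ against $CR^{-(k-1)}$ gives $R\approx (C/\eps)^{1/(k+1)}$, so both terms are of order $C^{2/(k+1)}\eps^{(k-1)/(k+1)}$. Combining this with the $h$ terms and optimizing should produce the stated rate $C^{3/(k+1)}\eps^{\frac12-\frac3{2(k+1)}}$; the square root suggests the smoothing-versus-Fourier tradeoff costs one factor of $h$ against $1/h$.

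I expect the main obstacle to be this last balancing. The wedge indicator has a singular vertex at the origin and is not integrable, so its Fourier transform is only a distribution. I would first try to handle it by restricting to a large disk: the tail mass $\Pr[\|\bm w\|>M]\le 2/M^2$ by Chebyshev gives an extra parameter $M$. I would then check that the resulting three-parameter optimization (in $h$, $R$, $M$) yields exactly the exponent claimed, and adjust the choice of mollifier if needed.
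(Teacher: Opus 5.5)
Your reduction matches the paper's proof up to the comparison with the radial symmetrization $\nu$. That covers whitening, passing to the planar marginal, the bound $|u^\top\nabla\chi(\xi)|\le\eps$ for unit $u\perp\xi$, integrating around circles to get $|\chi-\bar\chi|\lesssim\eps\|\xi\|$, and symmetrizing. One correction on whitening: only $\lambda_{\max}$ enters, via $\|\Sigma^{-1/2}\xi\|\ge\lambda_{\max}^{-1/2}\|\xi\|$, so the decay constant becomes $C\max\{1,\lambda_{\max}^{k/2}\}$; a $\lambda_{\min}$-dependence would not match the statement.

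\textbf{The gap is the final step.} You leave it as an unverified balancing, and the estimates you state for it do not hold. The transform of the wedge indicator is not a function decaying like $\|\xi\|^{-1}$ to $\|\xi\|^{-2}$. Write $\ind_W=\tfrac12-\tfrac12\sgn(n_1^\top w)\sgn(n_2^\top w)$, with $n_1,n_2$ normal to the two boundary lines. A linear change of variables then gives $\widehat{\ind_W}=\tfrac12\delta_0+c\,\mathrm{p.v.}\,\bigl(\langle a_1,\xi\rangle\langle a_2,\xi\rangle\bigr)^{-1}$ for suitable $a_1,a_2$. The function $1/|\langle a_1,\xi\rangle\langle a_2,\xi\rangle|$ is not locally integrable along two lines through the origin.

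Mollifying does not help, since $\hat\rho(h\xi)\approx 1$ for $\|\xi\|\ll 1/h$. So bounding the pairing by $\int_{\|\xi\|\le R}\eps\|\xi\|\,|\widehat{\ind_W*\rho_h}(\xi)|\,d\xi$ gives $+\infty$: a pointwise bound on $|\chi-\bar\chi|$ cannot see the principal-value cancellation. Similarly, the smoothing error cannot be bounded by $h\|f\|_\infty$, because the $h$-neighbourhood of the unbounded boundary rays has infinite area.

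\textbf{The missing idea is that you never need to test against the wedge.} The paper bounds total variation directly. Use Fourier inversion on $B_{r_1}$, and Chebyshev outside it (since $\E\|\bm w\|^2=2$ and symmetrization preserves the radial law). This gives $|\Pr[\bm w\in W]-\nu(W)|\le\|f-f_\circ\|_1\le\pi r_1^2\|\hat f-\hat f_\circ\|_{L^1}+4r_1^{-2}$. Your rotation estimate inside $B_{r_2}$ and the decay outside give $\|\hat f-\hat f_\circ\|_{L^1}\lesssim\eps r_2^3+Cr_2^{2-k}$. Take $r_2^{k+1}\asymp C/\eps$, then balance $r_1^2$ against $r_1^{-2}$; this yields the stated rate. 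The square root comes from this spatial balancing, not from an $h$ versus $1/h$ tradeoff.

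Alternatively, your own route can be completed. The disk cutoff you mention as a fallback is the whole fix, and no mollifier is needed. Set $g=\ind_{W\cap B_M}$. Plancherel applies since $\hat f,\hat f_\circ\in L^2$, and the divergence theorem gives $|\hat g(\xi)|\le\mathrm{Per}(W\cap B_M)/(2\pi\|\xi\|)\lesssim M/\|\xi\|$. The error is then $\lesssim\eps MR^2+CMR^{1-k}+M^{-2}$, which optimizes to $C^{4/(3(k+1))}\eps^{2(k-1)/(3(k+1))}$.

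For $\eps\le 1$ this is at most the claimed bound, and for $\eps>1$ the claim is trivial. It is in fact sharper, because you pay only for the perimeter of one set rather than for $f-f_\circ$ pointwise on a whole disk. The paper's total-variation route is simpler and holds uniformly over all events.
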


\noindent In light of the tight connections between smoothness and Fourier decay, we view the density assumption as somewhat natural. To obtain this estimate, we apply an approximate version of Eaton's ellipticity characterization on $\on{span}\{\phi, \psi\}$. After showing approximate rotational symmetry of the frequencies, we use a spherical averaging procedure to obtain TV closeness to some idealized distribution on a constant-dimensional subspace. It is then enough to invoke \Cref{lem:arccos-loss}. In proving this lemma, we actually do not rely on full $\eps$-ellipticity along $\on{span}\{\phi, \psi\}$; we only require that the projection of our distribution onto this subspace be $\eps$-elliptical. Crucially, \Cref{prop:arccos-loss-approx} furnishes a target under which we can study the dynamics of SGD under weak supervision; since $\arccos$ is strictly monotone decreasing on $(0,1)$ (in fact has a monotonic derivative on $[1/2, 1)$), we may restrict our attention to quantifying the change in cosine similarity.

\subsection{Sufficient Conditions for Generalization}\label{sec:sufficient-conditions}

To start, we describe the conditions that enable our analysis. For clarity of presentation, we will assume $\on{cov} \bm{x} = I$ for the remainder of \Cref{sec:overview}, and leave the generalized analysis to \Cref{sec:full-results}. We will further assume that $\|\theta^*\| = \|\psi\| = 1$ as they only interact with the learning process via their direction. We define the confidence measure \begin{align*}
    m_{\mu, \beta}(\theta_0) := \frac{1}{\|\theta_0\|} \E \left[\,\sigma_\beta(\theta_0^\top\bm{x}) \cdot \theta_0^\top\bm{x}\right].
\end{align*}
Note that by the mean zero and identity covariance assumptions, $m_{\mu, \beta} \in [0,1]$. Note also that this quantity is increasing in both $\beta$ and $\|\theta_0\|$. 

We observe that learning from a weak teacher is possible when a certain ``relative correlation'' quantity---which compares the cosine similarities of the student and the teacher after an adjustment by $m_{\mu, \beta}(\theta_0)$---is positive. We first consider two relevant terms: \begin{align*}
    \rho_\ell &:= \frac{1}{2} \E|\psi^\top\bm{x}| \cdot \cos(\theta^*, \psi) - m_{\mu, \beta}(\theta_0) \cdot \cos(\theta^*, \theta_0)-2\eps,\\
    \rho_\upsilon &:= m_{\mu, \beta}(\theta_0) \cdot \|\on{Proj}_{(\theta^*)^\perp}\tilde{\theta}_0 \|^2-\frac{1}{2} \E|\psi^\top\bm{x}|\cdot |\cos( \on{Proj}_{(\theta^*)^\perp} \theta_0,\psi)|-2\eps.
\end{align*}
One can view $\rho_\ell$ as the student's potential to learn from the teacher's knowledge of $\theta^*$, and $\rho_\upsilon$ as the student's potential to unlearn its own orthogonality with $\theta^*$. The coefficients $\E|\psi^\top\bm{x}|$ and $m_{\mu, \beta}(\theta_0)$ can be seen as adjusting the correlations and confidence measure to the scale of the marginals at hand. Central to our analysis will be the following quantity:
\begin{align}\label{eq:overview-signal-term}
    \rho := (1-\cos^2(\theta_0, \theta^*))\rho_\ell + \cos(\theta_0, \theta^*)\rho_\upsilon.
\end{align}
We observe that when $\bm{x}$ is $\eps$-elliptical along $\on{span}\{\theta_0, \psi\}$, then $\rho > 0$ is sufficient for the student to learn by following the pseudo-labels of $\psi$. In the present setting, where $\on{cov}\bm{x} = I$, we notice that there is actually cancellation in \Cref{eq:overview-signal-term}, and \begin{align}\label{eq:overview-signal-term-cancellation}
    \rho &\geq \frac{\E|\psi^\top\bm{x}|}{2} \Big[ (1-\cos^2(\theta_0, \theta^*)) \cos(\psi, \theta^*)- \cos(\theta_0, \theta^*) |\cos( \on{Proj}_{(\theta^*)^\perp} \theta_0,\psi)| \Big] - 4\eps.
\end{align}
As described in the introduction, the condition that $\rho > 0$ is then easily satisfied by high-dimensional distributions of interest.

\subsection{Optimization Guarantees}

We now state our main result, restricted to the setting where $\on{cov} \bm{x} = I$ and $\beta = 1$. 
\begin{theorem}\label{thm:isotropic-result}
    Let $\mu$ be $\eps$-elliptical along $\on{span}\{\psi, \theta_0\}$. Suppose $\cos(\theta_0, \theta^*) \geq 0$ and $\rho > 0$ from \Cref{eq:overview-signal-term}. Then for some $T := \tau/\eta$ and any batch size $B$, where \begin{align*}
        \tau = O \left(\frac{\rho}{\|\theta_0\|}\right) \quad \quad \eta = \Omega\left( \frac{\rho}{\|\theta_0\| + 1 + \eps}\right)^3,
    \end{align*}
    it holds that \begin{align*}
        \cos(\hat{\theta}_T, \theta^*) - \cos(\theta_0, \theta^*) \geq \Delta := \Omega \left( \frac{\rho}{\|\theta_0\| + 1 + \eps}\right)^2,
    \end{align*}
    with failure probability at most \begin{align*}
        \frac{d}{B\Delta}\cdot O\left( \frac{1}{\|\theta_0\|\rho} + \frac{1}{\rho^2}\right).
    \end{align*}
\end{theorem}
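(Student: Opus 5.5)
Our plan is to compare the SGD trajectory with the population gradient flow and to track $f(\theta) := \cos(\theta,\theta^*)$ along it. We first show that along the flow $\dot\theta = -\nabla\loss(\theta)$, started at $\theta_0$, the time derivative of $f$ stays at least of order $\rho/\|\theta_0\|$ for times $t\le\tau$. We then show that the discretized, stochastic iterates $\hat\theta_t$ stay close enough to the flow that the gain in cosine survives.

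\emph{Step 1 (the population gradient).} Write $-\nabla\loss(\theta) = \E[\ind[\psi^\top\bm x\ge0]\bm x] - \E[\sigma(\theta^\top\bm x)\bm x]$. We use $\eps$-ellipticity along $\on{span}\{\psi,\theta_0\}$, conditioning on $\psi^\top\bm x$ and on $\theta_0^\top\bm x$ respectively. This gives
\[
\E[\ind[\psi^\top\bm x\ge0]\bm x] = \tfrac12\E|\psi^\top\bm x|\,\psi + e_1, \qquad \E[\sigma(\theta_0^\top\bm x)\bm x] = m_{\mu,1}(\theta_0)\,\tfrac{\theta_0}{\|\theta_0\|} + e_2,
\]
where $e_1,e_2$ are error vectors. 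Pairing them against any unit $u$ bounds $|\langle e_i,u\rangle|\le\eps$, and this is the source of the $-2\eps$ terms.

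\emph{Step 2 (the cosine derivative at $\theta_0$).} We substitute $v=-\nabla\loss(\theta_0)$ into the derivative identity $\|\theta\|f'=(1-f^2)\,v^\top\theta^* - \frac{f}{\|\theta\|}\langle \on{Proj}_{(\theta^*)^\perp}\theta,\on{Proj}_{(\theta^*)^\perp}v\rangle$. The first term is at least $(1-f^2)\rho_\ell$. In the second term, the $\theta_0$-component of $v$ contributes $-m\,f\,\|\on{Proj}_{(\theta^*)^\perp}\tilde\theta_0\|^2$, and the $\psi$-component is bounded by the cross-cosine. Together these give exactly $\|\theta_0\|\,\tfrac{d}{dt}f\ge\rho$.

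\emph{Step 3 (persistence along the flow).} The loss is $\tfrac14$-smooth and the gradient is bounded by $O(1+\eps)$. Hence for $t\le\tau$ the flow satisfies $\|\theta_t-\theta_0\|\le O(t)$. The map $\theta\mapsto \|\theta\|\nabla f(\theta)\cdot v(\theta)$ is Lipschitz with constant $O(1+\|\theta_0\|^{-1})$ near $\theta_0$. Rather than re-invoking ellipticity at $\theta_t$, whose span may leave $\on{span}\{\psi,\theta_0\}$, we bound $v(\theta_t)-v(\theta_0)$ directly by smoothness. So $\tfrac{d}{dt}f\ge\rho/(2\|\theta_0\|)$ holds up to a time $\tau\asymp \rho/(\|\theta_0\|+1+\eps)$ times appropriate factors. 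This gives a gain of order $\rho\tau/\|\theta_0\|$, which, after matching constants, is the claimed $\Delta=\Omega(\rho/(\|\theta_0\|+1+\eps))^2$.

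\emph{Step 4 (discretization and noise).} We carry out the argument directly on the iterates by a second-order Taylor expansion of $f$ along each step. We need the Hessian of $f$, which is $O(1/\|\theta\|^2)$, and $\|\theta\|$ stays comparable to $\|\theta_0\|$ on this window. The expansion gives
\[
f(\hat\theta_{t+1})\ge f(\hat\theta_t)+\eta\langle\nabla f,v\rangle - \eta\langle\nabla f,\xi_t\rangle - O(\eta^2\|g_t\|^2/\|\theta\|^2),
\]
where $\xi_t$ is the mean-zero minibatch noise with $\E\|\xi_t\|^2\le O(d/B)$. The deterministic part sums to $\Delta$ plus a discretization loss of order $\eta T$, which the choice $\eta=\Omega(\rho/(\cdot))^3$ keeps below $\Delta/4$. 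The noise terms form a martingale. Bounding $\sup_t\|\hat\theta_t-\theta_t\|$ and the martingale by Doob/Chebyshev yields the failure probability $\frac{d}{B\Delta}\cdot O(\cdot)$.

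The main obstacle is Step 3 combined with the noise: we must keep $\hat\theta_t$ in a region where the lower bound on $\tfrac{d}{dt}f$ persists, even though ellipticity is only assumed at $\on{span}\{\psi,\theta_0\}$. We handle this by freezing the ellipticity calculation at $\theta_0$ and controlling all drift purely by smoothness. This is why $\tau$ must be small, and it is where the $\rho^2$ and $\rho^3$ exponents arise.
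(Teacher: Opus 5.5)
Your proposal is correct in substance. Steps~1--3 follow the paper. Step~2 is the isotropic case of Lemma~\ref{lem:first-order-lower-bound}. Step~3 is the device behind Lemma~\ref{lem:gradient-flow}: use ellipticity only at $\theta_0$, and control all later drift by smoothness of $\loss$ and the Hessian of the cosine. Your Lipschitz bound on $g(\theta)=\|\theta\|\langle\nabla f(\theta),-\nabla\loss(\theta)\rangle$ plays the role of the paper's bound on $D_s^2(\Psi\circ\Theta_s)$.

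Step~4 is where you genuinely diverge. The paper never expands $\Psi$ along the SGD iterates. Instead it:
\begin{itemize}
\item shows the Euler iterates track the flow (Lemma~\ref{lem:euler-approximation});
\item uses discrete Gr\"onwall and Doob's inequality on the \emph{vector} martingale $\CE_t$ to get $\|\hat\theta_T-\bar\theta_T\|\le\delta\asymp\alpha_*\|\theta_0\|$;
\item transfers the flow's gain to $\hat\theta_T$ through the Lipschitz bound on $\Psi$ away from the origin (Lemma~\ref{lem:cosine-lipschitz}).
\end{itemize}
This is modular, since the geometry of $\Psi$ enters only in continuous time. But demanding that the whole vector $\hat\theta_T$ be $\delta$-close to the flow is exactly what produces the $\rho^3$ step size and the $1/\delta^2$ in the failure probability.

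Your descent-lemma route on $f(\hat\theta_t)$ needs no Euler comparison. Localization follows directly from $\|\hat\theta_t-\theta_0\|\le\eta t\sup_\theta\|\nabla\loss(\theta)\|+\eta\|\CE_{t-1}\|$. At first order the noise enters only through the scalar martingale $\eta\sum_t\langle\nabla f(\hat\theta_t),\xi_t\rangle$, whose per-step conditional variance is $O(\eta^2/(B\|\theta_0\|^2))$ with no factor $d$. The step size then only has to make $\eta^2\sum_t\|g_t\|^2/\|\hat\theta_t\|^2$, which is of order $\eta\tau(1+d/B)/\|\theta_0\|^2$, small against $\Delta$. So the discretization loss is of order $\eta^2T$, not $\eta T$ as you wrote, and it is an \emph{upper} bound on $\eta$ that controls it.

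Your route has two costs. First, you need a stopping-time argument, because your bounds on $g(\hat\theta_t)$ and $\|\nabla f(\hat\theta_t)\|$ hold only on the localization event. Second, you need a per-step remainder bound that survives a noisy step landing near the origin. This follows from $|f|\le1$, which gives $|f(a)-f(b)-\langle\nabla f(b),a-b\rangle|\lesssim\|a-b\|^2/\|b\|^2$ for every $a$.

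Finally, the gradient bound you use along the trajectory holds globally without ellipticity: $\|\nabla\loss(\theta)\|\le\sup_{\|u\|=1}\E|u^\top x|\le1$. With it, your Step~3 gives a window $\tau\asymp\rho\|\theta_0\|/(1+\|\theta_0\|)$ and a gain of order $\rho^2/(1+\|\theta_0\|)$, which dominates the stated $\Delta$. Capping $\tau$ at $O(\rho/\|\theta_0\|)$ to match the statement still leaves a gain of at least $\Delta$. The paper's squared denominator comes from bounding $\|\nabla\loss(\Theta_s)\|$ by $\|\nabla\loss(\theta_0)\|+\gamma\|\theta_0\|$ via smoothness.
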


\noindent In particular for fixed $\|\theta_0\|$ and $\rho$, a batch size $\Theta(d)$ suffices for a constant failure probability. We highlight the fact that the above result only uses $\eps$-sphericity along a 2-dimensional subspace. As detailed further in \Cref{sec:approximate-ellipticity-assumption}, one can expect this to hold for most subspaces of constant dimension (in the sense of the Haar measure over the Stiefel manifold) under very generic circumstances.

To prove \Cref{thm:isotropic-result}, we use a continuous-time argument. We start by by exhibiting idealized guarantees via an analysis of the gradient flow. Two features of the setting motivate this approach:  (1) we find ourselves interested in maximizing a different objective function than that which defines our available gradient queries; (2) we are primarily interested in understanding the starting conditions which enable learning under weak supervision. For both of these reasons, we find it difficult to apply the standard toolkit of fixed-point optimization theory. 

Instead, we take a polynomial approximation of the gradient flow. Formally, fix some $\tau > 0$, and consider the initial value problem defined by the vector field $-\nabla \loss : \R^d \to \R^d$ with initial position $\theta_0$. By the global extension of Picard-Lindel\"of, there exists a unique solution $[0, \tau] \to \R^d$, and we denote these dynamics $(\Theta_s)_{s \geq 0}$. Letting $\Psi(\cdot) = \cos(\cdot, \theta^*)$, Taylor theorem guarantees that within a small neighborhood of zero, \begin{align*}
    \Psi(\Theta_s) &\approx \Psi(\Theta_0) - \nabla\Psi(\Theta_0)^\top \nabla \loss(\Theta_0)s \\
        &\hspace{4em} + \Big(\nabla \loss(\Theta_0)^\top \nabla^2 \Psi(\Theta_0) \nabla \loss(\Theta_0) + \nabla \Psi(\Theta_0)^\top \nabla^2 \loss(\Theta_0) \nabla \loss(\Theta_0)\Big) \frac{s^2}{2}.
\end{align*}
Using approximate ellipticity, we obtain tight connections between the first derivative and the progress term $\rho$. Moreover, because the Hessians of cosine similarity and logistic loss are sufficiently well-behaved, we can control second-order errors long enough to make constant progress.

In order to extend to the stochastic result, we first show that a discretization of the gradient flow remains stable with short step sizes. That is, for discrete, full-information iterates $\bar{\theta}_{t+1} = \bar{\theta}_t - \eta\nabla \loss(\bar{\theta}_t)$, the gap $\|\Theta_{\eta t} - \bar{\theta}_t\|$ remains small. Again, because we only ask for constant improvement in the cosine similarity, this step size requirement remains reasonable under well-conditioned distributions. Second, we show that with sufficiently large batches, SGD iterates $\hat\theta_{t+1} = \hat\theta_t - \eta g_t$ closely approximate the discretization. Since $\Psi$ remains $2/r$-Lipschitz outside of the $r$-ball, we can show that $\Psi(\hat\theta_T)$ remains within a constant factor of $\Psi(\Theta_{\eta T})$ with high probability. We formalize these steps in \Cref{sec:full-results}.

\section{Empirical Simulations}
\label{sec:exp}

We now empirically confirm our theoretical findings. Across the training experiments in this section, we assume a standard Gaussian data distribution and take $\mu = \CN(0, I)$ in $d=100$ dimensions. We set $\theta^*$ to be the first basis vector, which always holds up to rotation.  

We first validate our main result \Cref{thm:isotropic-result} by showing that, for a fixed student, almost all randomly sampled teachers that are weaker by a constant gap suffice to improve the student. We find this unsurprising: randomly drawn teachers teachers have vanishing correlation with the student in directions orthogonal to $\theta^*$, thus satisfying the condition $\rho > 0$ (\Cref{eq:overview-signal-term}).

However, it is not obvious that randomly sampling teachers is realistic in practice, as student and teacher models typically arise from training with (1) similar algorithms on (2) similar data distributions, restricting the space of learnable models and potentially inducing correlations. Therefore, we next consider a ``shared pretraining'' setting where the student and teachers are trained from independent random initializations on the same data distribution, but the student is given strictly more resources---additional data and longer training---resulting in a stronger model. We observe that weak-to-strong gains persist even under this regime.

\subsection{Learning from Randomly Sampled Weaker Teachers}
\label{sec:exp:random}

\begin{figure}[t]
    \centering    \includegraphics[width=.8\textwidth]{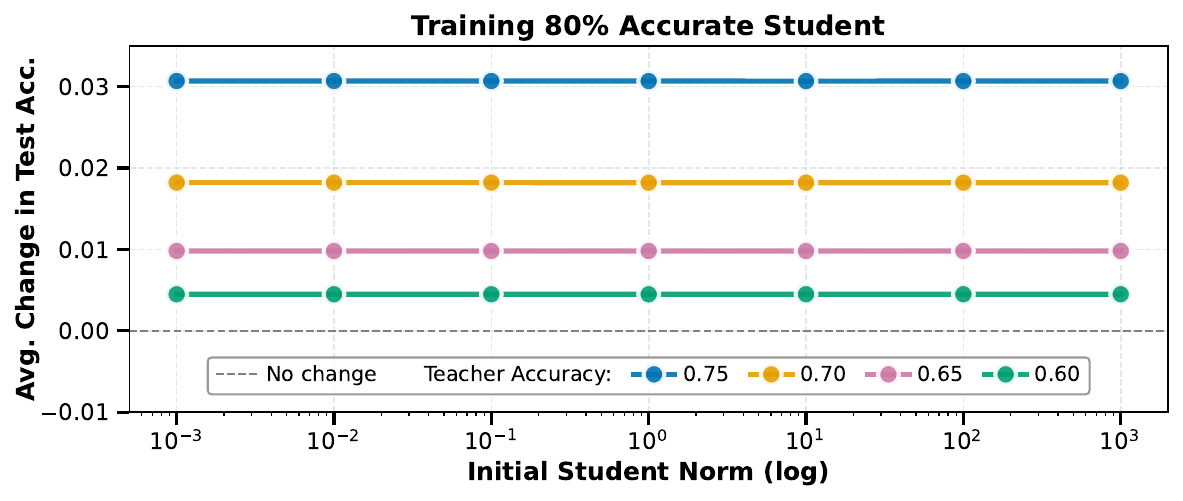}
    \caption{We finetune an 80\% accurate student linear model with weaker teachers of varying (lower) accuracy (different colors) and across various initial student norms. For each setting, we visualize the average gain in test accuracy across 100 different random teachers, training with early stopping. Consistent with our theoretical predictions, weaker teachers improve the student less, but all weak teachers yield non-trivial gains. Surprisingly, we observe that the student's initial norm does not affect the total gain, diverging from our analysis.}
    \label{fig:teacher_strength_norms}
\end{figure}

We fix a target initial student accuracy (equivalently, cosine similarity with $\theta^*$) and norm, then obtain the weights $\theta_0$ for the student model by sampling uniformly from the sphere satisfying these constraints. We fix a strictly lower target teacher accuracy, and similarly uniformly sample 100 unit-norm teachers that are weaker. For each teacher, we train student $\theta_0$ via SFT on the teacher's pseudo-labels (see Definition~\ref{def:sft}). We sweep learning rate $\eta$ in a fixed set (\texttt{np.logspace(-5, 1, 7)}), train with batch size 10,000, and employ early stopping. At each step of the training run, we measure the current $t$-th student iterate's (1) inner product $\langle \theta_t, \theta*\rangle$, (2) weight norm $||\theta_t||$, and cosine similarity with $\theta^*$. 

We report main results in Figure~\ref{fig:teaser}, showing trajectories from training 80\% accurate students of varying norms with 70\% accurate teachers. We observe three key takeaways. First, all weak teachers improve the student; every trajectory strictly exceeds the initial student's cosine similarity with $\theta^*$ at some point. Second, the total gain appears to be independent of the student's initial norm; trajectories peak at the same average gain\footnote{This differs from our analysis which predicts gains that decay with larger student norms; we view this as an interesting inquiry for future work. See Section~\ref{sec:outlook} for further discussion.}. Finally, learning dynamics depend heavily on the student's initial norm. Roughly speaking, we observe two distinct learning mechanisms: when the norm is high, $\rho_\ell$ is negative and $\rho_\upsilon$ dominates (Equation~\ref{eq:overview-signal-term}). In other words, the student improves by unlearning its own orthogonality with $\theta^*$; despite its inner product with $\theta^*$ decreasing, its norm decreases faster (purple line in Figure~\ref{fig:teaser}). In contrast, when the student is initially unconfident, $\rho_\ell$ is positive and dominates, yielding an update that is positively correlated with $\theta^*$. Hence the student learns from the teacher's knowledge of $\theta^*$, and we get gains in early steps where the inner product with $\theta^*$ grows faster than the norm (blue line).

Note that our results also highlight the importance of early stopping. As discussed earlier, one cannot expect to get vanishing loss from this weak-to-strong setup. After some steps of training, the gain peaks, and training for too long begins to actively hurt. See \Cref{fig:teacher_strength_norms} for a more extensive sweep of initial student norms and teacher accuracies; results corroborate our above discussion.

\begin{figure}[t]
    \centering
    \includegraphics[width=\textwidth]{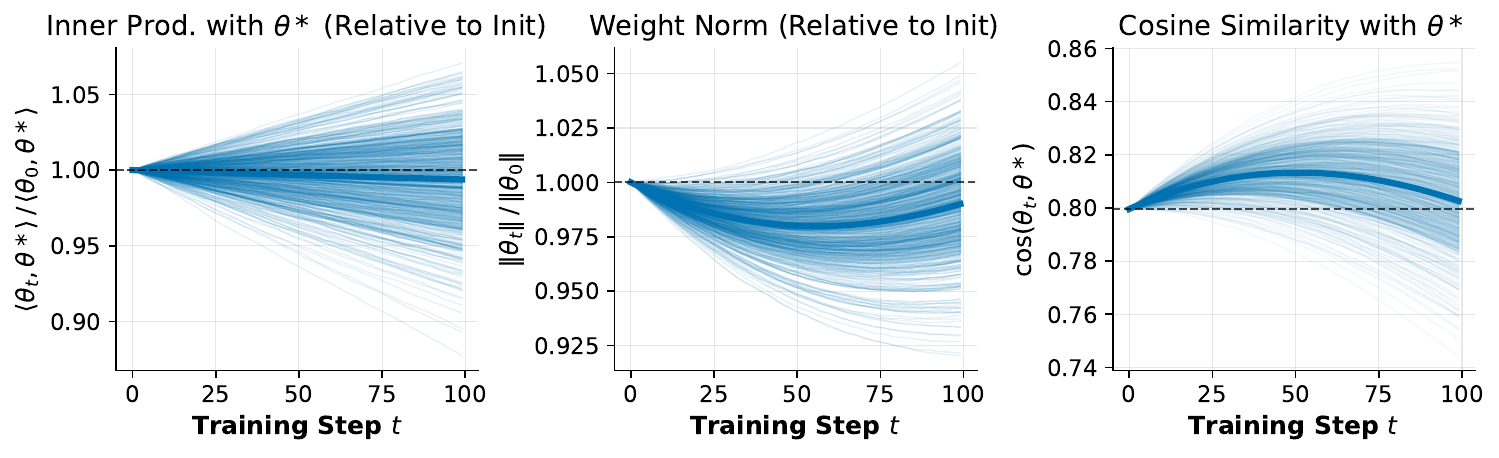}
    \caption{We first obtain a strong student linear model and 100 weaker teachers by pretraining independent random initializations on overlapping ground-truth data with varying compute, and then finetune the student with each weak teacher's pseudo-labels (separate runs, thin lines). Despite their shared origin, nearly all weak teachers suffice to improve the stronger student's performance. Thick lines and shading show mean $\pm$ std.}
    \label{fig:shared_pretrain}
\end{figure}

\subsection{Learning from Weaker Teachers with Similar Pretraining}
\label{sec:exp:shared}

We simulate shared pretraining by deriving both student and teachers from training on a small ground-truth dataset. We pretrain the student $\theta_0$ from random Gaussian initialization for 25 epochs on 5000 correctly-labeled points with learning rate $\eta=0.1$. We similarly obtain 100 weaker teachers via training (from independent random initializations) with the same data distribution and learning rate but $\sim$10x less total compute: 10 epochs on a strict 1000-point subset. At a high level, our setup is inspired by current language model development, where stronger models are obtained by scaling the pretraining data and compute. This process yields a student with $\sim$80\% accuracy and teachers with $\sim$63\% accuracy. Afterwards, with each teacher, we finetune $\theta_0$ using pseudo-labels from the teacher and visualize training trajectories with largely the same setup described earlier (Section~\ref{sec:exp:shared}). One deviation is we use a smaller batch size of 1000, chosen to stay true to our ``pretrain-then-finetune'' emulation. 
We report results in Figure~\ref{fig:shared_pretrain}. Overall, despite the shared pretraining, nearly all of the weak teachers remain sufficient to improve the stronger student.

\section{Outlook}
\label{sec:outlook}
As alluded to above, our results decay with larger student norms. Considering the local nature of our Taylor approximation arguments, this is unsurprising; even the simplified sketch in our introduction allows for the derivative of $t \mapsto \cos(\ell(t), \theta^*)$ at zero to decay with $1/\|\theta_0\|$. Nevertheless, our experiments suggest that, at least in the Gaussian setting, total progress should be nearly independent of this quantity. This phenomenon appears to evade the present analysis, and may be an interesting line for further inquiry.

Another natural question is whether we can extend these results beyond linear models. 
Indeed, because the gains we see arise from infinitesimal dynamics, we conjecture that there are natural conditions so that for more general kernels and/or learned representations, we can see a similar weak-to-strong generalization phenomenon.
However, we leave the details of this to future work.

\clearpage
\subsection*{Acknowledgments}
SG is supported by the NSF GRFP. DH was partially supported by the NSF Institute for Foundations of Machine Learning.

\bibliographystyle{alpha}
\bibliography{refs}

\clearpage
\tableofcontents

\clearpage
\appendix
\crefalias{section}{appendix}
\renewcommand{\theHsection}{A\arabic{section}}
\renewcommand{\theHtheorem}{appendix.\Alph{section}.\arabic{theorem}}
\renewcommand{\theHlemma}{appendix.\Alph{section}.\arabic{lemma}}
\renewcommand{\theHdefinition}{appendix.\Alph{section}.\arabic{definition}}
\renewcommand{\theHexample}

\section{Full Results}\label{sec:full-results}

We now state our results in full generality. Throughout, we consider a covariance $\Sigma \succ 0$. We take $\|\theta^*\|_\Sigma = \|\psi\|_\Sigma = 1$, and $\tilde\theta_0 := \theta_0/\|\theta_0\|_\Sigma \neq 0$. Here, the right analog of the first condition in \cref{thm:isotropic-result} turns out to be the twofold assumption that $\langle \theta^*, \theta_0 \rangle_{\Sigma^2} \geq 0$ and $\langle \theta^*, \theta_0\rangle_\Sigma \geq 0$, which we assume throughout this section. We again take $m_{\mu, \beta}(\theta_0)$ as written in \Cref{sec:sufficient-conditions}, and generalize the quantities of interest: 
\begin{align*}
    \rho_\ell &:= \frac{\lambdamin}{2} \E|\psi^\top\bm{x}| \cdot \cos_\Sigma(\Sigma \theta^*, \psi) - m_{\mu, \beta}(\theta_0) \cdot \langle \Sigma \theta^*, \tilde{\theta}_0\rangle_\Sigma -2\lambdamax\eps,\\
    \rho_\upsilon &:= m_{\mu, \beta}(\theta_0) \cdot \left\langle \Sigma \on{Proj}_{(\theta^*)^\perp}^\Sigma \tilde\theta_0, \tilde{\theta}_0 \right\rangle_\Sigma -\frac{\lambdamax}{2} \E|\psi^\top\bm{x}|\cdot |\cos_{\Sigma}(\Sigma \on{Proj}_{(\theta^*)^\perp}^\Sigma \theta_0,\psi)|-2 \lambdamax \eps.
\end{align*}
The sufficient condition for weak-to-strong guarantees now becomes \begin{align}\label{eq:main-signal-term}
    \rho &= (1-\cos_\Sigma^2(\theta^*, \theta_0))\rho_\ell + \cos_\Sigma(\theta^*, \theta_0)\rho_\upsilon >0.
\end{align} 
Under this hypothesis, we will see that it is possible to improve the cosine similarity (with respect to $\Sigma$) by \begin{align*}
    \cos(\hat{\theta}_T, \theta^*) - \cos(\theta_0, \theta^*) \geq C_{\lambdamax, \lambdamin} \cdot \left(\frac{\rho}{\|\theta_0\| + 1 + \eps}\right)^2,
\end{align*}
where $C_{\lambdamax, \lambdamin}$ depends only on the top and bottom eigenvalues of the covariance.
We state the precise guarantees in \cref{thm:stochastic-analysis}, and later refine the relevant bounds to match this more concise statement of \cref{thm:isotropic-result}.

In \cref{sec:sufficient-conditions}, we described how cancellation in $\rho$ suggests that the weak-to-strong phenomenon is plausible under only mild assumptions. While we don't get full cancellation in the non-isotropic case, one can observe that an approximate version of \cref{eq:overview-signal-term-cancellation} still holds. Naturally, this cancellation decays with the conditioning and with the similarity between $\theta^*$ and $\theta_0$ orthogonal to $\theta^*$ (with respect to the $\Sigma$ inner product). Concretely, under the assumptions above, it will hold that \begin{align*}
    \rho \geq \frac{\E |\psi^\top \bm{x}|}{2} &\left( \lambdamin \big(1-\cos_\Sigma^2(\theta^*, \theta_0)\big) \cos_\Sigma(\Sigma \theta^*, \psi) - \lambdamax \cos_\Sigma(\theta^*, \theta_0) \cdot |\cos_\Sigma(\Sigma \on{Proj}_{(\theta^*)^\perp}^\Sigma \theta_0, \psi)| \right)\\
    - &(\lambdamax-\lambdamin)m_{\mu, \beta}(\theta_0) \sum_{E \in \on{Eig}(\Sigma)} \left|\left\langle \on{Proj}_E \theta^*, \on{Proj}_E \tilde{\theta}_0 \right\rangle\right| - 4\lambdamax \eps,
\end{align*}
where $\on{Eig}(\Sigma)$ denotes the collection of eigenspaces of $\Sigma$. We prove this in \cref{lem:ill-conditioned-inevitable}.

\subsection{Gradient Flow Analysis}\label{sec:gradient-flow}

To start, we exhibit our primary application of approximate ellipticity. Roughly, this condition allows us some handle on how the logistic gradients evolve cosine similarity, and enables our analysis of the gradient flow.
Throughout, we also write $\Psi(\cdot) = \cos_\Sigma(\cdot, \theta^*)$ for brevity. 

\begin{lemma}\label{lem:first-order-lower-bound}
    Let $\bm{x} \sim \mu$ be $\eps$-elliptical along $\on{span}\{\psi, \theta_0\}$.
    Then provided $\rho > 0$, it holds that \begin{align*}
        \left\langle \nabla \Psi(\theta_0), - \nabla \loss(\theta_0)\right\rangle_I \geq \frac{\beta\rho}{\lambdamax^{1/2}\|\theta_0\|}.
    \end{align*}
\end{lemma}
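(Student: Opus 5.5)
Write $\tilde\theta_0 = \theta_0/\|\theta_0\|_\Sigma$. The proof is a direct computation of both factors, with approximate ellipticity controlling the only non-explicit piece of the gradient.

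\emph{Step 1: the gradient of $\Psi$.} Differentiating $\Psi(\theta) = \langle \theta,\theta^*\rangle_\Sigma/\|\theta\|_\Sigma$ gives
\[
\nabla\Psi(\theta_0) = \frac{1}{\|\theta_0\|_\Sigma}\Big(\Sigma\theta^* - \Psi(\theta_0)\,\Sigma\tilde\theta_0\Big) = \frac{1}{\|\theta_0\|_\Sigma}\,\Sigma\,\on{Proj}^\Sigma_{\tilde\theta_0^\perp}\theta^*,
\]
using $\|\theta^*\|_\Sigma = 1$. Since $\|\theta_0\|_\Sigma \le \lambdamax^{1/2}\|\theta_0\|$, it suffices to show $\langle \on{Proj}^\Sigma_{\tilde\theta_0^\perp}\theta^*, -\nabla\loss(\theta_0)\rangle_\Sigma$-type quantities (precisely $\langle \Sigma\on{Proj}^\Sigma_{\tilde\theta_0^\perp}\theta^*, -\nabla\loss(\theta_0)\rangle$) are at least $\beta\rho$.

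\emph{Step 2: decompose the negative gradient.} Write
\[
-\nabla\loss(\theta_0)/\beta = \E[\ind[\psi^\top\bm x\ge 0]\bm x] - \E[\sigma_\beta(\theta_0^\top\bm x)\bm x].
\]
For each term, pair against a fixed vector $w := \Sigma\on{Proj}^\Sigma_{\tilde\theta_0^\perp}\theta^*$ and condition on the relevant one-dimensional projection. For the student term, condition on $\theta_0^\top\bm x$. By $\eps$-ellipticity along $\on{span}\{\theta_0\}$, $\E[w^\top\bm x\mid\theta_0^\top\bm x]$ equals $\frac{w^\top\Sigma\theta_0}{\|\theta_0\|_\Sigma^2}\theta_0^\top\bm x$ up to $\eps\|w\|_\Sigma$. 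This gives
\[
\E[\sigma_\beta(\theta_0^\top\bm x)w^\top\bm x] \approx m_{\mu,\beta}(\theta_0)\,\frac{\|\theta_0\|}{\|\theta_0\|_\Sigma^2}\,w^\top\Sigma\theta_0,
\]
with error at most $\eps\|w\|_\Sigma$, since $\sigma_\beta\in[0,1]$. The teacher term works the same way after conditioning on $\psi^\top\bm x$: it gives $\tfrac12\E|\psi^\top\bm x|\cdot w^\top\Sigma\psi$ up to $\eps\|w\|_\Sigma$, using $\E[\ind[\psi^\top\bm x\ge0]\psi^\top\bm x] = \tfrac12\E|\psi^\top\bm x|$ for a mean-zero variable. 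Bounding $\|w\|_\Sigma \le \lambdamax$ times norms of unit quantities produces the $2\lambdamax\eps$ corrections.

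\emph{Step 3: match with $\rho$.} Split $\theta^* = \Psi(\theta_0)\tilde\theta_0 + (\text{component }\Sigma\text{-orthogonal to }\tilde\theta_0)$. Equivalently, write $w$ in terms of $\Sigma\theta^*$ and $\Sigma\on{Proj}^\Sigma_{(\theta^*)^\perp}\tilde\theta_0$, which rearranges $w$ as $(1-\Psi^2)\Sigma\theta^* - \Psi\,\Sigma\on{Proj}^\Sigma_{(\theta^*)^\perp}\tilde\theta_0$. Then:
\begin{itemize}
\item The $(1-\Psi^2)\Sigma\theta^*$ part paired with the teacher gives $\langle\Sigma\theta^*,\psi\rangle_\Sigma$, which is lower bounded through $\lambdamin\cos_\Sigma(\Sigma\theta^*,\psi)$. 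Paired with the student it gives $m\langle\Sigma\theta^*,\tilde\theta_0\rangle_\Sigma$. Together these form $\rho_\ell$.
\item The $-\Psi\Sigma\on{Proj}\tilde\theta_0$ part paired with the student gives $+m\langle\Sigma\on{Proj}\tilde\theta_0,\tilde\theta_0\rangle_\Sigma$. Paired with the teacher it is bounded in absolute value by $\tfrac{\lambdamax}{2}\E|\psi^\top\bm x|\,|\cos_\Sigma(\cdot,\psi)|$. Together these form $\rho_\upsilon$.
\end{itemize}
The coefficients $1-\Psi^2$ and $\Psi\ge0$ are nonnegative, so the lower bounds combine to give $\rho$.

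The main obstacle is the bookkeeping in this last step. The $\Sigma$-norms of $\Sigma\theta^*$ and $\Sigma\on{Proj}\tilde\theta_0$ are not $1$, so the $\lambdamin/\lambdamax$ factors and the normalization $\|\theta_0\|/\|\theta_0\|_\Sigma$ inside $m$ must land exactly as in the definitions of $\rho_\ell$ and $\rho_\upsilon$. I would first verify everything in the isotropic case, where Step 3 reduces to the computation in the introduction, and then track the conditioning factors carefully.
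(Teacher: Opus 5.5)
Your proposal is correct and follows essentially the same route as the paper, which likewise rewrites the pairing as $(1-\Psi^2)\langle \theta^*, -\nabla\loss(\theta_0)\rangle_\Sigma + \Psi\,\langle \tilde\theta_0, \on{Proj}^\Sigma_{(\theta^*)^\perp}\nabla\loss(\theta_0)\rangle_\Sigma$ (equivalent to your splitting of $w$), bounds the two brackets below by $\beta\rho_\ell$ and $\beta\rho_\upsilon$ by conditioning on $\psi^\top\bm{x}$ and $\theta_0^\top\bm{x}$, and concludes with $\Psi(\theta_0)\ge 0$ and $\|\theta_0\|_\Sigma \le \lambdamax^{1/2}\|\theta_0\|$. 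On the bookkeeping you flag: if $m_{\mu,\beta}$ is normalized by the Euclidean $\|\theta_0\|$ as in the overview, the factor $\|\theta_0\|/\|\theta_0\|_\Sigma$ you found really does survive and does not cancel in general; the paper handles this only implicitly, by identifying $\E[\sigma_\beta(\theta_0^\top\bm{x})\,\tilde\theta_0^\top\bm{x}]$ with $m_{\mu,\beta}(\theta_0)$, i.e., normalizing by $\|\theta_0\|_\Sigma$ (the two agree when $\Sigma = I$), and under that reading your computation closes exactly; both arguments also tacitly need $\cos_\Sigma(\Sigma\theta^*,\psi)\ge 0$ for the step $\langle\Sigma\theta^*,\psi\rangle_\Sigma \ge \lambdamin\cos_\Sigma(\Sigma\theta^*,\psi)$.
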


\begin{proof}
    By \Cref{lem:cosine-grad-hess}, \begin{align*}
        \left\langle \nabla \Psi(\theta_0), - \nabla \loss(\theta_0)\right\rangle_I = \frac{1}{\|\theta_0\|_\Sigma} \left\langle \on{Proj}_{(\theta_0)^\perp}^\Sigma(\theta^*), -\nabla \loss(\theta_0)\right\rangle_\Sigma.
    \end{align*}
    Decomposing $-\nabla \loss(\theta_0)$ into it's orthogonal projection along $\theta^*$ (with respect to $\Sigma$), and letting $\tilde\theta_0 = \theta_0/\|\theta_0\|_\Sigma$, one obtains \begin{align}\label{eq:1}
        \left\langle \on{Proj}_{(\theta_0)^\perp}^\Sigma(\theta^*), -\nabla \loss(\theta_0)\right\rangle_\Sigma &= -\left\langle \theta^*, \on{Proj}_{(\theta_0)^\perp}^\Sigma \nabla \loss(\theta_0)\right\rangle_\Sigma\\
        &= \langle \tilde{\theta}_0, \nabla \loss(\theta_0)\rangle_{\Sigma} \langle \theta^*, \tilde{\theta}_0\rangle_\Sigma - \langle \theta^*, \nabla \loss(\theta_0)\rangle_\Sigma.\nonumber
    \end{align}
    Expanding $\nabla \loss(\theta_0)$ about $\theta^*$, we have \begin{align*}
        \langle \theta^*, \nabla \loss(\theta_0)\rangle_\Sigma = \langle \tilde{\theta}_0, \theta^*\rangle_\Sigma \langle \theta^*, \nabla \loss(\theta_0)\rangle_\Sigma + \left\langle \tilde{\theta}_0, \on{Proj}_{(\theta^*)^\perp}^\Sigma \nabla \loss(\theta_0)\right\rangle_\Sigma.
    \end{align*}
    Hence, \begin{align*}
        \left\langle \on{Proj}_{(\theta_0)^\perp}^\Sigma(\theta^*), -\nabla \loss(\theta_0)\right\rangle_\Sigma = (1-\cos_\Sigma(\theta_0, \theta^*)^2 )\langle \theta^*, - \nabla \loss(\theta_0)\rangle_\Sigma + \cos_\Sigma(\theta_0, \theta^*) \left\langle \tilde{\theta}_0, \on{Proj}_{(\theta^*)^\perp}^\Sigma \nabla \loss(\theta_0)\right\rangle_\Sigma.
    \end{align*}
    
    First, applying approximate ellipticity, we note that 
    \begin{align*}
        \E\left[ \ind(\psi^\top\bm{x} \geq 0) \cdot \langle \Sigma \theta^*, \bm{x}\rangle_I \right] &= \E\left[ \ind(\psi^\top\bm{x} \geq 0) \cdot \E\left[ \langle \Sigma \theta^*, \bm{x}\rangle_I \,\big|\, \psi^\top\bm{x}\right] \right] \\
        &\geq \left(\frac{(\Sigma \theta^*)^\top \Sigma \psi}{\psi^\top \Sigma \psi}\right) \E[\ind(\psi^\top \bm{x} \geq 0) \psi^\top \bm{x}] - \eps\|\Sigma \theta^*\|_{\Sigma}.
    \end{align*}
    Similarly, 
    \begin{align*}
        \E\left[ \sigma(\theta_0^\top \bm{x}) \langle \Sigma \theta^*, \bm{x}\rangle_I \right] &= \E\left[ \sigma(\theta_0^\top \bm{x}) \cdot \E\left[ \langle \Sigma \theta^*, \bm{x}\rangle_I \,\big|\, \theta_0^\top \bm{x} \right] \right]\\
        &\leq \left(\frac{(\Sigma \theta^*)^\top \Sigma \theta_0}{\theta_0^\top \Sigma \theta_0}\right) \E[\sigma(\theta_0^\top\bm{x}) \theta_0^\top \bm{x}] + \eps\|\Sigma \theta^*\|_{\Sigma}.
    \end{align*}
    Hence, using $\|\psi\|_\Sigma = 1$ and $\|\Sigma \theta^*\|_{\Sigma} \leq \lambdamax \|\theta^*\|_\Sigma = \lambdamax$, \begin{align*}
        \langle \theta^*, -\nabla \loss(\theta_0)\rangle_\Sigma &= \beta \E\left[ \ind(\psi^\top\bm{x} \geq 0) \cdot \langle \Sigma \theta^*, \bm{x}\rangle_I - \sigma(\theta_0^\top\bm{x})\cdot \langle \Sigma\theta^*, \bm{x}\rangle_I \right]\\
        &\overset{(i)}{\geq} \beta \langle \Sigma\theta^*, \psi\rangle_\Sigma \cdot \E\left[ \ind(\psi^\top\bm{x} \geq 0) \cdot \langle \psi, \bm{x}\rangle_I \right] - \beta \frac{\langle \Sigma \theta^*, \theta_0 \rangle_\Sigma}{\langle \theta_0, \theta_0\rangle_\Sigma} \cdot \E\left[ \sigma(\theta_0^\top\bm{x}) \cdot \langle \theta_0,\bm{x}\rangle_I \right]-2\beta\lambdamax\eps\\
        &\overset{(ii)}{=} \frac{\beta}{2} \langle \Sigma\theta^*, \psi\rangle_\Sigma \cdot \E|\psi^\top\bm{x}| - \beta \langle \Sigma \theta^*, \tilde{\theta}_0 \rangle_\Sigma \cdot \E\left[ \sigma(\theta_0^\top\bm{x}) \cdot \langle \tilde{\theta}_0,\bm{x}\rangle_I \right]-2\beta\lambdamax\eps\\
        &\overset{(iii)}{\geq} \frac{\beta \lambdamin}{2} \cos_\Sigma(\Sigma \theta^*, \psi) \cdot \E|\psi^\top\bm{x}| - \beta \langle \Sigma \theta^*, \tilde{\theta}_0 \rangle_\Sigma \cdot m_{\mu, \beta}(\theta_0)-2\beta\lambdamax\eps.
    \end{align*}
    In \emph{(i)}, we used non-negativity of $\sigma$ and the assumption $\langle \theta^*, \theta_0 \rangle_{\Sigma^2} \geq 0$. In \emph{(ii)} we used that $\E \bm{x} = 0$, which gives $\E \ind(\psi^\top\bm{x} \geq 0) \cdot \langle \psi, \bm{x}\rangle_I = \E|\psi^\top\bm{x}|/2$. In \emph{(iii)}, we used $\lambdamin\|\theta^*\|_\Sigma \leq \|\Sigma \theta^*\|_\Sigma$. Using a similar argument, \begin{align*}
        &\left\langle \on{Proj}_{(\theta^*)^\perp}^\Sigma \nabla \loss(\theta_0), \tilde\theta_0 \right\rangle_\Sigma = \left\langle \nabla \loss(\theta_0), \on{Proj}_{(\theta^*)^\perp}^\Sigma \tilde\theta_0 \right\rangle_\Sigma\\
        &\quad \quad \quad \quad = \beta\E\left[\sigma(\theta_0^\top\bm{x})\cdot \left\langle \Sigma \on{Proj}_{(\theta^*)^\perp}^\Sigma \tilde\theta_0, \bm{x}\right\rangle_I - \ind(\psi^\top\bm{x} \geq 0) \cdot \left\langle \Sigma \on{Proj}_{(\theta^*)^\perp}^\Sigma \tilde\theta_0, \bm{x}\right\rangle_I \right]\\
        &\quad \quad \quad \quad \geq \beta \left\langle \Sigma \on{Proj}_{(\theta^*)^\perp}^\Sigma \tilde\theta_0, \tilde{\theta}_0 \right\rangle_\Sigma \cdot m_{\mu, \beta}(\theta_0) - \frac{\beta}{2}\left|\left\langle \Sigma \on{Proj}_{(\theta^*)^\perp}^\Sigma \tilde\theta_0, \psi\right\rangle_\Sigma\right| \cdot \E|\psi^\top\bm{x}| - 2\beta\eps \|\Sigma \on{Proj}_{(\theta^*)^\perp}^{\Sigma} \tilde{\theta}_0 \|_{\Sigma} \\
        &\quad \quad \quad \quad \overset{(iv)}{\geq} \beta \left\langle \Sigma \on{Proj}_{(\theta^*)^\perp}^\Sigma \tilde\theta_0, \tilde{\theta}_0 \right\rangle_\Sigma\cdot m_{\mu, \beta}(\theta_0) -\frac{\beta \lambdamax}{2} |\cos_{\Sigma}(\Sigma \on{Proj}_{(\theta^*)^\perp}^\Sigma \theta_0,\psi)|\cdot \E|\psi^\top\bm{x}| - 2\beta\lambdamax\eps.
    \end{align*}
    In \emph{(iv)}, we used $\|\Sigma \on{Proj}_{(\theta^*)^\perp}^\Sigma \tilde\theta_0\|_\Sigma \leq \lambdamax$. To conclude, use the assumption $\langle \theta^*, \theta_0\rangle_\Sigma \geq 0$ and $\|\theta_0\|_\Sigma \leq \lambdamax^{1/2}\|\theta_0\|$.
\end{proof}

\begin{lemma}\label{lem:ill-conditioned-inevitable}
    Let $\bm{x} \sim \mu$ be $\eps$-elliptical along $\on{span}\{\psi, \theta_0\}$.
    Then provided $\rho > 0$, it holds that \begin{align*}
        \rho \geq \frac{\E |\psi^\top \bm{x}|}{2} &\left( \lambdamin \big(1-\cos_\Sigma^2(\theta^*, \theta_0)\big) \cos_\Sigma(\Sigma \theta^*, \psi) - \lambdamax \cos_\Sigma(\theta^*, \theta_0) \cdot |\cos_\Sigma(\Sigma \on{Proj}_{(\theta^*)^\perp}^\Sigma \theta_0, \psi)| \right)\\
    - &(\lambdamax-\lambdamin)m_{\mu, \beta}(\theta_0) \sum_{E \in \on{Eig}(\Sigma)} \left|\left\langle \on{Proj}_E \theta^*, \on{Proj}_E \tilde{\theta}_0 \right\rangle\right| - 4\lambdamax \eps.
    \end{align*}
\end{lemma}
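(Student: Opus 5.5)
This is a direct algebraic manipulation of the definition of $\rho$ in \cref{eq:main-signal-term}. No probabilistic input is needed beyond what is already built into $\rho_\ell$ and $\rho_\upsilon$. The plan is to expand
\[
\rho = (1-\cos_\Sigma^2(\theta^*,\theta_0))\rho_\ell + \cos_\Sigma(\theta^*,\theta_0)\rho_\upsilon,
\]
keep the two $\E|\psi^\top\bm{x}|$ terms as they are (they are exactly the displayed first line), and gather the $\eps$ terms. The latter give $-2\lambdamax\eps\big[(1-\cos_\Sigma^2)+\cos_\Sigma\big] \geq -4\lambdamax\eps$, using $0\le \cos_\Sigma(\theta^*,\theta_0)\le 1$ from the standing assumption $\langle\theta^*,\theta_0\rangle_\Sigma\ge 0$. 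The real content is the remaining confidence terms, which must be shown to nearly cancel. Write $c:=\cos_\Sigma(\theta^*,\theta_0)=\langle\theta^*,\tilde\theta_0\rangle_\Sigma$ and $m:=m_{\mu,\beta}(\theta_0)\ge 0$. The quantity to control is
\[
m\Big[c\,\big\langle \Sigma\on{Proj}^\Sigma_{(\theta^*)^\perp}\tilde\theta_0,\tilde\theta_0\big\rangle_\Sigma-(1-c^2)\langle\Sigma\theta^*,\tilde\theta_0\rangle_\Sigma\Big].
\]

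The key step is to substitute $\on{Proj}^\Sigma_{(\theta^*)^\perp}\tilde\theta_0=\tilde\theta_0-c\,\theta^*$, using $\|\theta^*\|_\Sigma=1$. This expresses the bracket in terms of $a:=\langle\Sigma\theta^*,\tilde\theta_0\rangle_\Sigma=\theta^{*\top}\Sigma^2\tilde\theta_0$, $b:=\langle\Sigma\tilde\theta_0,\tilde\theta_0\rangle_\Sigma=\tilde\theta_0^\top\Sigma^2\tilde\theta_0$, and $e:=\langle\Sigma\theta^*,\theta^*\rangle_\Sigma=\theta^{*\top}\Sigma^2\theta^*$. The bracket becomes $c\,b - c^2 a -(1-c^2)a = cb - a$. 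In the isotropic case $a=c$ and $b=1$, so this vanishes, which recovers the cancellation of \cref{eq:overview-signal-term-cancellation}.

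For general $\Sigma$, I will decompose along the eigenspaces $E$ of $\Sigma$ with eigenvalues $\lambda_E$. Then $a=\sum_E\lambda_E^2\langle P_E\theta^*,P_E\tilde\theta_0\rangle$, $c=\sum_E\lambda_E\langle P_E\theta^*,P_E\tilde\theta_0\rangle$, and $b=\sum_E\lambda_E^2\|P_E\tilde\theta_0\|^2$. Since $\|\tilde\theta_0\|_\Sigma=1$, we have $\lambdamin\le b\le\lambdamax$. The goal is the lower bound
\[
cb-a\ \ge\ -(\lambdamax-\lambdamin)\sum_E|\langle P_E\theta^*,P_E\tilde\theta_0\rangle|,
\]
which, after multiplying by $m\ge0$, gives the second line of the statement. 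Writing $p_E:=\langle P_E\theta^*,P_E\tilde\theta_0\rangle$, we have $cb-a=\sum_E\lambda_E(b-\lambda_E)p_E$, and $|b-\lambda_E|\le\lambdamax-\lambdamin$. This yields $|cb-a|\le\lambdamax(\lambdamax-\lambdamin)\sum_E|p_E|$.

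The main obstacle is that this naive bound carries an extra factor $\lambdamax$ relative to the claimed bound. I would first try to absorb it through the normalization $\|\theta^*\|_\Sigma=\|\tilde\theta_0\|_\Sigma=1$. Failing that, I would use the sign information $a\ge 0$ and $c\ge0$: subtract $\lambdamin c\,(\cdot)$ and $\lambdamax c\,(\cdot)$ comparisons and bound $c b - a$ from below by $\lambdamin c - \lambdamax c$-type expressions, each controlled by $(\lambdamax-\lambdamin)\sum|p_E|$. If only the bound with the extra $\lambdamax$ comes out, I would note that this only changes the constant.

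Finally, I will combine the three pieces: the two $\E|\psi^\top\bm x|$ terms, the $m(cb-a)$ term, and the $\eps$ terms. This uses $(1-c^2)\ge0$ and $c\ge0$ to keep the inequalities aligned, and the result is the stated lower bound.
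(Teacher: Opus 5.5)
Your reduction is the same as the paper's. It also writes $\rho \ge A - m(a - cb) - 4\lambdamax\eps$, with $A$ the first displayed line. It expands $a - cb = \sum_E \lambda_E(\lambda_E - b)\,p_E$ over the eigenspaces of $\Sigma$ and uses $\lambdamin \le b \le \lambdamax$. Your cancellation algebra and your $\eps$ bookkeeping are correct.

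The gap is the last step, and it cannot be closed. The extra factor you found is genuine, and neither the normalization $\|\theta^*\|_\Sigma = \|\tilde\theta_0\|_\Sigma = 1$ nor the signs $a, c \ge 0$ can remove it. Scaling shows why. Replace $\bm{x}$ by $s^{1/2}\bm{x}$, and $\theta^*, \psi, \theta_0$ by $s^{-1/2}$ times themselves. This changes no classifier, keeps the eigenspaces, and preserves all normalizations. Yet it multiplies $a - cb$ by $s$ while leaving $(\lambdamax - \lambdamin)\sum_E |p_E|$ unchanged.

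Here is a concrete counterexample. Take $\mu = \CN(0, \Sigma)$ (so $\eps = 0$), $\Sigma = \on{diag}(1,4)$, $\theta^* = \psi = (0, 1/2)$ and $\tilde\theta_0 = (\sqrt{0.96}, 0.1)$.
\begin{itemize}
\item Then $c = 0.2$, $a = 0.8$ and $b = 1.12$, so $a - cb = 0.576$.
\item On the other side, $(\lambdamax - \lambdamin)\sum_E |p_E| = 3 \cdot 0.05 = 0.15$.
\item Here $\cos_\Sigma(\Sigma\theta^*, \psi) = 1$ and $\cos_\Sigma(\Sigma \on{Proj}^\Sigma_{(\theta^*)^\perp}\theta_0, \psi) = 0$, so $\rho = A - 0.576\, m$ with $A = 0.48\,\E|\psi^\top \bm{x}| \approx 0.383$.
\item Also $0 < m \le 0.41$, because $0 \le (\sigma_\beta(z) - 1/2)\,z \le |z|/2$.
\end{itemize}
Hence $\rho > 0$ and both sign conditions hold, yet the lemma asserts $\rho \ge A - 0.15\, m$. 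Since $\psi$ enters only through $A$, any $\psi$ near $\theta^*$ works as well. So the displayed inequality fails in general once $\lambdamax > 1$.

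The paper's proof gets past this point only through an error. It writes the term as $m \sum_E (\lambda_E - b)\, {\theta^*}^\top \Sigma^{1/2} \on{Proj}_E \Sigma^{1/2} \tilde\theta_0$. Then, ``since the eigenspace projection commutes with $\Sigma$,'' it bounds this by $(\lambdamax - \lambdamin) m \sum_E |p_E|$. But commuting gives ${\theta^*}^\top \Sigma^{1/2} \on{Proj}_E \Sigma^{1/2} \tilde\theta_0 = \lambda_E\, p_E$, so the factor $\lambda_E$ is silently dropped. Your ``naive'' bound is the correct one.

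What the argument actually proves is
\[
\rho \ge A - (\lambdamax - \lambdamin)\, m \sum_{E \in \on{Eig}(\Sigma)} \big| \langle \on{Proj}_E \theta^*, \on{Proj}_E \tilde\theta_0 \rangle_\Sigma \big| - 4\lambdamax\eps.
\]
This implies your version, with $\lambdamax(\lambdamax - \lambdamin)$ in front of the Euclidean sum. It also implies the displayed statement when $\lambdamax \le 1$.

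Your fallback is therefore the right resolution. Present it as a corrected statement, not as the displayed lemma ``up to a constant.'' As written, the proposal does not prove the lemma, and no proof can.
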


\begin{proof}
    Take \begin{align*}
        A &:= \frac{\E |\psi^\top \bm{x}|}{2} \left(\lambdamin \big(1-\cos_\Sigma^2(\theta^*, \theta_0)\big) \cos_\Sigma(\Sigma \theta^*, \psi) - \lambdamax \cos_\Sigma(\theta^*, \theta_0) |\cos_\Sigma(\Sigma \on{Proj}_{(\theta^*)^\perp}^\Sigma \theta_0, \psi)| \right),\\
        C &:= m_{\mu, \beta}(\theta_0) \left( \big(1-\cos_\Sigma^2(\theta^*, \theta_0)\big) \langle \Sigma \theta^*, \tilde{\theta}_0\rangle_\Sigma - \cos_\Sigma(\theta^*, \theta_0) \left\langle \Sigma \on{Proj}_{(\theta^*)^\perp}^\Sigma \tilde{\theta}_0, \tilde{\theta}_0 \right\rangle_\Sigma \right),
    \end{align*}
    and note that by rearranging terms, \begin{align*}
        \rho \geq A -C - 4\lambdamax \eps.
    \end{align*}
    Writing $\on{Proj}_{(\theta^*)^\perp}^\Sigma = I - \langle \theta^*, \cdot\rangle_\Sigma \theta^*$, we observe
    \begin{align*}
        C &= m_{\mu, \beta}(\theta_0) \left( \langle \theta^*, \tilde{\theta}_0 \rangle_{\Sigma^2} - \langle \theta^*, \tilde{\theta}_0 \rangle_\Sigma \langle \tilde{\theta}_0, \tilde{\theta}_0\rangle_{\Sigma^2} \right)\\
        &= m_{\mu, \beta}(\theta_0) \cdot (\Sigma^{1/2} \theta^*)^\top \left( \Sigma - (\Sigma^{1/2} \tilde{\theta}_0)^\top \Sigma (\Sigma^{1/2} \tilde{\theta}_0) I \right) (\Sigma^{1/2} \tilde{\theta}_0)\\
        &= m_{\mu, \beta}(\theta_0) \sum_{E \in \on{Eig}(\Sigma)} \left( \lambda_E - (\Sigma^{1/2} \tilde{\theta}_0)^\top \Sigma (\Sigma^{1/2} \tilde{\theta}_0)\right) {\theta^*}^\top \Sigma^{1/2} \on{Proj}_E \Sigma^{1/2} \tilde{\theta}_0.
    \end{align*}
    Since $\|\Sigma^{1/2} \tilde{\theta}_0\|_I = 1$, \begin{align*}
        \lambdamin \leq (\Sigma^{1/2} \tilde{\theta}_0)^\top \Sigma (\Sigma^{1/2} \tilde{\theta}_0) \leq \lambdamax.
    \end{align*} Moreover, since the eigenspace projection commutes with $\Sigma$, \begin{align*}
        |C| \leq (\lambdamax - \lambdamin) \cdot m_{\mu, \beta}(\theta_0) \sum_{E \in \on{Eig}(\Sigma)} \left| \left\langle \on{Proj}_E \theta^*, \on{Proj}_E \tilde{\theta}_0 \right\rangle \right|.
    \end{align*}
    The statement follows.
\end{proof}

\begin{lemma}\label{lem:first-order-upper-bound}
    Let $\bm{x} \sim \mu$ be $\eps$-elliptical along $\on{span}\{\psi, \theta_0\}$.
    Then provided $\rho > 0$, it holds that \begin{align*}
        \|\nabla \loss(\theta_0)\| \leq \beta\lambdamax^{1/2}\left(\frac{3}{2} + 2\eps\right).
    \end{align*}
\end{lemma}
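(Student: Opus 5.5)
We bound the Euclidean norm of the population gradient by duality: $\|\nabla\loss(\theta_0)\| = \sup_{\|u\|=1} \langle u, \nabla \loss(\theta_0)\rangle_I$. For a fixed unit $u$ we write
\begin{align*}
    \langle u, \nabla\loss(\theta_0)\rangle_I = \beta\,\E\left[\sigma_\beta(\theta_0^\top\bm{x})\, u^\top\bm{x}\right] - \beta\,\E\left[\ind(\psi^\top\bm{x}\geq 0)\, u^\top\bm{x}\right],
\end{align*}
and bound each expectation separately, exactly as in the proof of \Cref{lem:first-order-lower-bound}. The natural tool is to condition on a one-dimensional projection and invoke $\eps$-ellipticity along $\on{span}\{\psi,\theta_0\}$. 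This is allowed because both $\psi$ and $\theta_0$ lie in that subspace. Note that the hypothesis $\rho>0$ plays no real role here.

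\textbf{Teacher term.} We condition on $\psi^\top\bm{x}$. By the equivalent form of \Cref{def:ellipticity},
\begin{align*}
    \E\left[u^\top\bm{x}\mid\psi^\top\bm{x}\right] = \frac{u^\top\Sigma\psi}{\psi^\top\Sigma\psi}\,\psi^\top\bm{x} + r, \qquad |r|\leq \eps\|u\|_\Sigma.
\end{align*}
Since $\|\psi\|_\Sigma=1$ and $\E\bm{x}=0$, we get
\begin{align*}
    \E\left[\ind(\psi^\top\bm{x}\geq0)\,u^\top\bm{x}\right] = \tfrac12\langle u,\psi\rangle_\Sigma\,\E|\psi^\top\bm{x}| + O(\eps\|u\|_\Sigma).
\end{align*}
We then use Cauchy--Schwarz in $\Sigma$ together with $\E|\psi^\top\bm{x}|\leq(\E(\psi^\top\bm{x})^2)^{1/2}=\|\psi\|_\Sigma=1$ and $\|u\|_\Sigma\leq\lambdamax^{1/2}$. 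This bounds the teacher term by $\lambdamax^{1/2}(\tfrac12+\eps)$.

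\textbf{Student term.} We condition on $\theta_0^\top\bm{x}$ in the same way, with the same remainder $\eps\|u\|_\Sigma$. The main part is
\begin{align*}
    \langle u,\tilde\theta_0\rangle_\Sigma\,\E\left[\sigma_\beta(\theta_0^\top\bm{x})\,\tilde\theta_0^\top\bm{x}\right].
\end{align*}
Since $0\leq\sigma_\beta\leq1$, the expectation is at most $\E|\tilde\theta_0^\top\bm{x}|\leq1$, so this part is at most $\|u\|_\Sigma\leq\lambdamax^{1/2}$. An alternative is to write it via $m_{\mu,\beta}$ and bound that by $1$ after normalizing. Either way, the student term is at most $\lambdamax^{1/2}(1+\eps)$.

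Combining the two terms gives $\beta\lambdamax^{1/2}(\tfrac32+2\eps)$, which is the claimed constant. The only delicate point is bookkeeping: the ellipticity remainder must enter once per term (hence $2\eps$), and $\sigma_\beta$ must be used only as a bounded nonnegative weight. Conditioning is legitimate because $\sigma_\beta(\theta_0^\top\bm{x})$ and $\ind(\psi^\top\bm{x}\ge0)$ are measurable with respect to the conditioning variable, so the tower property applies. Finally, taking the supremum over unit $u$ finishes the proof.
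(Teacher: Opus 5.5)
Your proposal is correct and follows the paper's proof essentially line by line. Both arguments dualize over unit $u$, condition on $\psi^\top\bm{x}$ and on $\theta_0^\top\bm{x}$, apply $\eps$-ellipticity once per term to get the $2\eps\|u\|_\Sigma$ remainder, and then use $\E|\psi^\top\bm{x}|\le 1$ and $\|u\|_\Sigma\le\lambdamax^{1/2}$. The only small difference is in the student term: you bound $\E[\sigma_\beta(\theta_0^\top\bm{x})\,\tilde\theta_0^\top\bm{x}]\le\E|\tilde\theta_0^\top\bm{x}|\le 1$ directly rather than going through $m_{\mu,\beta}(\theta_0)\le 1$, which is slightly cleaner because it avoids the question of whether $m_{\mu,\beta}$ is normalized by $\|\theta_0\|$ or by $\|\theta_0\|_\Sigma$ when $\Sigma\neq I$.
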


\begin{proof}
    It suffices to bound $|\langle u, \nabla \loss(\theta_0)\rangle_I|$ for $\|u\|=1$. As above, condition and apply approximate ellipticity. 
    \begin{align*}
        |\langle u, \nabla \loss(\theta_0)\rangle_I| & \leq \beta \left| \E \left[ \ind(\psi^\top\bm{x} \geq 0) \cdot \E\left[ \langle u, \bm{x}\rangle_I \,\Big|\, \psi^\top\bm{x} \right] \right]\right| + \beta \left| \E \left[ \sigma(\theta_0^\top\bm{x}) \cdot \E\left[ \langle u, \bm{x}\rangle_I \,\Big|\, \theta_0^\top\bm{x} \right] \right]\right|\\
        &\leq \frac{\beta}{2} \left| \langle u, \psi\rangle_\Sigma \right| \cdot \E|\psi^\top \bm{x}| + \beta \frac{|\langle u, \theta_0\rangle_\Sigma|}{\|\theta_0\|_\Sigma} \cdot m_{\mu, \beta}(\theta_0) + 2\beta\eps \|u\|_{\Sigma}\\
        &\leq \frac{\beta}{2} \lambdamax^{1/2} \E|\psi^\top\bm{x}| + \beta \lambdamax^{1/2} m_{\mu, \beta}(\theta_0) + 2\beta\lambdamax^{1/2} \eps.
    \end{align*}
    The last line is by $\|u\|_\Sigma \leq \lambdamax^{1/2}\|u\|$. To conclude the proof, recall that $m_{\mu, \beta} \leq 1$ and $\E |\psi^\top\bm{x}| \leq (\E (\psi^\top\bm{x})^2)^{1/2} = \|\psi\|_{\Sigma} = 1$.
\end{proof}

\begin{lemma}\label{lem:rho-upper-bound}
    Let $\bm{x} \sim \mu$ be $\eps$-elliptical along $\on{span}\{\psi, \theta_0\}$.
    Then provided $\rho > 0$, it holds that \begin{align*}
       \rho \leq \frac{3}{2}\lambdamax.
    \end{align*}
\end{lemma}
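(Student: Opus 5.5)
The plan is to bound each of $\rho_\ell$ and $\rho_\upsilon$ from above and then use that $\rho$ is a convex-type combination of them. Write $c := \cos_\Sigma(\theta^*, \theta_0) \in [0,1]$; this is nonnegative by the standing assumption $\langle \theta^*, \theta_0\rangle_\Sigma \geq 0$ and at most $1$ by Cauchy--Schwarz. The weights $1-c^2$ and $c$ are then both nonnegative. Dropping the negative terms (those involving $\eps$ and the subtracted correlation terms) can only increase each quantity, so it suffices to bound the positive parts.

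\textbf{Bounding $\rho_\ell$.} We have
\[
\rho_\ell \leq \tfrac{\lambdamin}{2}\E|\psi^\top\bm{x}| \cdot \cos_\Sigma(\Sigma\theta^*, \psi) + m_{\mu,\beta}(\theta_0)\,|\langle \Sigma\theta^*, \tilde\theta_0\rangle_\Sigma|.
\]
As in the proof of \Cref{lem:first-order-upper-bound}, we use $\E|\psi^\top\bm{x}| \leq \|\psi\|_\Sigma = 1$ and $|\cos_\Sigma| \leq 1$. We also use $m_{\mu,\beta} \leq 1$, which follows from Cauchy--Schwarz: $\E[\sigma\,\theta_0^\top\bm{x}] \leq (\E(\theta_0^\top \bm{x})^2)^{1/2} = \|\theta_0\|_\Sigma$ once $m$ is normalized by the $\Sigma$-norm. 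Finally, $|\langle \Sigma\theta^*, \tilde\theta_0\rangle_\Sigma| \leq \|\Sigma\theta^*\|_\Sigma \|\tilde\theta_0\|_\Sigma \leq \lambdamax$. Together these give $\rho_\ell \leq \lambdamin/2 + \lambdamax \leq \tfrac32\lambdamax$.

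\textbf{Bounding $\rho_\upsilon$.} We have
\[
\rho_\upsilon \leq m_{\mu,\beta}(\theta_0)\,\langle \Sigma \on{Proj}^\Sigma_{(\theta^*)^\perp}\tilde\theta_0, \tilde\theta_0\rangle_\Sigma \leq \|\Sigma\|_\Sigma \,\|\on{Proj}^\Sigma_{(\theta^*)^\perp}\tilde\theta_0\|_\Sigma \leq \lambdamax.
\]
Here we use that $\Sigma$ is self-adjoint with respect to $\langle\cdot,\cdot\rangle_\Sigma$ and has operator norm $\lambdamax$ there, and that the $\Sigma$-orthogonal projection is a contraction.

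\textbf{Combining.} It remains to show that $(1-c^2)a + cb \leq \tfrac32\lambdamax$ whenever $a \leq \tfrac32\lambdamax$ and $b \leq \lambdamax$. This is the main obstacle: the naive sum of the weights, $1 - c^2 + c$, can reach $5/4$. The fix is to treat the two terms jointly rather than separately. In the expansion from the proof of \Cref{lem:ill-conditioned-inevitable}, the $m$-dependent parts of the two terms combine into
\[
-C = m_{\mu,\beta}(\theta_0)\big(c\,\langle\tilde\theta_0,\tilde\theta_0\rangle_{\Sigma^2} - \langle\theta^*,\tilde\theta_0\rangle_{\Sigma^2}\big).
\]
I would bound this by $\lambdamax c - \langle\theta^*,\tilde\theta_0\rangle_{\Sigma^2} \leq \lambdamax c$, using the standing assumption $\langle\theta^*,\theta_0\rangle_{\Sigma^2} \geq 0$. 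The $\psi$-terms contribute at most $\tfrac{\lambdamin}{2}(1-c^2)$, since the other $\psi$-term is nonpositive. Hence
\[
\rho \leq \tfrac{\lambdamin}{2}(1-c^2) + \lambdamax c \leq \lambdamax\big(\tfrac12 + c - \tfrac{c^2}{2}\big) \leq \lambdamax,
\]
where the last step uses that $c - c^2/2 \leq 1/2$ on $[0,1]$. This is comfortably below $\tfrac32\lambdamax$. If the joint computation runs into trouble, the fallback is the cruder estimate $\rho \leq \tfrac54 \cdot \tfrac32\lambdamax$, and I would then check whether a sharper per-term bound such as $\rho_\ell \leq \lambdamax$ holds, which would restore the constant $\tfrac32$.
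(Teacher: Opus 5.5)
Your final argument is correct and even gives the sharper bound $\rho \leq \lambdamax$. However, the obstacle you run into is self-inflicted, and the paper's proof avoids it with one sign observation.

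When bounding $\rho_\ell$ you replaced the $m$-term by its absolute value. But $\langle \Sigma\theta^*, \tilde\theta_0\rangle_\Sigma = \langle \theta^*, \tilde\theta_0\rangle_{\Sigma^2} \geq 0$ by the standing assumption. Also $m_{\mu,\beta}(\theta_0) \geq 0$, because $\E[\sigma_\beta(\theta_0^\top \bm{x})\,\theta_0^\top\bm{x}] = \E[(\sigma_\beta(\theta_0^\top\bm{x}) - \tfrac12)\,\theta_0^\top\bm{x}] \geq 0$. So that term is nonpositive and $\rho_\ell \leq \lambdamin/2$.

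This is exactly the paper's proof. With $\rho_\ell \leq \lambdamin/2$, your bound $\rho_\upsilon \leq \lambdamax$, and weights $1-c^2, c \in [0,1]$, one gets $\rho \leq \lambdamin/2 + \lambdamax \leq \tfrac32\lambdamax$ at once. Your detour through the identity $\rho = A - C - 2\lambdamax\eps(1-c^2+c)$ lands on the same quantity $\tfrac{\lambdamin}{2}(1-c^2) + \lambdamax c$ that the per-term bounds give. So the joint treatment buys nothing beyond your final observation $\tfrac12 + c - \tfrac{c^2}{2} \leq 1$. That observation applies equally to the paper's route and improves the constant from $\tfrac32$ to $1$.

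One step needs repair. Write $q = \langle\tilde\theta_0,\tilde\theta_0\rangle_{\Sigma^2}$ and $s = \langle\theta^*,\tilde\theta_0\rangle_{\Sigma^2}$. Your intermediate inequality $m(cq - s) \leq \lambdamax c - s$ fails when $cq < s$ and $m < 1$, for instance when $c = 0 < s$, which is compatible with the standing assumptions. The conclusion $-C \leq \lambdamax c$ is nevertheless true, since $m(cq - s) \leq m\,c\,q \leq \lambdamax c$ using $0 \leq m \leq 1$, $s \geq 0$, $c \geq 0$ and $q \leq \lambdamax$.

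Finally, your fallback estimate $\tfrac54 \cdot \tfrac32 \lambdamax$ would not have proved the statement, but it is not needed.
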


\begin{proof}
    Recalling the definitions from above, \begin{align*}
        \rho_\ell \leq \frac{\lambdamin}{2}.
    \end{align*}
    Also, \begin{align*}
        \rho_\upsilon \leq \left| \left\langle \Sigma \on{Proj}_{(\theta^*)^\perp}^\Sigma \tilde{\theta}_0, \tilde{\theta}_0\right\rangle_\Sigma \right|
        \leq \|\Sigma \|_\Sigma \left\| \on{Proj}_{(\theta^*)^\perp}^\Sigma \tilde{\theta}_0\right\|_\Sigma \leq \lambdamax.
    \end{align*}
    The claim follows by definition of $\rho$.
\end{proof}

\begin{lemma}\label{lem:grad-norm-lower-bound}
    Let $\bm{x} \sim \mu$ be $\eps$-elliptical along $\on{span}\{\psi, \theta_0\}$.
    Then provided $\rho > 0$, it holds that \begin{align*}
       \|\nabla \loss(\theta_0)\| \geq \frac{\beta\lambdamin^{1/2}\rho}{\lambdamax}.
    \end{align*}
\end{lemma}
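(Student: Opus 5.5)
We will derive the bound from \Cref{lem:first-order-lower-bound} together with Cauchy--Schwarz, which requires an upper bound on $\|\nabla\Psi(\theta_0)\|$. By \Cref{lem:first-order-lower-bound},
\begin{align*}
    \frac{\beta\rho}{\lambdamax^{1/2}\|\theta_0\|} \leq \left\langle \nabla \Psi(\theta_0), -\nabla\loss(\theta_0)\right\rangle_I \leq \|\nabla\Psi(\theta_0)\| \cdot \|\nabla\loss(\theta_0)\|,
\end{align*}
so it suffices to show $\|\nabla\Psi(\theta_0)\| \leq \lambdamax^{1/2}/(\lambdamin^{1/2}\|\theta_0\|)$. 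Rearranging then gives exactly $\|\nabla\loss(\theta_0)\| \geq \beta\lambdamin^{1/2}\rho/\lambdamax$.

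To bound the gradient of $\Psi(\theta) = \langle \theta, \theta^*\rangle_\Sigma/(\|\theta\|_\Sigma\|\theta^*\|_\Sigma)$, we use the formula underlying the first step of the proof of \Cref{lem:first-order-lower-bound} (from \Cref{lem:cosine-grad-hess}):
\begin{align*}
    \langle \nabla\Psi(\theta_0), w\rangle_I = \frac{1}{\|\theta_0\|_\Sigma}\left\langle \on{Proj}^\Sigma_{(\theta_0)^\perp}\theta^*, w\right\rangle_\Sigma \quad \text{for all } w.
\end{align*}
Hence $\nabla\Psi(\theta_0) = \Sigma\, \on{Proj}^\Sigma_{(\theta_0)^\perp}\theta^*/\|\theta_0\|_\Sigma$. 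We now chain three norm bounds. First, $\|\Sigma v\|_I \leq \lambdamax^{1/2}\|v\|_\Sigma$, since $\|\Sigma v\|^2 = (\Sigma^{1/2}v)^\top\Sigma(\Sigma^{1/2}v)$. Second, the $\Sigma$-orthogonal projection is a contraction in $\|\cdot\|_\Sigma$, so $\|\on{Proj}^\Sigma_{(\theta_0)^\perp}\theta^*\|_\Sigma \leq \|\theta^*\|_\Sigma = 1$. Third, $\|\theta_0\|_\Sigma \geq \lambdamin^{1/2}\|\theta_0\|$. Together these give $\|\nabla\Psi(\theta_0)\| \leq \lambdamax^{1/2}/(\lambdamin^{1/2}\|\theta_0\|)$, as needed.

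The main point of care is the identification of $\nabla\Psi$ from the Euclidean-pairing formula, so that the correct factor of $\Sigma$ appears. If \Cref{lem:cosine-grad-hess} is stated instead via the Riesz representative in the $\Sigma$ inner product, the same chain still applies, with at most a change in which eigenvalue powers appear. We would verify that the exponents combine to exactly $\lambdamin^{1/2}/\lambdamax$: the factor $\lambdamax^{1/2}$ from \Cref{lem:first-order-lower-bound} and the factor $\lambdamax^{1/2}$ from $\|\Sigma v\|_I$ combine to give $\lambdamax$. The hypothesis $\rho > 0$ enters only through the application of \Cref{lem:first-order-lower-bound}.
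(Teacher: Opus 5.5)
Your proposal is correct and follows the paper's proof essentially step for step. You apply Cauchy--Schwarz to the inequality of Lemma~\ref{lem:first-order-lower-bound}, and then bound $\|\nabla\Psi(\theta_0)\| = \|\Sigma\, \on{Proj}^\Sigma_{(\theta_0)^\perp}\theta^*\|/\|\theta_0\|_\Sigma$ by $\lambdamax^{1/2}/(\lambdamin^{1/2}\|\theta_0\|)$ using the same three estimates: $\|\Sigma v\| \leq \lambdamax^{1/2}\|v\|_\Sigma$, the projection being a $\|\cdot\|_\Sigma$-contraction, and $\|\theta_0\|_\Sigma \geq \lambdamin^{1/2}\|\theta_0\|$. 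Your hedge about a Riesz representative is unnecessary, since Lemma~\ref{lem:cosine-grad-hess} states the Euclidean gradient with the factor $\Sigma$ exactly as you used it.
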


\begin{proof}
    By \cref{lem:first-order-lower-bound}, \begin{align*}
        \|\nabla\loss(\theta_0)\| \geq \frac{\beta\rho}{\lambdamax^{1/2} \|\theta_0\| \|\nabla\Psi(\theta_0)\|}.
    \end{align*}
    Next, note that $\|\theta_0\|_\Sigma \geq \lambdamin^{1/2}\|\theta_0\|$, and by \cref{lem:cosine-grad-hess}, \begin{align*}
        \left\|\Sigma \on{Proj}_{(\theta_0)^\perp}^\Sigma(\theta^*)\right\|^2 &= {\theta^*}^\top (\on{Proj}_{(\theta_0)^\perp}^\Sigma)^\top \Sigma^2 (\on{Proj}_{(\theta_0)^\perp}^\Sigma) \theta^*\\
        &\leq \lambdamax \left\|\on{Proj}_{(\theta_0)^\perp}^\Sigma (\theta^*)\right\|_\Sigma^2\\
        &\leq \lambdamax.
    \end{align*}
    Hence, \begin{align*}
        \|\nabla \Psi(\theta_0)\| = \frac{1}{\|\theta_0\|_\Sigma} \left\|\Sigma \on{Proj}_{(\theta_0)^\perp}^\Sigma (\theta^*)\right\| \leq \frac{\lambdamax^{1/2}}{\lambdamin^{1/2}\|\theta_0\|}.
    \end{align*}
    And hence, \begin{align*}
        \|\nabla\loss(\theta_0)\| \geq \frac{\beta \lambdamin^{1/2}\rho}{\lambdamax}.
    \end{align*}
\end{proof}

We may now state the idealized gradient flow guarantee For technical reasons, we define a scaling quantity \begin{align*}
    \gamma &:= \frac{\lambdamax\beta^2}{\lambdamax\beta^2 + 4},
\end{align*} which will be dispensed with when necessary. Fix $\tau > 0$ to be chosen later, and consider the differentiable vector field $-\nabla \loss : \R^d \to \R^d$ with initial position $\theta_0$. By the global extension of Picard-Lindel\"of, there exists is a unique solution $[0, \tau] \to \R^d$ to this initial value problem, and we denote these dynamics $(\Theta_s)_{s \geq 0}$. So we have $\Theta_0 = \theta_0$ and velocity vector $\dot \Theta_s = -\nabla \loss(\Theta_s)$. 

\begin{lemma}\label{lem:gradient-flow}
    Let $\bm{x} \sim \mu$ be $\eps$-elliptical along $\on{span}\{\psi, \theta_0\}$. Suppose $\rho > 0$. Then by taking \begin{align*}
        \tau := &\frac{4}{\lambdamax\beta^2} \log\left(\frac{\gamma\|\theta_0\|}{\|\nabla \loss(\theta_0)\|} + 1\right) \,\,\land\,\, \frac{\beta\rho \gamma^2 \|\theta_0\|}{\lambdamax^{1/2} \left(\frac{2}{\sqrt{3}}\frac{\lambdamax}{\lambdamin} + \frac{\beta^2}{4}\frac{\lambdamax^{3/2}}{\lambdamin^{1/2}}\right) \left(\|\nabla \loss(\theta_0)\| + \gamma\|\theta_0\|\right)^2},
    \end{align*}
    it holds that \begin{align}\label{eq:gradient-flow-advantage}
        \Psi(\Theta_\tau) - \Psi(\Theta_0) &\geq \frac{2\rho}{\lambdamax^{3/2} \beta \|\theta_0\|} \log\left(\frac{\gamma \|\theta_0\|}{\|\nabla \loss(\theta_0)\|} + 1\right) \\
        &\land \frac{\beta^2 \rho^2 \gamma^2}{2\lambdamax \left(\frac{2}{\sqrt{3}} \frac{\lambdamax}{\lambdamin} + \frac{\beta^2}{4} \frac{\lambdamax^{3/2}}{\lambdamin^{1/2}} \right)\left(\|\nabla\loss(\theta_0)\| + \gamma \|\theta_0\|\right)^2}.\nonumber
    \end{align}
\end{lemma}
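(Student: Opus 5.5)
Write $\phi(s) := \Psi(\Theta_s)$. The plan is to obtain a lower bound on $\phi'$ and integrate it, rather than doing a full second-order Taylor expansion. By the chain rule, $\phi'(s) = \langle \nabla\Psi(\Theta_s), -\nabla\loss(\Theta_s)\rangle_I$. At $s=0$, \cref{lem:first-order-lower-bound} gives $\phi'(0) \geq \beta\rho/(\lambdamax^{1/2}\|\theta_0\|)$. We then control how fast $\phi'$ can decay on $[0,\tau]$ via
\[
|\phi'(s)-\phi'(0)| \leq \|\nabla\Psi(\Theta_s)-\nabla\Psi(\theta_0)\|\,\|\nabla\loss(\Theta_s)\| + \|\nabla\Psi(\theta_0)\|\,\|\nabla\loss(\Theta_s)-\nabla\loss(\theta_0)\|.
\]
By smoothness, $\nabla^2\loss \preceq \frac{\beta^2}{4}\Sigma$, so $\nabla\loss$ is $\frac{\beta^2\lambdamax}{4}$-Lipschitz. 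For the cosine Hessian we use \cref{lem:cosine-grad-hess}: on a region where $\|\Theta_s\| \gtrsim \|\theta_0\|$, it is bounded by roughly $\frac{\lambdamax}{\lambdamin}\|\Theta_s\|^{-2}$, and the $2/\sqrt3$ constant presumably comes from this bound. As in the proof of \cref{lem:grad-norm-lower-bound}, $\|\nabla\Psi(\theta_0)\| \leq \lambdamax^{1/2}/(\lambdamin^{1/2}\|\theta_0\|)$.

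The key a priori estimate is a Grönwall bound on the trajectory. Set $g(s) = \|\nabla\loss(\Theta_s)\|$ and $D(s) = \|\Theta_s-\theta_0\|$. Then $D' \leq g$, and by Lipschitzness $g \leq \|\nabla\loss(\theta_0)\| + \frac{\lambdamax\beta^2}{4}D$. Hence
\[
D(s) \leq \frac{4\|\nabla\loss(\theta_0)\|}{\lambdamax\beta^2}\bigl(e^{\lambdamax\beta^2 s/4}-1\bigr).
\]
The first term in the minimum defining $\tau$ is chosen exactly so that $e^{\lambdamax\beta^2\tau/4}-1 \leq \gamma\|\theta_0\|/\|\nabla\loss(\theta_0)\|$. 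On $[0,\tau]$ this gives $D \leq \frac{4\gamma}{\lambdamax\beta^2}\|\theta_0\| = (1-\gamma)\|\theta_0\|$, using $\frac{4\gamma}{\lambdamax\beta^2} = 1-\gamma$. Consequently $\|\Theta_s\| \geq \gamma\|\theta_0\|$, which keeps us away from the origin so the cosine Hessian stays bounded, and $g(s) \leq \|\nabla\loss(\theta_0)\| + \gamma\|\theta_0\|$.

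Combining these bounds, on $[0,\tau]$ we get
\[
\phi'(s) \geq \phi'(0) - K s, \qquad K \asymp \frac{\lambdamax^{1/2}}{\gamma^2\|\theta_0\|^2}\Bigl(\tfrac{2}{\sqrt3}\tfrac{\lambdamax}{\lambdamin} + \tfrac{\beta^2}{4}\tfrac{\lambdamax^{3/2}}{\lambdamin^{1/2}}\Bigr)\bigl(\|\nabla\loss(\theta_0)\|+\gamma\|\theta_0\|\bigr)^2 .
\]
Here both error terms are bounded by $s$ times (Lipschitz constant) times $(\text{gradient bound})^2$, with $\|\Theta_s-\theta_0\| \leq s\sup g$. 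The second term in the minimum defining $\tau$ equals $\phi'(0)/K$ up to constants. With $\tau \leq \phi'(0)/K$, integration gives $\phi(\tau)-\phi(0) \geq \phi'(0)\tau - K\tau^2/2 \geq \phi'(0)\tau/2$. Substituting each branch of the minimum for $\tau$ yields the two branches of \eqref{eq:gradient-flow-advantage}:
\begin{itemize}
\item In the log branch, $\frac{\beta\rho}{2\lambdamax^{1/2}\|\theta_0\|}\cdot\frac{4}{\lambdamax\beta^2}\log(\cdot)$ matches the stated first bound.
\item In the other branch, $\phi'(0)^2/(2K)$ matches the stated second bound.
\end{itemize}

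The main obstacle is the Hessian bound for $\Psi$ along the path. This requires careful use of \cref{lem:cosine-grad-hess} in the $\Sigma$-geometry to produce the specific constant $\frac{2}{\sqrt3}\frac{\lambdamax}{\lambdamin}$, with $\|\Theta_s\|$ bounded below uniformly via the Grönwall step. Matching the exact constants, and deciding whether to use $\|\Theta_s\|\geq\gamma\|\theta_0\|$ or a cruder bound, is where the bookkeeping is delicate. I would first check that the $\gamma^2$ in $\tau$ arises from $\|\Theta_s\|^{-2} \leq (\gamma\|\theta_0\|)^{-2}$.
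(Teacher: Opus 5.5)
Your proposal is correct and is essentially the paper's argument: the same Gr\"onwall drift bound keeping $\|\Theta_s\| \geq \gamma\|\theta_0\|$ and $\|\nabla\loss(\Theta_s)\| \leq \|\nabla\loss(\theta_0)\| + \gamma\|\theta_0\|$ on $[0,\tau]$, the same Hessian bounds (the $\frac{2}{\sqrt3}$ comes from \cref{lem:hessian-top-eigenvalue}), and the same quadratic lower bound; your integrated bound on $\phi'(s)-\phi'(0)$ replaces the paper's Lagrange-remainder bound on $\phi''$, and your single observation ``$\tau \le L/K$ implies gain $\ge L\tau/2$'' (with $L = \beta\rho/(\lambdamax^{1/2}\|\theta_0\|) \le \phi'(0)$) replaces its two-case analysis. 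One bookkeeping fix: the leading $\lambdamax^{1/2}$ in your displayed $K$ is spurious, and with it the second branches of $\tau$ and of the bound would be off by $\lambdamax^{1/2}$; your own decomposition gives $K = \bigl(\tfrac{2}{\sqrt3}\tfrac{\lambdamax}{\lambdamin} + \tfrac{\beta^2}{4}\tfrac{\lambdamax^{3/2}}{\lambdamin^{1/2}}\bigr)\bigl(\|\nabla\loss(\theta_0)\|+\gamma\|\theta_0\|\bigr)^2/(\gamma^2\|\theta_0\|^2)$, once you use $\|\nabla\loss(\theta_0)\|+\gamma\|\theta_0\| \geq \gamma\|\theta_0\|$ to upgrade the loss-Lipschitz term (which is only linear in the gradient bound), and this $K$ makes $L/K$ and $L^2/(2K)$ match the statement exactly.
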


\begin{proof}
    We use a Taylor approximation. One computes $D_s(\Psi \circ \Theta_s) = -\nabla \Psi(\Theta_s)^\top \nabla \loss(\Theta_s)$ and ${\ddot \Theta_s} = -\nabla^2 \loss(\Theta_s) {\dot \Theta_s}$. With another application of chain rule, \begin{align*}
        D_s^2(\Psi \circ \Theta_s) &= -\langle D_s \nabla \Psi(\Theta_s), \nabla \loss(\Theta_s)\rangle - \langle \nabla \Psi(\Theta_s), D_s \nabla \loss(\Theta_s)\rangle\\
        &= \nabla \loss(\Theta_s)^\top \nabla^2 \Psi(\Theta_s) \nabla \loss(\Theta_s) + \nabla \Psi(\Theta_s)^\top \nabla^2 \loss(\Theta_s) \nabla \loss(\Theta_s).
    \end{align*}
    Fixing $t \in (0, \infty)$, there exists $z \in (0, t)$ so that \begin{align*}
        \Psi(\Theta_t) &= \Psi(\Theta_0) - \nabla\Psi(\Theta_0)^\top \nabla \loss(\Theta_0)t \\
        &+ \Big(\nabla \loss(\Theta_z)^\top \nabla^2 \Psi(\Theta_z) \nabla \loss(\Theta_z) + \nabla \Psi(\Theta_z)^\top \nabla^2 \loss(\Theta_z) \nabla \loss(\Theta_z)\Big) \frac{t^2}{2}.
    \end{align*}
    By \Cref{lem:first-order-lower-bound}, $- \nabla\Psi(\Theta_0)^\top \nabla \loss(\Theta_0) \geq \beta\rho/\lambdamax^{1/2}\|\theta_0\|$, so we turn to second-order bounds. Let $s \in (0, t)$. By \cref{lem:cosine-grad-hess}, \begin{align*}
        \|\nabla \Psi(\Theta_s)\| &= \frac{1}{\|\Theta_s\|_\Sigma} \|\Sigma \Proj_{(\theta_0)^\perp}^\Sigma(\theta^*)\|\\
        &= \frac{1}{\|\Theta_s\|_\Sigma} \|\Sigma^{1/2} \Proj_{(\theta_0)^\perp}^\Sigma(\theta^*)\|_\Sigma\\
        &\leq \frac{1}{\|\Theta_s\|_\Sigma}\|\Sigma^{1/2}\|_\Sigma \|\theta^*\|_\Sigma \leq \frac{\lambdamax^{1/2}}{\lambdamin^{1/2} \|\Theta_s\|},
    \end{align*} since $\|\Sigma^{1/2}\|_\Sigma = \lambdamax^{1/2}$. Using $\nabla^2 \loss \preceq \lambdamax\beta^2/4$, as well as \Cref{lem:hessian-top-eigenvalue} and the estimate above, \begin{align*}
        \left| \nabla \loss(\Theta_s)^\top \nabla^2 \Psi(\Theta_s) \nabla \loss(\Theta_s) + \nabla \Psi(\Theta_s)^\top \nabla^2 \loss(\Theta_s) \nabla \loss(\Theta_s)\right|\\
        \leq \frac{2}{\sqrt{3}}\frac{\lambdamax}{\lambdamin} \frac{\|\nabla \loss(\Theta_s)\|^2}{\|\Theta_s\|^2} + \frac{\beta^2}{4}\frac{\lambdamax^{3/2}}{\lambdamin^{1/2}} \frac{\|\nabla \loss(\Theta_s)\|}{\|\Theta_s\|}.
    \end{align*}
    
    From here, we use smoothness of the loss to bound drift of the iterates. We have \begin{align*}
        \Theta_s - \Theta_0 = - \int_0^s \nabla \loss(\Theta_a) \,da.
    \end{align*}
    By triangle inequality, \begin{align*}
        \|\Theta_s - \Theta_0\| \leq s\|\nabla\loss(\Theta_0)\| + \int_0^s \frac{\lambdamax\beta^2}{4} \|\Theta_a - \Theta_0\| \,da.
    \end{align*}
    By Gr\"onwall, we have  \begin{align*}
        \|\Theta_s - \Theta_0\| &\leq s\|\nabla \loss(\Theta_0)\| + \int_0^s \frac{a\lambdamax\beta^2}{4} \|\nabla \loss(\Theta_0)\| \exp\left(\frac{\lambdamax\beta^2(s-a)}{4}\right)da\\
        &= \|\nabla \loss(\Theta_0) \|\left(s + \frac{\lambdamax\beta^2}{4} e^{\lambdamax\beta^2 s/4} \left(\frac{1-e^{-\lambdamax\beta^2s/4}\left(\frac{\lambdamax\beta^2 s}{4} + 1\right)}{\lambdamax^2\beta^4/16}\right)\right)\\
        &= \frac{4}{\lambdamax\beta^2} \|\nabla \loss(\Theta_0)\|\left(e^{\lambdamax\beta^2 s/4} - 1\right).
    \end{align*}
    
    Take $\gamma \in (0, 1)$ to be chosen later. Take \begin{align*}
        \tau_1 := \frac{4}{\lambdamax\beta^2} \log\left(\frac{\lambdamax\beta^2(1-\gamma)}{4} \cdot \frac{\|\Theta_0\|}{\|\nabla \loss(\Theta_0)\|} + 1\right).
    \end{align*}
    Note that if $s \leq \tau_1$, then $\|\Theta_s - \Theta_0\| \leq (1-\gamma)\|\Theta_0\|$ and $\|\Theta_s\| \geq \gamma \|\Theta_0\|$. By smoothness, we also have $\|\nabla \loss(\Theta_s)\| \leq \|\nabla \loss(\Theta_0)\| + (\lambdamax\beta^2/4)(1-\gamma) \|\Theta_0\|$. 
    Updating our second-order bound, \begin{align*}
        &\left| \nabla \loss(\Theta_s)^\top \nabla^2 \Psi(\Theta_s) \nabla \loss(\Theta_s) + \nabla \Psi(\Theta_s)^\top \nabla^2 \loss(\Theta_s) \nabla \loss(\Theta_s)\right|\\
        &\quad \quad \leq \frac{2}{\sqrt{3}}\frac{\lambdamax}{\lambdamin} \frac{\left(\|\nabla \loss(\Theta_0)\| + \frac{\lambdamax\beta^2(1-\gamma)}{4} \|\Theta_0\|\right)^2}{\gamma^2 \|\Theta_0\|^2} + \frac{\beta^2}{4}\frac{\lambdamax^{3/2}}{\lambdamin^{1/2}} \frac{\left(\|\nabla \loss(\Theta_0)\| + \frac{\lambdamax\beta^2(1-\gamma)}{4} \|\Theta_0\|\right)}{\gamma \|\Theta_0\|}.
    \end{align*}
    As long as $\gamma \leq \lambdamax\beta^2/(\lambdamax\beta^2 + 4)$, then \begin{align*}
        \frac{\lambdamax\beta^2(1-\gamma)}{4} \|\Theta_0\| = \frac{\lambdamax\beta^2(1-\gamma)}{4\gamma} \cdot \gamma \|\Theta_0\| \geq \gamma \|\Theta_0\|,
    \end{align*}
    and hence
    \begin{align*}
        \frac{\|\nabla \loss(\Theta_0)\| + \frac{\lambdamax\beta^2(1-\gamma)}{4}\|\Theta_0\|}{\gamma \|\Theta_0\|} \geq 1.
    \end{align*}
    That is, \begin{align*}
        &\left| \nabla \loss(\Theta_s)^\top \nabla^2 \Psi(\Theta_s) \nabla \loss(\Theta_s) + \nabla \Psi(\Theta_s)^\top \nabla^2 \loss(\Theta_s) \nabla \loss(\Theta_s)\right|\\
        &\quad \quad \quad \quad \quad \leq \left(\frac{2}{\sqrt{3}}\frac{\lambdamax}{\lambdamin} + \frac{\beta^2}{4}\frac{\lambdamax^{3/2}}{\lambdamin^{1/2}}\right) \left(\frac{\|\nabla \loss(\Theta_0)\| + \frac{\lambdamax\beta^2(1-\gamma)}{4}\|\Theta_0\|}{\gamma \|\Theta_0\|}\right)^2.
    \end{align*}
    
    For $t \in (0, \tau_1]$, we obtain the updated Taylor bound: \begin{align*}
        &\Psi(\Theta_t) - \Psi(\Theta_0) \geq \frac{\beta \rho}{\lambdamax^{1/2} \|\theta_0\|} t - \left(\frac{2}{\sqrt{3}}\frac{\lambdamax}{\lambdamin} + \frac{\beta^2}{4}\frac{\lambdamax^{3/2}}{\lambdamin^{1/2}}\right) \left(\frac{\|\nabla \loss(\Theta_0)\| + \frac{\lambdamax\beta^2(1-\gamma)}{4}\|\Theta_0\|}{\gamma \|\Theta_0\|}\right)^2 \frac{t^2}{2}.
    \end{align*}
    Maximizing the quadratic over all of $\R$, the right-hand expression has a maximum at \begin{align*}
        \tau_2 := \frac{\beta\rho \cdot \gamma^2 \|\theta_0\|}{\lambdamax ^{1/2}\left(\frac{2}{\sqrt{3}}\frac{\lambdamax}{\lambdamin} + \frac{\beta^2}{4}\frac{\lambdamax^{3/2}}{\lambdamin^{1/2}}\right) \left(\|\nabla \loss(\theta_0)\| + \frac{\lambdamax\beta^2}{4} (1-\gamma)\|\theta_0\|\right)^2}.
    \end{align*}
    If $\tau_2 \leq \tau_1$, then \begin{align*}
        \Psi(\Theta_{\tau_2}) - \Psi(\Theta_0) &\geq \frac{\beta^2 \rho^2 \gamma^2}{2\lambdamax \left(\frac{2}{\sqrt{3}} \frac{\lambdamax}{\lambdamin} + \frac{\beta^2}{4} \frac{\lambdamax^{3/2}}{\lambdamin^{1/2}} \right)\left(\|\nabla\loss(\theta_0)\| + \gamma \|\theta_0\|\right)^2}.
    \end{align*}
    If instead $\tau_2 > \tau_1$, this implies
    \begin{align*}
        \tau_1 \cdot \left(\frac{2}{\sqrt{3}}\frac{\lambdamax}{\lambdamin} + \frac{\beta^2}{4}\frac{\lambdamax^{3/2}}{\lambdamin^{1/2}}\right) \left(\frac{\|\nabla \loss(\Theta_0)\| + \frac{\lambdamax\beta^2(1-\gamma)}{4}\|\Theta_0\|}{\gamma \|\Theta_0\|}\right)^2 \leq \frac{\beta\rho}{\lambdamax^{1/2} \|\theta_0\|}.
    \end{align*}
    And in particular, \begin{align*}
        \Psi(\Theta_{\tau_1}) - \Psi(\Theta_0) \geq \frac{\tau_1}{2} \frac{\beta\rho}{\lambdamax^{1/2} \|\Theta_0\|} = \frac{2\rho}{\lambdamax^{3/2} \beta \|\Theta_0\|} \cdot \log\left(\frac{\lambdamax\beta^2(1-\gamma)}{4} \cdot \frac{\|\Theta_0\|}{\|\nabla \loss(\Theta_0)\|} + 1\right).
    \end{align*}
    One notes that for $\gamma = \lambdamax\beta^2/(\lambdamax\beta^2+4)$, \begin{align*}
        \frac{\lambdamax\beta^2(1 -\gamma)}{4} = \gamma.
    \end{align*}
    Hence, taking $\tau = \min\{\tau_1, \tau_2\}$ one obtains the stated result.
\end{proof}

\subsection{Stochastic Analysis}\label{sec:stochastic-analysis}

The reduction from discrete to continuous-time guarantees follows a two-step argument: approximate the gradient flow with discrete full-information updates, then show that SGD approximates these updates with little error when steps are sufficiently small and batches sufficiently large. The former step will follow a classic argument; discrete gradient methods are well-known to approximate initial-value solutions under mild assumptions and small step sizes (see \emph{e.g.} the discussion of Euler's method in \cite[Chapter~12]{suli2003numerical}). While the necessary step sizes can be quite small (and hence the number of steps quite large), this will suffices for our purposes as we only ask for constant progress. Proceeding, we will denote by $\bar{\theta}_0, \cdots, \bar{\theta}_T$ the deterministic iterates on population gradients, where $\bar{\theta}_0 = \theta_0$ and $\bar{\theta}_{t+1} = \bar{\theta}_t - \eta\nabla \loss(\bar{\theta}_t)$. 

\begin{lemma}\label{lem:euler-approximation}
    Let $T \in \N$. Taking $\eta \leq \tau_1/T$, where 
    \begin{align*}
        \tau_1 := \frac{4}{\lambdamax\beta^2} \log\left(\frac{\gamma \|\theta_0\|}{\|\nabla \loss(\theta_0)\|} + 1\right),
    \end{align*}
    we have that for all $t \in \{0, \ldots, T\}$, \begin{align*}
        \|\Theta_{\eta t} - \bar{\theta}_t\| &\leq \eta \left(\frac{\gamma}{2}\|\theta_0\| + \frac{\gamma^2}{2}\frac{\|\theta_0\|^2}{\|\nabla \loss(\theta_0)\|}\right).
    \end{align*} 
\end{lemma}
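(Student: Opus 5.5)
The plan is the classical Euler-method error analysis: a one-step recursion for the error, then unrolling it with Gr\"onwall. Write $L := \lambdamax\beta^2/4$ for the smoothness constant of the population loss. As noted in the proof of \Cref{lem:gradient-flow}, the choice of $\gamma$ gives $L(1-\gamma) = \gamma$, so we have
\[
\tau_1 = \tfrac{1}{L}\log\bigl(\gamma\|\theta_0\|/\|\nabla\loss(\theta_0)\| + 1\bigr).
\]
Define the error $e_t := \|\Theta_{\eta t} - \bar\theta_t\|$, so $e_0 = 0$. The flow satisfies $\Theta_{\eta(t+1)} = \Theta_{\eta t} - \int_{\eta t}^{\eta(t+1)} \nabla\loss(\Theta_s)\,ds$. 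Subtracting the discrete update gives
\[
\Theta_{\eta(t+1)} - \bar\theta_{t+1} = \bigl(\Theta_{\eta t} - \bar\theta_t\bigr) - \int_{\eta t}^{\eta(t+1)} \bigl(\nabla\loss(\Theta_s) - \nabla\loss(\bar\theta_t)\bigr)\,ds .
\]

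\textbf{Step 1: bound the integrand.} By smoothness, $\|\nabla\loss(\Theta_s) - \nabla\loss(\bar\theta_t)\| \le L\bigl(\|\Theta_s - \Theta_{\eta t}\| + e_t\bigr)$. For the drift term, write $\|\Theta_s - \Theta_{\eta t}\| \le (s-\eta t)\,G$, where $G := \sup_{a\le\tau_1}\|\nabla\loss(\Theta_a)\|$. The Gr\"onwall drift estimate from the proof of \Cref{lem:gradient-flow} shows $\|\Theta_a - \Theta_0\| \le (1-\gamma)\|\theta_0\|$ for $a \le \tau_1$. Smoothness then gives $G \le \|\nabla\loss(\theta_0)\| + L(1-\gamma)\|\theta_0\| = \|\nabla\loss(\theta_0)\| + \gamma\|\theta_0\|$. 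We only need $s \le \eta T \le \tau_1$, which is exactly the hypothesis on $\eta$.

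\textbf{Step 2: the recursion.} Integrating over the step yields
\[
e_{t+1} \le (1 + L\eta)\,e_t + \tfrac{L\eta^2}{2}\,G .
\]
Unrolling from $e_0 = 0$ gives $e_t \le \tfrac{\eta G}{2}\bigl((1+L\eta)^t - 1\bigr) \le \tfrac{\eta G}{2}\bigl(e^{L\eta t} - 1\bigr)$. Since $\eta t \le \tau_1$, we have $e^{L\eta t} - 1 \le \gamma\|\theta_0\|/\|\nabla\loss(\theta_0)\|$. Therefore
\[
e_t \le \frac{\eta}{2}\bigl(\|\nabla\loss(\theta_0)\| + \gamma\|\theta_0\|\bigr)\frac{\gamma\|\theta_0\|}{\|\nabla\loss(\theta_0)\|} = \eta\Bigl(\frac{\gamma}{2}\|\theta_0\| + \frac{\gamma^2}{2}\frac{\|\theta_0\|^2}{\|\nabla\loss(\theta_0)\|}\Bigr),
\]
which is exactly the claimed bound.

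\textbf{Main obstacle.} The only delicate point is the uniform bound $G$ on the flow's gradient along $[0,\tau_1]$. This is where the specific form of $\tau_1$ and the identity $L(1-\gamma) = \gamma$ are used, and it is the reason the estimate is stated only up to horizon $\tau_1$. Everything else is the standard discrete Gr\"onwall argument. Note also that the bound is applied on the flow trajectory, not on the iterates, so no a priori control of $\bar\theta_t$ is needed.
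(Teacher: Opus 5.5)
Your proof is correct and follows essentially the same route as the paper: a one-step recursion $e_{t+1}\le(1+L\eta)e_t+L\eta^2G/2$ obtained from smoothness and unrolled geometrically, together with the uniform bound $G\le\|\nabla\loss(\theta_0)\|+\gamma\|\theta_0\|$ on $[0,\tau_1]$ taken from the drift estimate in \Cref{lem:gradient-flow}. The only cosmetic difference is in the local truncation error: you bound it using Lipschitzness of $\nabla\loss$ and the flow's speed, whereas the paper uses an integral-form Taylor expansion with the Hessian, and both give the identical constant.
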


\begin{proof}
    To start, take \begin{align*}
        \omega := \max_{0 \leq t \leq T-1} \left\| \nabla \loss(\Theta_{\eta t}) - \frac{\Theta_{\eta t} - \Theta_{\eta (t+1)}}{\eta}\right\|,
    \end{align*}
    and consider the decomposition \begin{align*}
        \Theta_{\eta(t+1)} - \bar{\theta}_{t+1} = \Big( \Theta_{\eta t} - \bar{\theta}_t \Big) + \eta \Big( \nabla \loss(\bar{\theta}_t) - \nabla \loss(\Theta_{\eta t})\Big) + \Big( (\Theta_{\eta(t+1)} - \Theta_{\eta t}) + \eta \nabla \loss(\Theta_{\eta t})\Big).
    \end{align*}
    By smoothness of $\loss$, we have \begin{align*}
        \|\Theta_{\eta(t+1)} - \bar{\theta}_{t+1}\| \leq (1+\lambdamax\beta^2\eta/4) \|\Theta_{\eta t} - \bar{\theta}_{t}\| + \eta \omega.
    \end{align*}
    By induction, one has \begin{align*}
        \|\Theta_{\eta(t+1)} - \bar{\theta}_{t+1}\| \leq \frac{4\omega}{\lambdamax\beta^2} \left( \left(1 + \frac{\lambdamax\beta^2\eta}{4}\right)^t - 1\right).
    \end{align*}
    To bound $\omega$, we use the integral form of Taylor. \begin{align*}
        \Big\| \eta \nabla \loss(\Theta_{\eta t}) - (\Theta_{\eta t} - \Theta_{\eta (t+1)}) \Big\| &= \left\| \int_{\eta t}^{\eta(t+1)} \nabla^2 \loss(\Theta_s) \nabla \loss(\Theta_s) ( \eta(t+1)-s) ds \right\|\\
        &\leq \frac{\lambdamax\beta^2\eta^2}{8} \, \sup_{s \in [\eta t, \eta(t+1)]} \|\nabla \loss(\Theta_s)\|.
    \end{align*}
    To conclude, recall from the proof of \cref{lem:gradient-flow} that by choice of $\tau_1$, we have $\|\nabla \loss(\Theta_s)\| \leq \|\nabla \loss(\theta_0)\| + \frac{\lambdamax\beta^2(1-\gamma)}{4}\|\theta_0\|$. Hence, for $0 \leq t \leq T-1$, \begin{align*}
        \|\Theta_{\eta (t+1)} - \bar{\theta}_{t+1}\| &\leq \frac{4\omega}{\lambdamax\beta^2}(e^{\lambdamax\beta^2\eta t/4} - 1)\\
        &\leq \frac{\eta}{2} \left( e^{\lambdamax\beta^2 \tau_1/4}-1\right)\left(\|\nabla \loss(\theta_0)\| + \frac{\lambdamax\beta^2(1-\gamma)}{4}\|\theta_0\|\right)\\
        &\leq \eta \left(\frac{\lambdamax\beta^2(1-\gamma)}{8} \frac{\|\theta_0\|}{\|\nabla \loss(\theta_0)\|}\right)\left(\|\nabla \loss(\theta_0)\| + \frac{\lambdamax\beta^2(1-\gamma)}{4}\|\theta_0\|\right).
    \end{align*}
    The choice of $\gamma$ gives the lemma as stated.
\end{proof}

From here, the stochastic argument uses only basic martingale tools obtain concentration on the gradient estimates. We believe that under stronger assumptions on the tails of $\mu$, a more practical batch size would suffice. Recall from \cref{sec:introduction} that we denote by $\hat{\theta}_0, \cdots, \hat{\theta}_T$ the stochastic process induced by SGD. For brevity, we will denote the minimum on the right-hand side of \cref{eq:gradient-flow-advantage} by: \begin{align*}
    \alpha_* := \frac{2\rho}{\lambdamax^{3/2} \beta \|\theta_0\|} \log\left(\frac{\gamma \|\theta_0\|}{\|\nabla \loss(\theta_0)\|} + 1\right) \land \frac{\beta^2 \rho^2 \gamma^2}{2\lambdamax \left(\frac{2}{\sqrt{3}} \frac{\lambdamax}{\lambdamin} + \frac{\beta^2}{4} \frac{\lambdamax^{3/2}}{\lambdamin^{1/2}} \right)\left(\|\nabla\loss(\theta_0)\| + \gamma \|\theta_0\|\right)^2}.
\end{align*}
We are now ready to state the stochastic result in full generality.

\begin{theorem}\label{thm:stochastic-analysis}
    Let $\mu$ be $\eps$-elliptical along $\on{span}\{\psi, \theta_0\}$. Suppose $\rho > 0$. Take \begin{align}\label{eq:tau-constraints}
        \tau := &\frac{4}{\lambdamax\beta^2} \log\left(\frac{\gamma\|\theta_0\|}{\|\nabla \loss(\theta_0)\|} + 1\right) \,\,\land\,\, \frac{\beta\rho \gamma^2 \|\theta_0\|}{\lambdamax^{1/2} \left(\frac{2}{\sqrt{3}}\frac{\lambdamax}{\lambdamin} + \frac{\beta^2}{4}\frac{\lambdamax^{3/2}}{\lambdamin^{1/2}}\right) \left(\|\nabla \loss(\theta_0)\| + \gamma\|\theta_0\|\right)^2},
    \end{align}
    and $\eta = \tau/T$ for integer $T$ such that \begin{align*}
        \delta := \frac{\gamma \|\theta_0\| \lambdamin^{1/2}\alpha_*}{8\lambdamax^{1/2}}, \quad \quad \eta \leq \delta \cdot\left(\frac{\gamma}{2}\|\theta_0\| + \frac{\gamma^2}{2}\frac{\|\theta_0\|^2}{\|\nabla \loss(\theta_0)\|}\right)^{-1}.
    \end{align*}
    It holds that $\Psi(\hat\theta_T) - \Psi(\theta_0) \gtrsim \alpha_*$ with failure probability at most \begin{align*}
         \frac{d}{B \alpha_*} \cdot O\left( \frac{(\beta^2\lambdamax + 1)}{\beta^2\lambdamin} \left( \frac{1}{\|\theta_0\|\|\nabla\loss(\theta_0)\|} + \frac{1}{\|\nabla \loss(\theta_0)\|^2}\right) \right).
    \end{align*}
\end{theorem}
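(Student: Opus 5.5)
The plan is to chain three comparisons. First, the gradient flow $\Theta_\tau$ gains $\alpha_*$ in $\Psi$ (\cref{lem:gradient-flow}). Second, the full-information Euler iterate $\bar\theta_T$ is close to $\Theta_{\eta T}=\Theta_\tau$ (\cref{lem:euler-approximation}). Third, the SGD iterate $\hat\theta_T$ is close to $\bar\theta_T$ with high probability, by a martingale argument. We then convert these distance bounds into bounds on $\Psi$ using the Lipschitz property of cosine similarity away from the origin. Concretely, by \cref{lem:cosine-grad-hess} and the computation in the proof of \cref{lem:gradient-flow}, $\|\nabla\Psi(\theta)\|\le \lambdamax^{1/2}/(\lambdamin^{1/2}\|\theta\|)$. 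Along the flow up to time $\tau\le\tau_1$ we have $\|\Theta_s\|\ge\gamma\|\theta_0\|$. So if both $\bar\theta_T$ and $\hat\theta_T$ stay within distance $\gamma\|\theta_0\|/2$ of $\Theta_\tau$, the segment joining them to $\Theta_\tau$ stays outside the ball of radius $\gamma\|\theta_0\|/2$. On that region $\Psi$ is $2\lambdamax^{1/2}/(\lambdamin^{1/2}\gamma\|\theta_0\|)$-Lipschitz. The definition of $\delta$ is then tailored so that an error of $\delta$ in position costs at most $\alpha_*/4$ in $\Psi$.

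\textbf{Deterministic part.} Since $\eta T=\tau\le\tau_1$, \cref{lem:euler-approximation} applies. Combined with the hypothesis on $\eta$, it gives $\|\Theta_\tau-\bar\theta_T\|\le\delta$, and in fact $\|\Theta_{\eta t}-\bar\theta_t\|\le\delta$ for all $t\le T$. Note also that $\delta\le\gamma\|\theta_0\|/8$, because $\alpha_*\lesssim 1$ follows from the bounds on $\rho$ (\cref{lem:rho-upper-bound}) and on $\|\nabla\loss(\theta_0)\|$ (\cref{lem:grad-norm-lower-bound}). Hence every Euler iterate has norm at least $\gamma\|\theta_0\|/2$, and $\Psi(\bar\theta_T)\ge\Psi(\Theta_\tau)-\alpha_*/4\ge\Psi(\theta_0)+3\alpha_*/4$ by \cref{lem:gradient-flow}.

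\textbf{Stochastic part, which is the main obstacle.} Write $\hat\theta_{t}-\bar\theta_{t}$ as the sum of a drift term and a noise term:
\[
\hat\theta_{t+1}-\bar\theta_{t+1}=(\hat\theta_t-\bar\theta_t)-\eta(\nabla\loss(\hat\theta_t)-\nabla\loss(\bar\theta_t))-\eta\xi_t ,
\]
where $\xi_t=g_t-\nabla\loss(\hat\theta_t)$ is a martingale difference sequence. The covariance of each sample gradient is bounded by $\beta^2\E[\bm x\bm x^\top]=\beta^2\Sigma$, because the coefficient $|\sigma_\beta-y|\le1$. Hence $\E\|\xi_t\|^2\le\beta^2\lambdamax d/B$. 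By smoothness ($\nabla^2\loss\preceq\beta^2\lambdamax/4$) and a discrete Gr\"onwall step,
\[
\|\hat\theta_T-\bar\theta_T\|\le e^{\beta^2\lambdamax\tau/4}\max_{t\le T}\Big\|\eta\sum_{s<t}\xi_s\Big\|.
\]
The choice of $\tau\le\tau_1$ gives $e^{\beta^2\lambdamax\tau/4}\le 1+\gamma\|\theta_0\|/\|\nabla\loss(\theta_0)\|$. The maximal term is handled by Doob's $L^2$ inequality, which gives
\[
\Pr\Big[\max_t\Big\|\eta\textstyle\sum\xi_s\Big\|>r\Big]\le \eta^2T\beta^2\lambdamax d/(Br^2)=\eta\tau\beta^2\lambdamax d/(Br^2).
\]
We take $r\asymp\delta/(1+\gamma\|\theta_0\|/\|\nabla\loss(\theta_0)\|)$. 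The resulting bound, after substituting $\eta\le\delta(\cdots)^{-1}$, $\delta\propto\gamma\|\theta_0\|\alpha_*$ and the value of $\tau$, should simplify to the stated $\frac{d}{B\alpha_*}O\big(\frac{\beta^2\lambdamax+1}{\beta^2\lambdamin}(\frac{1}{\|\theta_0\|\|\nabla\loss(\theta_0)\|}+\frac{1}{\|\nabla\loss(\theta_0)\|^2})\big)$. Getting exactly these factors (in particular a single power of $\alpha_*$, coming from one factor of $\delta$ in $\eta$ cancelling against $\delta^2$ in $r^2$) is the bookkeeping I expect to be fiddly. The factor $(\beta^2\lambdamax+1)$ should come from $1/\gamma$.

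\textbf{Conclusion.} On the good event, $\|\hat\theta_T-\Theta_\tau\|\le2\delta\le\gamma\|\theta_0\|/4$. Applying the Lipschitz bound once more gives $\Psi(\hat\theta_T)\ge\Psi(\theta_0)+\alpha_*/4\gtrsim\alpha_*$, which proves the claim.
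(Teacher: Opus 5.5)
Your proposal is correct and follows essentially the same route as the paper: Euler approximation of the flow via \cref{lem:euler-approximation}, a discrete Gr\"onwall bound reducing $\|\hat\theta_T-\bar\theta_T\|$ to the maximal martingale sum, Doob's inequality with the per-step covariance bound $\beta^2\Sigma/B$, and the Lipschitz property of $\Psi$ away from the $\gamma\|\theta_0\|/2$-ball. The remaining bookkeeping does close as you anticipate, using $\tau\beta^2\lambdamax\le 4\log(1+X)$ and $\log(1+X)\le X$ for $X=\gamma\|\theta_0\|/\|\nabla\loss(\theta_0)\|$, together with $1/\gamma\lesssim(\beta^2\lambdamax+1)/(\beta^2\lambdamax)$; the only cosmetic difference is that the paper gets $\delta\le\gamma\|\theta_0\|/4$ directly from $\alpha_*\le 2$ (since \cref{lem:gradient-flow} certifies a gain of $\alpha_*$ in a cosine) rather than via \cref{lem:rho-upper-bound} and \cref{lem:grad-norm-lower-bound}.
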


\begin{proof}
    We claim it's enough to bound the probability that $\|\hat{\theta}_T - \bar{\theta}_T\|$ exceeds $\delta$. By the first assumption on $\tau$ in \Cref{eq:tau-constraints}, we may invoke \Cref{lem:euler-approximation}; taking step size \begin{align*}
        \eta \leq \delta \left(\frac{\gamma}{2}\|\theta_0\| + \frac{\gamma^2}{2}\frac{\|\theta_0\|^2}{\|\nabla \loss(\theta_0)\|}\right)^{-1},
    \end{align*}
    we have $\|\Theta_{\eta T} - \bar{\theta}_T\| \leq \delta$. Also by the first constraint on $\tau$, recall that $\|\Theta_{\eta T}\| \geq \gamma\|\theta_0\|$. If it holds that $\|\hat{\theta}_T - \bar{\theta}_T\| \leq \delta$, then \begin{align*}
        \|\hat{\theta}_T\| \geq \|\Theta_{\eta T}\| - \|\hat{\theta}_T - \bar{\theta}_T\| - \|\bar{\theta}_T-\Theta_{\eta T}\| \geq \|\Theta_{\eta T}\| -2\delta.
    \end{align*}
    Moreover, by \cref{lem:gradient-flow}, it must hold that $\alpha_* \leq 2$, and hence \begin{align*}
        \delta \leq \frac{\gamma\|\theta_0\|}{4}.
    \end{align*}
    This implies that $\|\hat{\theta}_T\|\geq \gamma\|\theta_0\|/2$. Since $\Psi(\theta)$ is $\lambdamax^{1/2}/(r\lambdamin^{1/2})$-Lipschitz for $\|\theta\| \geq r$ (see \cref{lem:cosine-lipschitz}), this would imply \begin{align*}
        | \Psi(\hat\theta_T) - \Psi(\Theta_{\eta T})| \leq \frac{4  \lambdamax^{1/2} \delta}{\lambdamin^{1/2} \gamma \|\theta_0\|} = \frac{\alpha_*}{2}.
    \end{align*}
    Hence by \cref{lem:gradient-flow}, $\Psi(\hat\theta_T) - \Psi(\theta_0) \gtrsim \alpha_*$ with at most the same failure probability.

    We decompose the error \begin{align*}
        \bar{\theta}_{t+1} - \hat{\theta}_{t+1} &= \Big( \bar{\theta}_t - \hat{\theta}_t \Big) + \eta \Big( g_t - \nabla \loss(\hat{\theta}_t)\Big) + \eta\Big( \nabla \loss(\hat{\theta}_t) - \nabla \loss(\bar{\theta}_t)\Big)\\
        &= \eta \sum_{j=0}^t \Big(g_j - \nabla \loss(\hat{\theta}_j) \Big) + \eta \sum_{j=0}^t \Big( \nabla \loss(\hat{\theta}_t) - \nabla \loss(\bar{\theta}_t)\Big).
    \end{align*}
    Letting $\xi_t := g_t - \nabla \loss(\hat{\theta}_t)$ for $t \in \{0, \ldots, T-1\}$, and $\CE_t = \sum_{j = 0}^t \xi_j$, and $\CE = \max_{0 \leq t \leq T-1} \|\CE_t\|$, we observe that \begin{align*}
        \|\bar{\theta}_{t+1} - \hat{\theta}_{t+1}\| &\leq \eta \CE + \frac{\lambdamax\beta^2 \eta}{4} \sum_{j=0}^t \|\bar{\theta}_j - \hat{\theta}_j\|.
    \end{align*}
    By discrete Gr\"onwall (\Cref{lem:discrete-gronwall}), it then holds that \begin{align*}
        \| \bar{\theta}_T - \hat{\theta}_T\| \leq \eta \CE \cdot e^{\tau \lambdamax\beta^2/ 4}.
    \end{align*}

    Note that $\{\CE_t\}_{t \geq 0}$ is a martingale w.r.t. $\{\hat{\theta}_t, \{\bm{x}^{(t)}_i\}_i\}_t$, with difference sequence $\{\xi_t\}_{t \geq 1}$. 
    Letting $\E_{t-1}$ denote the expectation conditional to $\sigma(\hat{\theta}_\ell, \bm{x}_i^{(\ell)} : \ell \leq t-1)$, we have the bound
    \begin{align*}
        \E_{t-1}[ \xi_{t} \xi_{t}^\top ] = \E_{t-1}[g_t g_t^\top] - \nabla \loss(\hat{\theta}_t) \nabla\loss(\hat{\theta}_t)^\top \preceq \frac{\beta^2}{B^2} \sum_i \E_{t-1}\Big[ \big(\bm{y} - \sigma(\hat{\theta}_t^\top \bm{x})\big)^2 \cdot \bm{x}\bm{x}^\top\Big] \preceq \frac{\beta^2}{B} \Sigma.
    \end{align*}
    By orthogonality of increments and total expectation, \begin{align*}
        \E \|\CE_t \|^2 = \sum_{j=0}^{t} \E \|\xi_j\|^2 = \E \sum_{j=0}^t \E_{j-1} \|\xi_j\|^2 \leq (t+1) \frac{\beta^2 d \lambdamax}{B}.
    \end{align*}
    Since $\|\CE_t\|$ is a submartingale, the $L^2$ maximal inequality gives
    \begin{align*}
        \Pr\left[\CE \geq \frac{\delta}{\eta e^{\tau \beta^2 \lambdamax/4}}\right] &\leq \frac{\eta^2 e^{\tau \beta^2\lambdamax / 2}}{\delta^2} \E \CE^2\\
        &\leq \frac{4\eta^2 e^{\tau \beta^2\lambdamax / 2}}{\delta^2} \E\|\CE_{T-1}\|^2\\
        &\leq \frac{4\eta \tau \beta^2 \lambdamax e^{\tau \beta^2\lambdamax/2}}{\delta^2} \cdot \frac{d}{B}.
    \end{align*}
   And $\|\hat{\theta}_T - \bar{\theta}_T\| \leq \delta$ with at most this failure probability. 

    We conclude by placing an upper bound on the first term on the product. First note that \begin{align*}
        e^{\tau \beta^2\lambdamax/2} \leq \left(1 + \frac{\gamma \|\theta_0\|}{\|\nabla \loss(\theta_0)\|}\right)^2.
    \end{align*}
    Hence, \begin{align*}
        \frac{\eta \tau \beta^2 \lambdamax e^{\tau \beta^2 \lambdamax/2}}{\delta^2} &\lesssim \frac{\log\left(1 + \frac{\gamma\|\theta_0\|}{\|\nabla\loss(\theta_0)\|} \right) \left(1 + \frac{\gamma\|\theta_0\|}{\|\nabla\loss(\theta_0)\|} \right)^2}{\delta \left(\frac{\gamma}{2}\|\theta_0\| + \frac{\gamma^2}{2}\frac{\|\theta_0\|^2}{\|\nabla \loss(\theta_0)\|}\right)}\\
        &\lesssim \frac{\log\left(1 + \frac{\gamma\|\theta_0\|}{\|\nabla\loss(\theta_0)\|} \right) \left(1 + \frac{\gamma\|\theta_0\|}{\|\nabla\loss(\theta_0)\|} \right)}{\delta \gamma\|\theta_0\|}\\
        &\lesssim \frac{\lambdamax^{1/2}}{\lambdamin^{1/2}\alpha_*} \frac{\log\left(1 + \frac{\gamma\|\theta_0\|}{\|\nabla\loss(\theta_0)\|} \right) \left(1 + \frac{\gamma\|\theta_0\|}{\|\nabla\loss(\theta_0)\|} \right)}{\gamma^2\|\theta_0\|^2}\\
        &\leq \frac{\lambdamax^{1/2}}{\lambdamin^{1/2}\alpha_*} \frac{\left(1 + \frac{\|\theta_0\|}{\|\nabla\loss(\theta_0)\|} \right)}{\gamma \|\theta_0\| \|\nabla\loss(\theta_0)\|}\\
        &\lesssim \frac{(\beta^2\lambdamax + 1)}{\beta^2\lambdamin^{1/2}\lambdamax^{1/2}\alpha_*} \left( \frac{1}{\|\theta_0\|\|\nabla\loss(\theta_0)\|} + \frac{1}{\|\nabla \loss(\theta_0)\|^2}\right).
    \end{align*}
    Combining with the previous bound concludes the proof.
\end{proof}

\subsection{Concise Bounds}

From \cref{thm:stochastic-analysis}, we briefly derive the simplified statement appearing in \cref{sec:overview}. For simplicity, we restrict to the case $\beta = 1$.

\begin{theorem}\label{thm:refined-bounds}
    Let $\mu$ be $\eps$-elliptical along $\on{span}\{\psi, \theta_0\}$, and fix $\beta = 1$. There are constants $C_0, C_1, C_2, C_3$, depending only on the top and bottom eigenvalues of $\Sigma$, such that the following holds. If $\rho > 0$, then for some $\tau > 0$ and any $\eta = \tau/T$, where \begin{align*}
        \tau \leq C_0 \left( \frac{\rho}{\|\theta_0\|}\right),\quad \quad 
        \eta \leq C_1 \left(\frac{\rho}{\|\theta_0\| + 1 + \eps}\right)^3,
    \end{align*}
    it holds that \begin{align*}
        \Psi(\hat{\theta}_T) - \Psi(\theta_0) &\geq \Delta := C_2 \left(\frac{\rho}{\|\theta_0\| + 1 + \eps}\right)^2,
    \end{align*}
    with failure probability at most \begin{align*}
        C_3 \frac{d}{B\Delta}\cdot\left( \frac{1}{\|\theta_0\|\rho} + \frac{1}{\rho^2}\right). 
    \end{align*}
\end{theorem}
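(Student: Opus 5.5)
The plan is to specialize \cref{thm:stochastic-analysis} to $\beta = 1$ and simplify each quantity appearing there, namely $\tau$, $\alpha_*$, $\delta$, the step-size constraint and the failure probability. The simplification uses the two-sided control on $\|\nabla\loss(\theta_0)\|$ from \cref{lem:first-order-upper-bound,lem:grad-norm-lower-bound}:
\[
\frac{\lambdamin^{1/2}\rho}{\lambdamax} \le \|\nabla\loss(\theta_0)\| \le \lambdamax^{1/2}\left(\tfrac32 + 2\eps\right),
\]
together with $\rho \le \tfrac32\lambdamax$ from \cref{lem:rho-upper-bound}. With $\beta=1$ the quantity $\gamma = \lambdamax/(\lambdamax+4)$ depends only on $\lambdamax$, so it may be absorbed into the constants. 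Throughout, the symbols $\lesssim,\gtrsim$ hide factors depending only on $\lambdamax,\lambdamin$.

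\textbf{Step 1: lower-bound $\alpha_*$ by $\Delta$.} Write $\alpha_*$ as a minimum of two terms.
\begin{itemize}
\item For the second term, the upper bound on the gradient gives $\|\nabla\loss(\theta_0)\| + \gamma\|\theta_0\| \lesssim 1+\eps+\|\theta_0\|$. Hence the second term is $\gtrsim (\rho/(\|\theta_0\|+1+\eps))^2$.
\item For the first term, $\tfrac{2\rho}{\lambdamax^{3/2}\|\theta_0\|}\log(1+x)$ with $x = \gamma\|\theta_0\|/\|\nabla\loss(\theta_0)\|$, I use $\log(1+x)\ge x/(1+x)$. This gives
\[
\frac{\rho}{\|\theta_0\|}\cdot\frac{\gamma\|\theta_0\|}{\|\nabla\loss(\theta_0)\|+\gamma\|\theta_0\|} \gtrsim \frac{\rho}{1+\eps+\|\theta_0\|}.
\]
Since $\rho\lesssim 1$ and $\|\theta_0\|+1+\eps \ge 1$, this dominates $(\rho/(\|\theta_0\|+1+\eps))^2$ up to constants.
\end{itemize}
So $\alpha_*\gtrsim\Delta$ for a suitable $C_2$, and the conclusion $\Psi(\hat\theta_T)-\Psi(\theta_0)\gtrsim\alpha_*$ yields $\ge\Delta$.

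\textbf{Step 2: time horizon and step size.} Here $\tau$ is the minimum of two terms, and the second is at most $\rho\|\theta_0\|/(\gamma^2\|\theta_0\|^2)\cdot\mathrm{const} \lesssim \rho/\|\theta_0\|$ after dropping $\|\nabla\loss(\theta_0)\|$ from the denominator. This gives the bound $\tau\le C_0\rho/\|\theta_0\|$.

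For $\eta$, I need a sufficient condition implying $\eta \le \delta\left(\tfrac{\gamma}{2}\|\theta_0\| + \tfrac{\gamma^2}{2}\|\theta_0\|^2/\|\nabla\loss(\theta_0)\|\right)^{-1}$. By Step 1, $\delta \gtrsim \|\theta_0\|\Delta$. The bracket is
\[
\lesssim \|\theta_0\|\left(1 + \frac{\|\theta_0\|}{\rho}\right) \lesssim \frac{\|\theta_0\|(\|\theta_0\|+1)}{\rho},
\]
using the gradient lower bound and $\rho\lesssim1$. So it suffices that $\eta\lesssim \rho\Delta/(\|\theta_0\|+1) \asymp \rho^3/(\|\theta_0\|+1+\eps)^3$ (the extra $\eps$ only helps). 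This is the stated condition with $C_1$.

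\textbf{Step 3: failure probability.} In the bound of \cref{thm:stochastic-analysis}, replace $1/\alpha_*$ by $1/\Delta$. Then use the gradient lower bound to replace $1/\|\nabla\loss(\theta_0)\|$ by $\lesssim 1/\rho$. This gives $C_3\frac{d}{B\Delta}\left(\frac{1}{\|\theta_0\|\rho}+\frac1{\rho^2}\right)$.

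\textbf{Main obstacle.} The main delicacy is Step 1's first term, the logarithmic one. There I must avoid an $\|\theta_0\|$-dependence worse than claimed, which is handled by the $x/(1+x)$ bound. I must also make sure the same $\tau$ works in all steps, i.e.\ $\tau$ is fixed by \eqref{eq:tau-constraints} and only bounded above. Beyond that, the argument is bookkeeping of constants depending on $\lambdamax$ and $\lambdamin$.
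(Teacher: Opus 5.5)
Your proposal is correct and follows essentially the same route as the paper: you specialize \cref{thm:stochastic-analysis} to $\beta=1$, lower-bound both branches of $\alpha_*$ using $\log(1+x)\ge x/(1+x)$ together with \cref{lem:first-order-upper-bound} and \cref{lem:rho-upper-bound}, and then apply \cref{lem:grad-norm-lower-bound} to turn the step-size constraint and the failure probability into powers of $\rho$. The only cosmetic difference is that you insert the gradient lower bound directly into the bracket of the step-size constraint, whereas the paper first rewrites $\delta$ divided by that bracket in terms of $\alpha_*$; both yield the same $\rho^3/(\|\theta_0\|+1+\eps)^3$ condition.
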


\begin{proof}
    For $\eta$ sufficiently small, \cref{thm:stochastic-analysis} gives that $\Psi(\hat{\theta}_t) - \Psi(\hat{\theta}_0) \gtrsim \alpha_*$. We first bound the two quantities defining $\alpha_*$ from below. \begin{align*}
        \frac{\rho}{\lambdamax^{3/2} \beta \|\theta_0\|} \log\left(\frac{\gamma \|\theta_0\|}{\|\nabla \loss(\theta_0)\|} + 1\right) &\overset{(i)}{\geq} \frac{\rho}{\lambdamax^{3/2}} \frac{\left(\frac{\gamma}{\|\nabla \loss(\theta_0)\|}\right)}{\left(\frac{\gamma \|\theta_0\|}{\|\nabla\loss(\theta_0)\|} + 1\right)}\\
        &\geq \frac{\rho\gamma}{\lambdamax^{3/2}(\|\theta_0\| + \|\nabla\loss(\theta_0)\|)}\\
        &\overset{(ii)}{\geq} \frac{\rho}{\lambdamax^{1/2} (\lambdamax  +4) (\|\theta_0\| + \lambdamax^{1/2}\left(\frac{3}{2} + 2\eps\right)) }\\
        &\geq C_2 \frac{\rho}{\|\theta_0\| + 1 + \eps}.
    \end{align*}
    In \emph{(i)}, we used that $\log(x+1) \geq x/(x+1)$ for $x \geq 0$. In \emph{(ii)}, we used \cref{lem:first-order-upper-bound}. Next, after possibly increasing $C_2$ between lines, observe that \begin{align*}
        \frac{\rho^2 \gamma^2}{\lambdamax \left(\frac{2}{\sqrt{3}} \frac{\lambdamax}{\lambdamin} + \frac{\beta^2}{4} \frac{\lambdamax^{3/2}}{\lambdamin^{1/2}} \right)\left(\|\nabla\loss(\theta_0)\| + \gamma \|\theta_0\|\right)^2} &\geq C_2 \frac{\rho^2}{\left(\|\nabla\loss(\theta_0)\| + \|\theta_0\|\right)^2}\\
        &\geq C_2 \left( \frac{\rho}{\|\theta_0\| + 1 + \eps}\right)^2.
    \end{align*}
    We again used \cref{lem:first-order-upper-bound}.
    Note that $\rho \lesssim \lambdamax$ by \cref{lem:rho-upper-bound}, which concludes the lower bound on $\alpha_*$ after possibly updating $C_2$ again. 

    Next, note that \begin{align*}
        \delta \cdot \left(\frac{\gamma}{2}\|\theta_0\| + \frac{\gamma^2}{2} \frac{\|\theta_0\|^2}{\|\nabla\loss(\theta_0)\|}\right)^{-1} &= \frac{2\delta}{\gamma\|\theta_0\|\left(1 + \frac{\gamma\|\theta_0\|}{\|\nabla\loss(\theta_0)\|}\right)}\\
        &= \frac{\lambdamin^{1/2}\alpha_*}{4\lambdamax^{1/2}\left(1 + \frac{\gamma\|\theta_0\|}{\|\nabla\loss(\theta_0)\|}\right)}\\
        &\geq C_1 \frac{\rho^2}{\left(1 + \|\theta_0\| + \eps\right)^2 \left(1 + \frac{\|\theta_0\|}{\|\nabla\loss(\theta_0)\|}\right)}\\
        &\geq C_1 \frac{\rho^2 \|\nabla\loss(\theta_0)\|}{(1 + \|\theta_0\| + \eps)^3}\\
        &\geq C_1 \frac{\rho^3}{(1 + \|\theta_0\| + \eps)^3}.
    \end{align*}
    The last line uses \cref{lem:grad-norm-lower-bound}.
    The bound on $\tau$ is immediate from \cref{eq:tau-constraints}. To bound the failure probability, use the lower bound on $\alpha_*$ and \cref{lem:grad-norm-lower-bound}.
\end{proof}

\section{Approximate Ellipticity}\label{sec:approximate-ellipticity-assumption}

Here, we briefly discuss the approximate ellipticity assumption, and argue that it is somewhat mild. As suggested in \Cref{sec:overview}, spherical and elliptical symmetry are known to have a handful of equivalent definitions. To set the stage, we state one such classic result.

\begin{lemma}[\cite{eaton1986characterization}]
    Suppose the random vector $\bm{z}$ in $\R^d$ has a mean. Assume that for each $v \neq 0$ and each $u$ such that $u^\top v =0$, it holds that \begin{align*}
        \E[\, u^\top \bm{z} \, | \, v^{\top} \bm{z} \,] = 0 \quad \textnormal{a.s.}
    \end{align*}
    Then $\bm{z}$ is spherically symmetric and conversely.
\end{lemma}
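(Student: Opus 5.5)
The plan is to work with the characteristic function $\varphi(t) := \E[e^{i t^\top \bm{z}}]$ and show that, under the hypothesis, $\varphi$ is constant on every sphere $\{\|t\| = r\}$. Spherical symmetry then follows because the law of $\bm{z}$ is determined by $\varphi$: for any orthogonal $Q$, the law of $Q\bm{z}$ has characteristic function $\varphi(Q^\top t) = \varphi(t)$. We assume $d \geq 2$ throughout. For $d=1$ the condition $u^\top v = 0$ with $v \neq 0$ forces $u = 0$, so the hypothesis is vacuous and the implication fails; the statement is only meaningful for $d \geq 2$.

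\emph{Step 1 (regularity).} Since $\bm{z}$ has a mean, $\E\|\bm{z}\| < \infty$. Dominated convergence, using $|e^{ia}-e^{ib}| \leq |a-b|$, then shows that $\varphi$ is $C^1$ with
\begin{align*}
    \nabla \varphi(t) = i\,\E\left[\bm{z}\, e^{i t^\top \bm{z}}\right].
\end{align*}

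\emph{Step 2 (tangential derivatives vanish).} Fix $t \neq 0$ and $u$ with $u^\top t = 0$. Apply the hypothesis with $v = t$. Since $e^{i t^\top \bm{z}}$ is a bounded $\sigma(t^\top \bm{z})$-measurable variable and $u^\top \bm{z}$ is integrable, the tower property gives
\begin{align*}
    u^\top \nabla\varphi(t) = i\,\E\left[ e^{i t^\top \bm{z}}\, \E[u^\top \bm{z} \mid t^\top \bm{z}] \right] = 0.
\end{align*}
Hence $\nabla\varphi(t)$ is parallel to $t$ for every $t \neq 0$.

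\emph{Step 3 (constancy on spheres).} For $d \geq 2$ the sphere $S_r = \{\|t\| = r\}$, $r > 0$, is path-connected. Given $t_0, t_1 \in S_r$, join them by a $C^1$ curve $\gamma$ in $S_r$. Since $\gamma'(s) \perp \gamma(s)$, Step 2 gives $\frac{d}{ds}\varphi(\gamma(s)) = \nabla\varphi(\gamma(s))^\top \gamma'(s) = 0$. So $\varphi(t) = g(\|t\|)$ for some function $g$, and the uniqueness theorem for characteristic functions gives $Q\bm{z} \overset{d}{=} \bm{z}$ for every orthogonal $Q$. The one genuinely delicate point is Step 1, namely justifying the differentiation under the expectation. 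This is exactly where the mean assumption is used, and dominated convergence handles it.

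\emph{Converse.} Suppose $\bm{z}$ is spherically symmetric and integrable, and fix $v \neq 0$ and $u \neq 0$ with $u^\top v = 0$. Let $R = I - 2uu^\top/\|u\|^2$ be the reflection that negates $u$ and fixes $v$. Then $(u^\top R\bm{z}, v^\top R\bm{z}) = (-u^\top \bm{z}, v^\top \bm{z})$, and $R\bm{z} \overset{d}{=} \bm{z}$. Therefore the pairs $(u^\top \bm{z}, v^\top \bm{z})$ and $(-u^\top \bm{z}, v^\top \bm{z})$ have the same joint law. This gives $\E[u^\top \bm{z} \mid v^\top \bm{z}] = -\E[u^\top \bm{z} \mid v^\top \bm{z}]$, so the conditional expectation is $0$ almost surely. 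The case $u = 0$ is trivial.
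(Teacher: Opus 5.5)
Your proof is correct. The paper does not reprove this cited lemma, but in the proof of \Cref{prop:arccos-loss-approx} it runs the same mechanism in approximate form: it bounds the tangential derivatives $w^\top \nabla \hat f(\xi)$ through the conditional-expectation hypothesis and then integrates along an arc of the circle, which matches your Steps 1--3. Your caveat that $d \geq 2$ is needed is also right, and the paper's statement omits it: for $d = 1$ the hypothesis is vacuous, while spherical symmetry would require $\bm{z} \overset{d}{=} -\bm{z}$.
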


\noindent An appropriate generalization for elliptical symmetry follows quite naturally. In light of this characterization, the approximate ellipticity assumption may appear quite strong, even if we only enforce it on a low-dimensional subspace. In a growing line of work, however, \cite{hall1993almost,leeb2013conditional,steinberger2018conditional} show that some notion of ellipticity is actually quite natural, at least on most constant-dimensional subspaces. 

We informally describe the relevant consequence of these works. Let $\bm{z}$ be a random vector with Lebesgue density satisfying $\E \bm{z} = 0$, $\on{Cov} \bm{z} = I$. Let $\mathcal{V}_{d,k}$ denote the Stiefel manifold of orthonormal $k$ frames in $\R^d$, with normalized Haar measure $\nu_{d,k}$. The work of \cite{steinberger2018conditional} exhibits generic monomial moment conditions under which for constant $k$ (or even growing by order $\log d$), there exists a Borel set $G \subset \mathcal{V}_{d,k}$ such that \begin{align}\label{eq:steinberger-leeb}
    \sup_{M \in G} \Pr \left[ \left\| \E\left[ \bm{z} \,\Big|\, M^\top \bm{z} \right] - MM^\top \bm{z} \right\| > t \right] = o(1), \quad \text{and} \quad \nu_{d,k}(G) = 1-o(1).
\end{align}
As an example, if $\bm{z}$ has independent components with bounded densities and bounded 12th marginal moments, then any unitary transformation of $\bm{z}$ satisfies \Cref{eq:steinberger-leeb}. 

For our purposes, it suffices to take only $k = 1$. While, in contrast, our notion of approximate ellipticity imposes an almost-sure condition on conditional expectations, it only does so for the projections of the conditioned random variable onto a two-dimensional subspace (\emph{c.f.} \cref{def:ellipticity}). To see the connection, suppose $\bm{y} = \Sigma^{-1/2} \bm{z}$ satisfies \cref{eq:steinberger-leeb} with $t = \eps$, $G = \mathcal{V}_{d,1}$, and the $o(1)$ replaced by zero. For $v \neq 0$, \begin{align*}
    \E\left[ \bm{z} \,\big| v^\top \bm{z}\right] &= \Sigma^{1/2} \E\left[ \bm{y} \,\Big|\, (\Sigma^{1/2} v)^\top \bm{y}\right] = \Sigma^{1/2} \E\left[ \bm{y} \,\Big|\, \frac{(\Sigma^{1/2} v)^\top \bm{y}}{\sqrt{v^\top \Sigma v}} \right].
\end{align*}
For $u \in \R^d$ such that $u^\top \Sigma v = 0$, we have \begin{align*}
    \left|\E\left[ u^\top \bm{z} \,\big|\, v^\top \bm{y} \right]\right| &= \left| (\Sigma^{1/2}u)^\top \E \left[ \bm{y} \,\Big|\, \frac{(\Sigma^{1/2}v)^\top \bm{y}}{\sqrt{v^\top \Sigma v}}\right] \right|\\
    &= \left| (\Sigma^{1/2}u)^\top \E\left[ \bm{y} \,\Big|\, \frac{(\Sigma^{1/2} v)^\top \bm{y}}{\sqrt{v^\top \Sigma v}} \right]  - \frac{(\Sigma^{1/2}u)^\top (\Sigma^{1/2}v) (\Sigma^{1/2}v)^\top \bm{y}}{v^\top \Sigma v} \right|\\
    &\leq \|\Sigma^{1/2}u\| \cdot \left\| \E \left[ \bm{y} \,\Big|\, \frac{(\Sigma^{1/2}v)^\top \bm{y}}{\sqrt{v^\top\Sigma v}} \right] - \frac{(\Sigma^{1/2}v)(\Sigma^{1/2}v)^\top \bm{y}}{v^\top \Sigma v}\right\| \\
    &\leq \|u\|_{\Sigma} \cdot \eps.
\end{align*}
This recovers \cref{def:ellipticity}. We leave the extension of our analysis to weaker assumptions as an avenue for future work. In the remainder of this section, we prove the claims relevant to the present notion. First, we state an easy observation that simplifies the necessary proofs.

\begin{lemma}\label{lem:ellipticity-transformation}
    Suppose $\bm{x}$ be a random vector in $\R^d$ that is $\eps$-elliptical along a nonzero subspace $E$, and let $T \in \R^{d \times d}$ be full-rank. Then $T \bm{x}$ is $\eps$-elliptical along $T^{-\top} E$.
\end{lemma}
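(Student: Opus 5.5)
We verify \Cref{def:ellipticity} for $\bm{x}' := T\bm{x}$ directly, by transporting each test pair $(v',u')$ back to a pair $(v,u)$ for which the hypothesis on $\bm{x}$ applies. The covariance of $\bm{x}'$ is $\Sigma' = T\Sigma T^\top$, which is positive definite because $T$ is full rank, and $\bm{x}'$ is still mean zero. So the only thing to check is the conditional-expectation bound.

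Fix $v' \in T^{-\top}E \setminus\{0\}$ and $u' \in \R^d$ with $u'^\top \Sigma' v' = 0$. Set $v := T^\top v'$, which lies in $E\setminus\{0\}$ since $T^\top$ is injective, and set $u := T^\top u'$. Then:
\begin{itemize}
\item[(i)] $v'^\top \bm{x}' = v^\top \bm{x}$ and $u'^\top\bm{x}' = u^\top \bm{x}$ pointwise, so the conditional expectations $\E[u'^\top \bm{x}' \mid v'^\top\bm{x}']$ and $\E[u^\top\bm{x}\mid v^\top\bm{x}]$ coincide almost surely, being conditional expectations of the same random variable given the same $\sigma$-algebra.
\item[(ii)] $u^\top \Sigma v = u'^\top T\Sigma T^\top v' = u'^\top\Sigma' v' = 0$, so the pair $(v,u)$ is admissible in the hypothesis on $\bm{x}$.
\item[(iii)] $\|u\|_\Sigma^2 = u'^\top T\Sigma T^\top u' = \|u'\|_{\Sigma'}^2$.
\end{itemize}
Applying $\eps$-ellipticity of $\bm{x}$ along $E$ to $(v,u)$ then gives $|\E[u'^\top\bm{x}'\mid v'^\top\bm{x}']| \le \eps\|u'\|_{\Sigma'}$ almost surely, which is the required bound.

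The only delicate step is (i), since conditioning is on a $\sigma$-algebra rather than on a value. It is handled by noting that $\sigma(v'^\top\bm{x}') = \sigma(v^\top\bm{x})$ as $\sigma$-algebras, and the almost-sure statement is preserved because $\bm{x}'$ and $\bm{x}$ live on the same probability space.
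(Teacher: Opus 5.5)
Your proof is correct and is the paper's argument: both pull a test pair back through $T^\top$ (your $(v,u)=(T^\top v', T^\top u')$ is the paper's $(T^\top w, T^\top z)$), note that the conditioning variable and the $\Sigma$-orthogonality constraint carry over unchanged, and use $\|T^\top u'\|_\Sigma = \|u'\|_{T\Sigma T^\top}$. Your version is also slightly more careful than the paper's, since it keeps the absolute value and explicitly checks that $T\bm{x}$ is still mean zero with positive-definite covariance.
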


\begin{proof}
    Suppose $\bm{x}$ has covariance $\Sigma$, which is full-rank by the definition. Let $z \in \R^d$ and $w \in T^{-\top} E \setminus \{0\}$ such that $z^\top T\Sigma T^{\top}w = 0$. Since $T^\top w \in E \setminus \{0\}$, we have \begin{align*}
        \E\left[\, z^\top T\bm{x} \,\big|\, w^\top T\bm{x} \,\right] &= \E\left[\, (T^\top z)^\top \bm{x} \,\big|\, (T^\top w)^\top \bm{x} \,\right]\\
        &\leq \|T^\top z\|_{\Sigma} \cdot \eps\\
        &= \|z\|_{T\Sigma T^\top} \cdot \eps.
    \end{align*}
    Conclude by noting $T\bm{x}$ has covariance $T\Sigma T^\top$.
\end{proof}

\subsection{Proof of \Cref{lem:arccos-loss}}

\restatelemma{lem:arccos-loss}
\begin{proof}
    First suppose $\Sigma = I$.
    By \cite[Theorem~1]{eaton1986characterization}, the law of $\bm{y}$ is invariant under unitary transformation. By Lebesgue-Radon-Nikodym, so is the density. For convenience, we denote it as a map $\R \to \R$. By a change of variables, \begin{align*}
        \Pr[\,\on{sgn} \phi^\top \bm{y} \neq \on{sgn} \psi^\top \bm{y}\,] &= \int_0^\infty r^{k-1} \int_{S^{k-1}} \frac{d\mu}{dm}(r) \cdot \chi(\on{sgn} \phi^\top \omega \neq \on{sgn} \psi^\top \omega)\,  d\sigma(\omega)dr\\
        &= \frac{1}{\pi} \arccos \circ \cos(\phi, \psi) \cdot \int_0^\infty \frac{d\mu}{dm}(r) \cdot r^{k-1} \sigma(S^{k-1}) \, dr,
    \end{align*}
    and the second factor integrates to one. Since $\Sigma^{-1/2}\bm{y}$ is $0$-spherical with Lebesgue density, the definition of $\cos_\Sigma$ gives the general claim.
\end{proof}

\subsection{Proof of \Cref{prop:arccos-loss-approx}}

 \restateproposition{prop:arccos-loss-approx}

\begin{proof}
    Again note that it suffices to consider the case where $\bm{x}$ is $\eps$-spherical. Indeed, $\Sigma^{-1/2} \bm{x}$ has a density $d\mu_I/dm : \R^d \to \R$ satisfying \begin{align*}
        \Big|\widehat{\frac{d\mu_I}{dm}}(\xi)\Big| = \Big|\widehat{\frac{d\mu}{dm}}(\Sigma^{-1/2} \xi)\Big| \leq \frac{C}{(1+ \lambdamax^{-1/2}\|\xi\|)^k} \leq \frac{C \cdot \max\{1, \lambdamax^{k/2}\}}{(1+\|\xi\|)^k}.
    \end{align*}
    Moreover, $\Sigma^{-1/2}\bm{x}$ is $\eps$-spherical on $\on{span}\{\Sigma^{1/2}\phi, \Sigma^{1/2}\psi\}$. Assuming the desired result in the case of identity covariance, we may conclude by noting $\cos_I(\Sigma^{1/2}\phi, \Sigma^{1/2}\psi) = \cos_\Sigma(\phi, \psi)$.

    We now proceed under the assumption of $\eps$-spherical symmetry. Let $E := \on{span}\{\phi, \psi\}$. First note that the event under consideration depends only on $\bm{z} = \on{Proj}_E \bm{x}$.
    Also note that we can take $\phi, \psi, \bm{z}$ supported on $\R^2$ without loss of generality. Indeed, choose a unitary $U$ so that $U\on{span}\{\phi, \psi\} = \on{span}\{e_1, e_2\}$. For orthogonal $v, w \in \on{span}\{e_1, e_2\}$, $U^\top v, U^\top w$ are orthogonal in $\on{span}\{\phi, \psi\}$, and hence \begin{align*}
        \left| \E\left[ v^\top U\bm{z} \, \big| \, w^\top U\bm{z} \right]\right| &= \left| \E\left[ (U^\top v)^\top \bm{z} \, \big| \, (U^\top w)^\top \bm{z} \right] \right|\\
        &= \left| \E\left[ (U^\top v)^\top \bm{x} \, \big| \, (U^\top w)^\top \bm{x} \right] \right| \leq \|v\|\eps. 
    \end{align*}
    Note that $\bm{z}$ now has the density $f : \R^2 \to \R$ given by \begin{align*}
        f(z) = \int_{\R^{d-2}} \frac{d\mu_I}{dm}(z, y) \,dy.
    \end{align*}
    The Fourier transform becomes $\hat{f}(\xi) = \widehat{(d\mu/dm)}(\xi, 0)$, so the decay assumptions are inherited.

    We first observe an approximate version of Eaton's ellipticity characterization. For $\xi \in \R^2 \setminus \{0\}$ and $w \in \R^2$ such that $w^\top \xi = 0$, we have
    \begin{align*}
        w^\top \nabla \hat{f}(\xi) &= -i2\pi \E w^\top \bm{z} \exp\left( -i2\pi \langle \xi, \bm{z} \rangle \right)\\
        &= -i2\pi \E \left[ \exp\left(-i2\pi \langle \xi, \bm{z}\rangle \right) \cdot \E\left[ w^\top \bm{z} \, \big| \, \langle \xi, \bm{z}\rangle \right] \right].
    \end{align*}
    By Jensen's, \begin{align*}
        | w^\top \nabla \hat{f}(\xi)| \leq 2\pi\eps\|w\|.
    \end{align*}
    Now take any $\xi'$ in $\R^2$, so that $\|\xi'\| = \|\xi\| =: r$, which is positive by assumption. There exists a $C^\infty$ map $\zeta : (0, 1) \to r \cdot S^1$ so that for some $0 < t_1 < t_2 < 1$, \begin{align*}
        \zeta(t_1) = \xi, \quad \zeta(t_2) = \xi'.
    \end{align*}
    Moreover, one may choose $\zeta, t_1, t_2$ so that $\sup_t \|\partial_t \zeta\| \leq C'r$ where $\partial_t \zeta(t)$ denotes the vector of derivatives wrt $t$, and $C' \geq 1$ is an absolute constant. Since $\|\zeta\|^2$ is identically $r^2$, we have \begin{align*}
        0 = \partial_t \|\zeta(t)\|^2 = 2 \langle \zeta(t), \partial_t \zeta(t) \rangle.
    \end{align*}
    By chain rule, however, $\partial_t(\hat{f} \circ \zeta) = \nabla \hat{f}(\zeta(t))^\top (\partial_t \zeta(t))$,
    so $|\partial_t (\hat{f} \circ \zeta)| \leq 2\pi\eps C'r$ for all $t$.
    By MVT, there is some $t^* \in (t_1, t_2)$ so that \begin{align*}
        \hat{f}(\xi') - \hat{f}(\xi) = \hat{f}(\zeta(t_2)) - \hat{f}(\zeta(t_1)) = (t_2 - t_1)\partial_{t}(\hat{f} \circ \zeta)(t^*),
    \end{align*}
    which gives $|\hat{f}(\xi') - \hat{f}(\xi)| \leq 2\pi C' r\eps$.
    
    For an integrable $h :\R^2 \to \R$, we define the spherical average $h_\circ : \R^2 \to \R$ as the $L^1$ function \begin{align*}
        h_\circ(x) = \frac{1}{\sigma(S^1)} \int_{S^1} h(\|x\| \cdot\omega) d\sigma(\omega).
    \end{align*}
    Here, we let $d\sigma$ denote the surface measure on $S^1$. 
    It is easy to check that if $h\geq 0$ and $|h|_1 = 1$, then $h_\circ \geq 0$ and $|h_\circ|_1=1$. Moreover, $\CF[f_\circ] = \CF[f]_\circ$, so we can write $\hat f_\circ$ without ambiguity. Indeed, by a change of measure and Fubini, \begin{align*}
        \CF[f]_\circ(\xi) &= \frac{1}{\sigma(S^1)} \int_{S^1} \hat{f}(\|\xi\|\cdot \omega)d\sigma(\omega)\\
        &= \frac{1}{\sigma(S^1)} \int_{S^1} \int_{\R^2} e^{-i2\pi \langle \|\xi\| \omega, x\rangle}{f}(x) dx d\sigma(\omega)\\
        &= \frac{1}{\sigma(S^1)} \int_{S^1} \int_{0}^\infty \int_{S^{1}} re^{-i2\pi \langle \|\xi\| \omega, r \tilde{\omega}\rangle}{f}(r \tilde{\omega}) d\sigma(\tilde{\omega})dr d\sigma(\omega)\\
        &= \frac{1}{\sigma(S^1)} \int_0^\infty \int_{S^{1}} r f(r\tilde{\omega}) \left( \int_{S^1} e^{-i2\pi\langle \|\xi\|\omega, r\tilde{\omega}\rangle} d\sigma(\omega)\right) d\sigma(\tilde{\omega})dr\\
        &\overset{(i)}{=} \frac{1}{\sigma(S^1)} \int_0^\infty \int_{S^{1}} r f(r\tilde{\omega}) \left( \int_{S^1} e^{-i2\pi\langle r\omega, \xi\rangle} d\sigma(\omega)\right) d\sigma(\tilde{\omega})dr\\
        &= \int_0^\infty \left( \frac{1}{\sigma(S^1)} \int_{S^{1}} f(r\tilde{\omega}) d\sigma(\tilde{\omega}) \right)\cdot \left( \int_{S^1} e^{-i2\pi\langle r\omega, \xi\rangle} d\sigma(\omega)\right) \cdot rdr\\
        &\overset{(ii)}{=} \int_0^\infty \int_{S^1} re^{-i2\pi\langle r\omega, \xi\rangle} f_\circ(r \omega) d\sigma(\omega) dr,
    \end{align*}
    from which we conclude by an additional change of measure.
    In \emph{(i)}, we used the fact that $\sigma$ is rotation invariant and chose a unitary $U$ such that $U^\top \tilde{w} = \xi/\|\xi\|$. In \emph{(ii)}, we used the fact that for $r \geq 0$ and $\omega \in S^{1}$, $f_\circ(r\omega)$ depends only on $r$.

    Next, we bound the total variation between $f$, and its spherical average. Note first that $f_\circ$ inherits the Fourier decay of $f$; by Jensen's, \begin{align}\label{eq:spherical-average-decay}
        |\hat f_\circ(\xi)| &\leq \frac{1}{\sigma(S^1)} \int_{S^1} |\hat{f}(\|\xi\| \cdot \tilde{\omega})| d\sigma(\tilde{\omega}) \leq C(1+\|\xi\|)^{-k}.
    \end{align}
    It therefore holds that $\hat f, \hat f_\circ \in L^1$. A standard argument then implies that $f, f_\circ$ are equal almost everywhere to the inversions of $\hat f$ and $\hat f_\circ$, respectively.\footnote{Let $g \in L^1$ such that $g= \CF^{-1}(\hat{f})$. Then for any Schwartz function $\varphi$ on $\R^2$, Fubini gives $\int f \hat{\varphi} = \int \hat{f} \varphi = \int g \hat{\varphi}$. Since the Fourier transform is a bijection on $\mathcal{S}$, approximation of the identity gives equality almost everywhere.}
    Let $B = B_0(r_1)$, and $C = B_0(r_2)$ for $r_1, r_2 > 0$ to be chosen later. 
    \begin{align*}
        \|f-f_\circ\|_1 &\leq \int_B \left| \int_{\R^2} e^{i2\pi \langle \xi, x\rangle} \Big( \hat{f}(\xi)-\hat{f}_\circ(\xi)\Big) d\xi \right| dx + \int_{B^c} f(x) + f_\circ(x) dx\\
        &\leq \int_B \left( \int_{C} |\hat{f}(\xi)-\hat{f}_\circ(\xi)| d\xi + \int_{C^c} |\hat f(\xi)| + |\hat f_\circ (\xi)| d\xi\right) dx + \int_{B^c} f(x) + f_\circ(x) dx.
    \end{align*}
    We start with the first integrand. For $r > 0$ and $\omega \in S^1$, we have \begin{align*}
        |\hat{f}(r\omega) - \hat f_\circ(r\omega)| \leq \frac{1}{\sigma(S^1)} \int |\hat{f}(r\omega) - \hat{f}(r\tilde{\omega})| d\sigma(\tilde{\omega}) \leq 2\pi C' r\eps.
    \end{align*}
    Next, note that, \begin{align*}
        \int_{B^c} f_\circ(x)dx &= \int_{r_1}^\infty \int_{S^1} \frac{r}{\sigma(S^1)} \int_{S^1} f(r \tilde{\omega}) d\sigma(\tilde{\omega}) d\sigma(\omega) dr\\
        &= \int_{r^1}^\infty \int_{S^1} r f(r\tilde{\omega}) d\sigma(\tilde{\omega})dr = \int_{B^c} f(x)dx.
    \end{align*}
    Along with another application of Jensen's (akin to \Cref{eq:spherical-average-decay}), we obtain \begin{align*}
        \|f-f_\circ\|_1 &\leq \pi r_1^2 \left( 2\pi^2 r_2^3 C' \eps + 2 \int_{C^c} |\hat f(\xi)| d\xi \right) + 2\int_{B^c} f(x) dx.
    \end{align*}

    We handle these remaining integrals individually. Noting that $\E \|\bm{z}\|^2 = \on{tr}(U \Proj_E U^\top) = 2$, Chebyshev gives $2\int_{B^c} f(x) dx \leq {4}{r_1^{-2}}$.
    Lastly, by our assumptions on Fourier decay, \begin{align*}
        \int_{C^c} |\hat{f}(\xi)|d\xi \leq C\int_{\|\xi\| \geq r_2} \frac{1}{(1+\|\xi\|)^k}d\xi \leq \frac{C\sigma(S^1)}{(k-2)r_2^{k-2}}.
    \end{align*}
    Updating our bound, we have \begin{align*}
        \|f-f_\circ\|_1 &\leq 2\pi^3 r_1^2 r_2^3 C' \eps + \frac{4C \pi^2}{k-2} r_1^2 r_2^{2-k} + 4r_1^{-2}.
    \end{align*}
    To finish, we optimize $r_1, r_2$. By \cref{lem:non-convex-small-value}, for $\alpha = 3/(k+1)$, \begin{align*}
        \|f-f_0\|_1 \lesssim A \cdot (2 \pi^{3} C' \eps)^{\frac{1}{2}- \frac{\alpha}{2}},
    \end{align*}
    where \begin{align*}
        A &= 2 \left(\frac{4C \pi^2}{3}\right)^{\frac{\alpha}{2}} + \left(\frac{4C\pi^2}{k-2}\right)^{\frac{1}{2}} \left(\frac{4C\pi^2}{3}\right)^{\frac{\alpha}{2}-\frac{1}{2}}\\
        &\overset{(i)}{\leq} 2 \left(\frac{4\pi^2}{3}\right) C^\alpha + (4\pi^2)^{\frac{1}{2}}\left(\frac{4\pi^2}{3}\right)^{\frac{\alpha}{2}-\frac{1}{2}} C^{\frac{\alpha}{2}} \lesssim C^\alpha.
    \end{align*}
    In \emph{(i)}, we used that $C \geq 1$ and $\alpha \in (0, 1)$. We may conclude that \begin{align*}
        \|f - f_\circ\|_1 \lesssim C^{\frac{3}{k+1}} \cdot \eps^{\frac{1}{2}\left(1- \frac{3}{k+1}\right)}.
    \end{align*}
    Since a random vector with density $f_\circ$ satisfies \Cref{lem:arccos-loss}, this completes the proof.
\end{proof}

\section{Additional Computations}

\subsection{Cosine Similarity Derivatives}

\begin{lemma}\label{lem:cosine-grad-hess}
    For $u, v \neq 0$, and positive definite $\Sigma$, \begin{align*}
        \nabla_u \cos_\Sigma(u, v) &= \frac{1}{\|u\|_\Sigma \|v\|_\Sigma} \cdot \Sigma \on{Proj}_{u^\perp}^\Sigma (v),\\
        \nabla_u^2 \cos_\Sigma(u, v) &= 3 \left(\frac{u^\top \Sigma v}{\|u\|_\Sigma^5 \|v\|_\Sigma}\right) \Sigma uu^\top \Sigma - \left(\frac{u^\top \Sigma v}{\|u\|_\Sigma^3 \|v\|_\Sigma}\right)\Sigma - \left(\frac{1}{\|u\|_\Sigma^3 \|v\|_\Sigma}\right) \left(\Sigma uv^\top \Sigma + \Sigma vu^\top \Sigma\right).
    \end{align*}
\end{lemma}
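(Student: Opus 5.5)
Both formulas come from direct differentiation of $f(u) := \cos_\Sigma(u,v) = \frac{u^\top \Sigma v}{\|u\|_\Sigma \|v\|_\Sigma}$ in $u$, with $v$ fixed. Set $a(u) = u^\top\Sigma v$ and $n(u) = \|u\|_\Sigma = (u^\top \Sigma u)^{1/2}$. Then $\nabla a = \Sigma v$ and $\nabla n = \Sigma u / n$. By the quotient rule,
\begin{align*}
    \nabla f(u) = \frac{1}{\|v\|_\Sigma}\left(\frac{\Sigma v}{n} - \frac{a\,\Sigma u}{n^3}\right) = \frac{1}{\|u\|_\Sigma\|v\|_\Sigma}\,\Sigma\left(v - \frac{\langle u, v\rangle_\Sigma}{\langle u,u\rangle_\Sigma}\, u\right).
\end{align*}
The bracketed vector is $v$ minus its $\Sigma$-orthogonal projection onto $\on{span}\{u\}$. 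This is exactly $\on{Proj}^\Sigma_{u^\perp}(v)$, which gives the gradient formula.

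For the Hessian, I differentiate the middle expression term by term.
\begin{enumerate}
    \item The term $\Sigma v / n$ contributes $-\Sigma v \,(\nabla n)^\top / n^2 = -\Sigma v u^\top \Sigma / n^3$.
    \item In the term $-a\,\Sigma u / n^3$, differentiating $a$ gives $-\Sigma u v^\top \Sigma / n^3$.
    \item Differentiating the factor $\Sigma u$ gives $-a\,\Sigma / n^3$.
    \item Differentiating $n^{-3}$ gives $+3a\, \Sigma u\, (\nabla n)^\top / n^4 = 3a\,\Sigma u u^\top \Sigma / n^5$.
\end{enumerate}
Dividing everything by $\|v\|_\Sigma$ and collecting terms gives the three stated terms: $3\frac{u^\top\Sigma v}{\|u\|^5_\Sigma\|v\|_\Sigma}\Sigma uu^\top\Sigma$, then $-\frac{u^\top\Sigma v}{\|u\|_\Sigma^3\|v\|_\Sigma}\Sigma$, then $-\frac{1}{\|u\|_\Sigma^3\|v\|_\Sigma}(\Sigma u v^\top\Sigma + \Sigma v u^\top \Sigma)$.

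The only points needing care are bookkeeping ones. Each outer product must be oriented so that its row index corresponds to the component being differentiated; the symmetric pair $\Sigma u v^\top \Sigma + \Sigma v u^\top\Sigma$ comes out automatically, which also serves as a symmetry check on the Hessian. The derivative of $n^{-3}$ must use $\nabla n = \Sigma u/n$ and not $\Sigma u$. There is no real obstacle beyond this, and I would sanity-check the Hessian with $\Sigma = I$ against the known Euclidean formula.
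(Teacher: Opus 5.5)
Your proposal is correct. The quotient rule with $\nabla(u^\top\Sigma v) = \Sigma v$ and $\nabla\|u\|_\Sigma = \Sigma u/\|u\|_\Sigma$ gives the gradient, and your four product-rule contributions sum exactly to the stated Hessian.

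The paper organizes the argument differently. It writes $\cos_\Sigma(u,v) = \cos_I(\Sigma^{1/2}u, \Sigma^{1/2}v)$ and applies the chain rule on Jacobians. This gives $\nabla_u\cos_\Sigma(u,v) = \Sigma^{1/2}\,[\nabla\cos_I(\cdot,\Sigma^{1/2}v)](\Sigma^{1/2}u)$ and the conjugation identity $\nabla_u^2\cos_\Sigma(u,v) = \Sigma^{1/2}\,[\nabla^2\cos_I(\cdot,\Sigma^{1/2}v)](\Sigma^{1/2}u)\,\Sigma^{1/2}$. Only then does it compute the isotropic case: it cites prior work for the gradient, and for the Hessian it differentiates $\nabla\Psi(u+hz)$ in $h$ at $h=0$.

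Your direct computation in the $\Sigma$-geometry is shorter and self-contained. It needs no external citation and no separate change-of-variables step.

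What the paper's route buys is the conjugation identity itself. The next lemma reuses it to reduce the operator-norm bound $\|\nabla_u^2\cos_\Sigma(u,v)\| \le \frac{2}{\sqrt{3}}\,\lambda_{\max}/\|u\|_\Sigma^2$ to the isotropic spectral computation. Nothing is lost on your route, though. Your final formula visibly factors as $\Sigma^{1/2} H_I \Sigma^{1/2}$, where $H_I$ is the isotropic Hessian evaluated at $(\Sigma^{1/2}u, \Sigma^{1/2}v)$, because $\Sigma^{1/2}(\Sigma^{1/2}u) = \Sigma u$.
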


\begin{proof}
    To start, assume both computations hold when $\Sigma = I$. By chain rule on the Jacobian, \begin{align*}
        J\Big(\cos_\Sigma(\cdot, v)\Big)(u) = J\Big(\cos_I(\Sigma^{1/2}(\cdot), \Sigma^{1/2}v)\Big)(u) = J\Big(\cos_I(\cdot, \Sigma^{1/2}v)\Big)(\Sigma^{1/2}u) \circ \Sigma^{1/2}.
    \end{align*}
    Taking the transpose, \begin{align*}
        \nabla_u \cos_\Sigma(u, v) &= \frac{1}{\|\Sigma^{1/2}u\| \|\Sigma^{1/2} v\|} \cdot \Sigma^{1/2} \on{Proj}_{(\Sigma^{1/2}u)^\perp}^I(\Sigma^{1/2} v)\\
        &= \frac{1}{\|\Sigma^{1/2}u\| \|\Sigma^{1/2} v\|} \cdot \Sigma^{1/2} \left( \Sigma^{1/2}v -  \frac{\langle \Sigma^{1/2}v, \Sigma^{1/2}u\rangle}{\|\Sigma^{1/2} u\|^2} \Sigma^{1/2} u\right)\\
        &= \frac{1}{\|u\|_\Sigma \|v\|_\Sigma} \cdot \Sigma \left(v - \frac{\langle v, u\rangle_\Sigma}{\|u\|_\Sigma^2}u\right).
    \end{align*}
    With another application of chain rule, \begin{align*}
        J\Big( \nabla \cos_\Sigma(\cdot, v) \Big)(u) &= J\Big( \Sigma^{1/2} \circ \Big[J\big(\cos_I(\cdot, \Sigma^{1/2}v)\big)(\Sigma^{1/2} \cdot)\Big]^\top\Big)(u)\\
        &= \Sigma^{1/2} \circ J\Big( \Big[J\big(\cos_I(\cdot, \Sigma^{1/2}v)\big)(\Sigma^{1/2} \cdot)\Big]^\top\Big)(u)\\
        &= \Sigma^{1/2} \circ J\Big(J\big(\cos_I(\cdot, \Sigma^{1/2}v)\big)^\top \Big)(\Sigma^{1/2}u) \circ \Sigma^{1/2}.
    \end{align*}
    Hence, \begin{align}\label{eq:hessian-isotropic-reduction}
        \nabla_u^2 \cos_\Sigma(u,v) = \Sigma^{1/2} \circ [\nabla^2 \cos_I(\cdot, \Sigma^{1/2}v)](\Sigma^{1/2}u) \circ \Sigma^{1/2}.
    \end{align}
    From this, one may verify the final claim. 

    We now confirm the computations when $\Sigma = I$. The gradient computation is standard---for instance, using total differentiability and the directional calculation of \cite[Proposition~F.3]{geng2025delta}. We compute the Hessian in a similar fashion. Let $\tilde v = v/\|v\|$, $z \in \R^d$, $h \in \R$, and $\Psi(\cdot) = \cos_I(\cdot, v)$. \begin{align*}
        \nabla \Psi(u +hz) &= \frac{\tilde{v}\|u + hz\|^2 - \langle u + hz, \tilde{v}\rangle(u + hz)}{\|u + hz\|^3}\\
        &= \frac{\tilde{v}}{\|u + hz\|} - \frac{u \langle u, \tilde{v}\rangle}{\|u + hz\|^3} - \frac{hu\langle z, \tilde{v}\rangle}{\|u + hz\|^3}-\frac{hz \langle u, \tilde{v}\rangle}{\|u + hz\|^3} - \frac{h^2 z \langle z, \tilde{v}\rangle}{\|u+hz\|^3}.
    \end{align*}
    Differentiating in $h$ and evaluating at zero, one obtains \begin{align*}
        \restr{D_t \,\nabla \Psi(u + hz)}{h=0} = \frac{3 u \langle u, \tilde{v}\rangle \langle u,z\rangle}{\|u\|^5} - \frac{\tilde{v}\langle u, z\rangle}{\|u\|^3}-\frac{u\langle \tilde{v}, z\rangle}{\|u\|^3} - \frac{\langle u, \tilde{v}\rangle z}{\|u\|^3}.
    \end{align*}
    Hence,
    \begin{align*}
        \nabla^2 \Psi(u) = 3 \left(\frac{\langle u, v \rangle}{\|u\|^5 \|v\|}\right) uu^\top - \left(\frac{\langle u, v \rangle}{\|u\|^3 \|v\|}\right)I - \left(\frac{1}{\|u\|^3 \|v\|}\right) \left(uv^\top + vu^\top\right).
    \end{align*}
\end{proof}

\subsection{Inequalities}

We first state the computation needed for \cref{prop:arccos-loss-approx}.

\begin{lemma}\label{lem:non-convex-small-value}
    Let $g :(0, \infty)^2 \to \R$ be given by \[g(x, y) = ax^2 y^3 + b x^2 y^{2-k} + c x^{-2},\]
    for $a,b,c > 0$ and integer $k \geq 3$.
    It holds that for $\alpha = 3/(k+1)$, \begin{align*}
        \inf_{x,y \in (0, \infty)^2} g(x, y) \leq c^{\frac{1}{2}} a^{\frac{1}{2} - \frac{\alpha}{2}} \cdot \left( 2 \left( \frac{b(k-2)}{3}\right)^{\frac{\alpha}{2}} + b^{\frac{1}{2}} \left(\frac{b(k-2)}{3}\right)^{\frac{\alpha}{2}-\frac{1}{2}}\right).
    \end{align*}
\end{lemma}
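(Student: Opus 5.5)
The plan is to evaluate $g$ at an explicit point: first choose $y$ to minimize the $y$-dependent coefficient of $x^2$, then choose $x$ to balance the two remaining terms. I then compare the resulting closed form with the right-hand side of the statement. I flag in advance that this comparison appears to go through for $k \geq 4$ but \emph{not} for $k=3$.

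\textbf{Step 1 (reduction to one variable).} Write $g(x,y) = h(y)\,x^2 + c\,x^{-2}$ with $h(y) = a y^3 + b y^{2-k}$. For fixed $y$, AM--GM in $x$ gives
\[
\inf_x g(x,y) = 2\sqrt{c\,h(y)},
\]
attained at $x^4 = c/h(y)$. Hence $\inf g = 2\sqrt{c\,\inf_y h(y)}$, and this is an equality, not merely a bound.

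\textbf{Step 2 (minimizing $h$).} Set $B := b(k-2)/3$. Since $k \geq 3$, the exponent $2-k$ is negative, so $h$ is coercive at both ends of $(0,\infty)$. The first-order condition $3a y^2 = (k-2) b\, y^{1-k}$ gives $y_*^{k+1} = B/a$. At this point $a y_*^3 = a^{1-\alpha} B^{\alpha}$ with $\alpha = 3/(k+1)$. Moreover $b y_*^{2-k} = b y_*^3 \cdot a/B = t\, a y_*^3$ with $t := 3/(k-2)$. Therefore $h(y_*) = (1+t)\,a^{1-\alpha}B^{\alpha}$. Writing $\mathrm{base} := c^{1/2} a^{\frac{1}{2}-\frac{\alpha}{2}} B^{\alpha/2}$, this gives
\[
\inf g = 2\sqrt{1+t}\cdot \mathrm{base}.
\]

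\textbf{Step 3 (comparison with the stated constant).} Using $b^{1/2} B^{-1/2} = \sqrt{t}$, the right-hand side of the lemma equals $(2+\sqrt t)\cdot\mathrm{base}$. So the claim reduces to the scalar inequality $2\sqrt{1+t} \leq 2+\sqrt t$. After squaring, this is $3t \leq 4\sqrt t$, i.e.\ $t \leq 16/9$, i.e.\ $k \geq 2 + 27/16$. It therefore holds for every integer $k \geq 4$, which completes the proof in that range.

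\textbf{Main obstacle: the case $k=3$.} Here $t=3$, so the exact infimum from Steps 1--2 is $4\cdot\mathrm{base}$, while the stated bound is $(2+\sqrt3)\cdot\mathrm{base} < 4\cdot\mathrm{base}$. Because Step 1 is an equality and $y_*$ is the global minimizer of $h$, no other choice of $(x,y)$ can do better. I would first re-verify this by direct substitution: with $k=3$, $y^4 = b/(3a)$ and $h = 4a y^3$. If it checks out, the statement as printed is slightly too strong at $k=3$, and I would record the correct sharp form
\[
\inf g = 2\sqrt{1+\tfrac{3}{k-2}}\; c^{\frac12} a^{\frac12-\frac{\alpha}{2}}\Big(\tfrac{b(k-2)}{3}\Big)^{\frac{\alpha}{2}},
\]
which is at most $2 c^{\frac12} a^{\frac12-\frac{\alpha}{2}}\big(\tfrac{b(k-2)}{3}\big)^{\frac{\alpha}{2}} + 2 c^{\frac12} b^{\frac12} a^{\frac12-\frac{\alpha}{2}}\big(\tfrac{b(k-2)}{3}\big)^{\frac{\alpha}{2}-\frac12}$ for all $k \geq 3$. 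This bound follows from $\sqrt{1+t} \leq 1+\sqrt t$ and differs from the stated one only by a factor $2$ on the second term. That difference is harmless for \cref{prop:arccos-loss-approx}, which only uses the lemma up to absolute constants.
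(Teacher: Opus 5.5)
Your computation is correct, and it follows the paper's route: the paper evaluates $g$ at the same point, $y_0 = (b(k-2)/(3a))^{1/(k+1)}$ and $x_0 = (c/(ay_0^3+by_0^{2-k}))^{1/4}$. As you observe, this point is the exact minimizer. Write $B = b(k-2)/3$ and $S = B^\alpha + bB^{\alpha-1}$. The paper's expression for $A$ before its final inequality is $S^{-1/2}B^\alpha + bS^{-1/2}B^{\alpha-1} + S^{1/2}$, which equals $2S^{1/2}$. That is exactly your value $2\sqrt{1+t}\,B^{\alpha/2}$.

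Your flag at $k=3$ is justified, and it points to a genuine error in the paper's proof. In the last step the paper replaces $S$ by $B^\alpha$ in all three terms. For the first two terms this is valid, because they involve $S^{-1/2}$. The third term, however, is $S^{+1/2}$, so dropping $bB^{\alpha-1}$ makes it smaller, and the inequality runs the wrong way.

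A concrete counterexample: take $a=b=c=1$ and $k=3$. The infimum is $4\cdot 3^{-3/8}\approx 2.65$, while the stated bound is $2\cdot 3^{-3/8}+3^{1/8}\approx 2.47$. So the lemma as printed is false at $k=3$. By your $t\le 16/9$ computation it is true for all $k\ge 4$.

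The fix is to bound the third term with $\sqrt{u+v}\le\sqrt{u}+\sqrt{v}$. This gives exactly your corrected bound, with coefficient $2$ on the second summand. In the proof of \Cref{prop:arccos-loss-approx} the lemma is used only up to absolute constants. There $b(k-2)/3 = 4C\pi^2/3$ and $C\ge 1$, so the extra factor of $2$ is absorbed and that proposition's conclusion is unaffected.
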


\begin{proof}
    Consider points \begin{align*}
        y_0 = \left(\frac{b(k-2)}{3a}\right)^{\frac{1}{k+1}}, \quad \quad x_0 = \left( \frac{c}{ay_0^3 + b y_0^{2-k}}\right)^{\frac{1}{4}}.
    \end{align*}
    For $\alpha = 3/(k+1)$, one computes \begin{align*}
        &g(x_0, y_0) = a \left( \frac{c}{a \left( \frac{b(k-2)}{3a} \right)^\alpha + b\left( \frac{b(k-2)}{3a}\right)^{\alpha - 1}} \right)^{\frac{1}{2}} \left(\frac{b(k-2)}{3a}\right)^{\alpha} \\
        &\quad +b \left( \frac{c}{a \left( \frac{b(k-2)}{3a} \right)^\alpha + b\left( \frac{b(k-2)}{3a}\right)^{\alpha - 1}} \right)^{\frac{1}{2}} \left(\frac{b(k-2)}{3a}\right)^{\alpha - 1} + c \left( \frac{c}{a \left( \frac{b(k-2)}{3a} \right)^\alpha + b\left( \frac{b(k-2)}{3a}\right)^{\alpha - 1}} \right)^{-\frac{1}{2}}\\
        &= a^{1-\alpha} c^{\frac{1}{2}} \left( \frac{a^{\alpha - 1}}{ \left( \frac{b(k-2)}{3} \right)^\alpha + b\left( \frac{b(k-2)}{3}\right)^{\alpha - 1}} \right)^{\frac{1}{2}} \left(\frac{b(k-2)}{3}\right)^{\alpha} \\
        &\quad + a^{1-\alpha}bc^{\frac{1}{2}} \left( \frac{a^{\alpha - 1}}{\left( \frac{b(k-2)}{3} \right)^\alpha + b\left( \frac{b(k-2)}{3}\right)^{\alpha - 1}} \right)^{\frac{1}{2}} \left(\frac{b(k-2)}{3}\right)^{\alpha - 1} + c^{\frac{1}{2}} \left( \frac{a^{\alpha - 1}}{ \left( \frac{b(k-2)}{3} \right)^\alpha + b\left( \frac{b(k-2)}{3}\right)^{\alpha - 1}} \right)^{-\frac{1}{2}}.
    \end{align*}
    In particular, \begin{align*}
        g(x_0, y_0) &\leq c^{\frac{1}{2}} a^{\frac{1}{2}- \frac{\alpha}{2}} \cdot A,
    \end{align*}
    where \begin{align*}
        A &= \left( \frac{1}{\left(\frac{b(k-2)}{3}\right)^{\alpha} + b\left(\frac{b(k-2)}{3}\right)^{\alpha - 1}} \right)^{\frac{1}{2}} \left(\frac{b(k-2)}{3}\right)^\alpha \\
        &\quad + b\left( \frac{1}{\left(\frac{b(k-2)}{3}\right)^{\alpha} + b\left(\frac{b(k-2)}{3}\right)^{\alpha - 1}}\right)^{\frac{1}{2}} \left(\frac{b(k-2)}{3}\right)^{\alpha - 1} + \left( \frac{1}{\left(\frac{b(k-2)}{3}\right)^{\alpha} + b\left(\frac{b(k-2)}{3}\right)^{\alpha - 1}}\right)^{-\frac{1}{2}}\\
        &\leq \left( \frac{1}{\left(\frac{b(k-2)}{3}\right)^{\alpha}} \right)^{\frac{1}{2}} \left(\frac{b(k-2)}{3}\right)^\alpha + b\left( \frac{1}{b\left(\frac{b(k-2)}{3}\right)^{\alpha - 1}}\right)^{\frac{1}{2}} \left(\frac{b(k-2)}{3}\right)^{\alpha - 1} + \left( \frac{1}{\left(\frac{b(k-2)}{3}\right)^{\alpha}}\right)^{-\frac{1}{2}}\\
        &= 2\left( \frac{b(k-2)}{3}\right)^{\frac{\alpha}{2}} + b^{\frac{1}{2}}\left(\frac{b(k-2)}{3}\right)^{\frac{\alpha}{2}-\frac{1}{2}}.
    \end{align*}
\end{proof}

\begin{lemma}\label{lem:hessian-top-eigenvalue}
    For $u, v \neq 0$, and positive definite $\Sigma$, \begin{align*}
        \|\nabla_u^2 \cos_\Sigma(u, v)\| \leq \frac{2}{\sqrt{3}} \cdot \frac{\lambdamax(\Sigma)}{\|u\|_\Sigma^2}.
    \end{align*}
\end{lemma}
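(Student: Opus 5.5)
The plan is to reduce to the isotropic case using \eqref{eq:hessian-isotropic-reduction} from the proof of \cref{lem:cosine-grad-hess}, and then compute the spectrum of the isotropic Hessian exactly. That equation gives
\[
\nabla_u^2 \cos_\Sigma(u,v) = \Sigma^{1/2}\, H\, \Sigma^{1/2}, \qquad H := [\nabla^2 \cos_I(\cdot, \Sigma^{1/2}v)](\Sigma^{1/2}u).
\]
Hence $\|\nabla_u^2 \cos_\Sigma(u,v)\| \leq \lambdamax \|H\|$. Setting $w := \Sigma^{1/2}u$, we have $\|w\| = \|u\|_\Sigma$. It therefore suffices to show, for the Euclidean cosine, that
\[
\|\nabla_w^2 \cos_I(w, v')\| \leq \frac{2}{\sqrt 3}\cdot\frac{1}{\|w\|^2}
\]
for any nonzero $w, v'$.

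Next I normalize. The Euclidean cosine is invariant under positive rescaling of $v'$, and it is homogeneous of degree $0$ in $w$, so its Hessian is homogeneous of degree $-2$ in $w$. We may therefore assume $\|w\| = \|v'\| = 1$. Write $c := \langle w, v'\rangle \in [-1,1]$. The formula of \cref{lem:cosine-grad-hess} with $\Sigma = I$ becomes
\[
H = 3c\, ww^\top - cI - (wv'^\top + v'w^\top).
\]

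I then diagonalize $H$ by splitting $\R^d$ into $P := \on{span}\{w, v'\}$ and $P^\perp$.

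On $P^\perp$, $H$ acts as $-cI$, so those eigenvalues have modulus $|c| \leq 1 < 2/\sqrt3$.

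On $P$, if $v' \neq \pm w$, write $v' = c w + s e$ with $e \perp w$ a unit vector and $s = \sqrt{1-c^2}$. In the orthonormal basis $(w, e)$, the restriction of $H$ has entries
\begin{align*}
H_{ww} &= 3c - c - 2c = 0,\\
H_{we} &= -s,\\
H_{ee} &= -c.
\end{align*}
Its eigenvalues solve $\lambda^2 + c\lambda - s^2 = 0$, so
\[
\lambda = \frac{-c \pm \sqrt{4-3c^2}}{2}.
\]
The larger modulus is $\tfrac12\bigl(|c| + \sqrt{4-3c^2}\bigr)$. If $v' = \pm w$, then $P$ is one-dimensional and the eigenvalue there is $0$, so this case is trivial.

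The final step, which is the only real computation, is to maximize $g(c) = \tfrac12\bigl(c + \sqrt{4-3c^2}\bigr)$ over $c \in [0,1]$. Setting the derivative to zero gives $\sqrt{4-3c^2} = 3c$, i.e.\ $c^2 = 1/3$. There $g = \tfrac12\bigl(1/\sqrt3 + \sqrt3\bigr) = 2/\sqrt3$. The endpoints give $g(0) = 1$ and $g(1) = 1$, so the maximum is $2/\sqrt3$.

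Combining the bound on $P$ with the bound on $P^\perp$ gives $\|H\| \leq (2/\sqrt3)/\|w\|^2$. Substituting into the isotropic reduction yields the stated bound $\|\nabla_u^2 \cos_\Sigma(u, v)\| \leq \tfrac{2}{\sqrt3}\,\lambdamax(\Sigma)/\|u\|_\Sigma^2$.
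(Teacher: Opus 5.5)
Your proof is correct and follows the paper's argument: you reduce to the isotropic case via \Cref{eq:hessian-isotropic-reduction}, note that $H$ acts as the scalar $-c$ on the complement of $\on{span}\{u,v\}$, and compute the eigenvalues $\tfrac{1}{2}\bigl(-c \pm \sqrt{4-3c^2}\bigr)$ of the $2\times 2$ block on that span. The only differences are cosmetic. You bound $\tfrac12\bigl(|c| + \sqrt{4-3c^2}\bigr)$ by calculus rather than the paper's discriminant argument on the ellipse $\alpha^2 + r\alpha + r^2 = 1$. Working with $|c|$ also covers negative cosines explicitly, which the paper passes over by writing $0 \le \alpha \le 1$.
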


\begin{proof}
    Via \Cref{eq:hessian-isotropic-reduction}, we may again reduce to the identity case. Indeed, \begin{align*}
        \|\nabla_u^2 \cos_\Sigma(u,v)\| \leq \|\Sigma^{1/2}\|^2 \cdot \|[\nabla^2 \cos_I(\cdot, \Sigma^{1/2}v)](\Sigma^{1/2}u)\| \leq \frac{2}{\sqrt{3}} \cdot \frac{\|\Sigma\|}{\|u\|_{\Sigma}^2}.
    \end{align*}
    Now let $H = \nabla_u^2 \cos_I(u, v)$. If $u,v$ are collinear, then there are real numbers $r_1, r_2$ and a unit vector $w$ such that \begin{align*}
        H = \frac{r_1 r_2}{|r_1|^3 |r_2|} \left(ww^\top - I\right).
    \end{align*}
    So it suffices to assume $u$ and $v$ are independent. For $w \in S^{d-1}$ such that $\langle w, u\rangle = 0$ and $\langle w, v\rangle = 0$, we see \begin{align*}
        Hw = - \left(\frac{u^\top v}{\|u\|^3 \|v\|}\right) w.
    \end{align*}
    Next, note that $\on{span}\{u, v\}$ is $H$-invariant. We write the restriction of $H$ with respect to this subspace. Take $\tilde u = u/\|u\|$, $\tilde v = v /\|v\|$, and consider the orthonormal basis $\{\tilde u, \tilde w\}$ for $w = \on{Proj}_{(\tilde{u})^\perp} \tilde{v}$, and $\tilde{w} = w/{\|w\|}$. We obtain \begin{align*}
        H\tilde{u} = -\frac{\|w\|}{\|u\|^2} \tilde{w}, \quad \quad H\tilde{w} = -\frac{\langle \tilde{u}, \tilde{v}\rangle}{\|u\|^2}\tilde{w} - \frac{\|w\|}{\|u\|^2} \tilde{u},
    \end{align*}
    using $\|w\|^2 = 1-\langle\tilde{v}, \tilde{u}\rangle^2$ in the latter computation.
    The associated block therefore has the form \begin{align*}
        A = \frac{1}{\|u\|^2}\begin{pmatrix}
            0 &-\sqrt{1-\alpha^2}\\
            -\sqrt{1-\alpha^2} &-\alpha.
        \end{pmatrix}, \quad \det(\lambda - A) = \lambda^2 + \lambda \alpha- (1-\alpha^2),
    \end{align*}
    for $0 \leq \alpha \leq 1$. The associated roots are \begin{align*}
        \left\{ \frac{-\alpha + \sqrt{4-3\alpha^2}}{2}, \frac{-\alpha -\sqrt{4-3\alpha^2}}{2}\right\}.
    \end{align*}
    Taking $r(\alpha)$ to be either of the roots above, the pairs $(\alpha, r(\alpha))$ give points on the ellipse defined by $\alpha^2 + r\alpha + r^2 = 1$. Observe that $r$ can satisfy this equality only if the discriminant in $\alpha$ is nonnegative, which is true only if $|r| \leq 2/\sqrt{3}$. We conclude that the spectral radius of $H$ is bounded by $2/(\|u\|^2 \sqrt{3})$.
\end{proof}

\begin{lemma}\label{lem:cosine-lipschitz}
    For positive definite $\Sigma$ and $v \neq 0$, $\cos(\cdot, v)$ is $\lambdamax^{1/2}/r\lambdamin^{1/2}$-Lipschitz on $\R^d \setminus B_r(0)$.
\end{lemma}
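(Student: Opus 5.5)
Since $\R^d \setminus B_r(0)$ is not convex, we cannot just bound the gradient and integrate along the straight segment; the plan is a gradient bound combined with a path argument that avoids the ball. (Here $\cos$ is read as $\cos_\Sigma$, which is the function used in \cref{thm:stochastic-analysis}.)

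\emph{Gradient bound.} By \cref{lem:cosine-grad-hess},
\[
\nabla_u \cos_\Sigma(u,v) = \frac{\Sigma \on{Proj}^\Sigma_{u^\perp} v}{\|u\|_\Sigma\|v\|_\Sigma}.
\]
As in the proof of \cref{lem:gradient-flow}, we have
\[
\|\Sigma \on{Proj}^\Sigma_{u^\perp} v\| = \|\Sigma^{1/2}\on{Proj}^\Sigma_{u^\perp}v\|_\Sigma \le \lambdamax^{1/2}\|\on{Proj}^\Sigma_{u^\perp}v\|_\Sigma \le \lambdamax^{1/2}\|v\|_\Sigma.
\]
The last step holds because a $\Sigma$-orthogonal projection is a contraction in $\|\cdot\|_\Sigma$. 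Together with $\|u\|_\Sigma\ge \lambdamin^{1/2}\|u\|$, this gives
\[
\|\nabla_u\cos_\Sigma(u,v)\| \le \frac{\lambdamax^{1/2}}{\lambdamin^{1/2}\|u\|}\le \frac{\lambdamax^{1/2}}{\lambdamin^{1/2} r}
\]
whenever $\|u\|\ge r$.

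\emph{Path argument.} Fix $x,y$ with $\|x\|,\|y\|\ge r$, and assume without loss of generality that $\|x\|\le\|y\|$. Cosine similarity is invariant under positive scaling, so $\cos_\Sigma(y,v)=\cos_\Sigma(y',v)$ with $y' := (\|x\|/\|y\|)\,y$. Note that $\|y'\| = \|x\|$ and $\|x-y'\|\le\|x-y\|$: writing $y=sy'$ with $s\ge1$, the quantity $\|x-sy'\|^2$ is nondecreasing in $s\ge1$, since its derivative is $2(s\|y'\|^2-\langle x,y'\rangle)\ge 2\|x\|^2-2\langle x,y'\rangle\ge0$. It therefore suffices to treat $x,y'$ on the sphere of radius $R=\|x\|\ge r$. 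Join them by the shorter great-circle arc, which stays outside $B_r(0)$ and has length at most $\frac{\pi}{2}\|x-y'\|$. Integrating the gradient bound along this arc gives a Lipschitz constant of $\frac{\pi}{2}\lambdamax^{1/2}/(r\lambdamin^{1/2})$.

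\emph{Removing the $\pi/2$.} This is the main obstacle. My first attempt is a sharper choice of path: use the chord from $x$ to $y'$ radially projected outward to the sphere $\{\|u\|=R\}$. Along the chord itself the gradient bound is only $\lambdamax^{1/2}/(\lambdamin^{1/2}\|u\|)$ with $\|u\|$ possibly smaller than $r$, so the radial projection is needed to keep $\|u\|\ge R$. The radial component of the gradient vanishes in the isotropic case, since $\nabla_u$ is Euclidean-orthogonal to $u$ there, and that orthogonality suggests the projection does not increase the relevant length. If this fails for general $\Sigma$, I would instead reduce to the isotropic case via $u\mapsto\Sigma^{1/2}u$. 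In that case the function $u\mapsto \cos(u,w)=\langle u/\|u\|, w/\|w\|\rangle$ is the composition of the normalization map, which is $1/r$-Lipschitz outside $B_r$ (a standard fact: $\|a/\|a\|-b/\|b\|\|\le\|a-b\|/\min(\|a\|,\|b\|)$), with a $1$-Lipschitz linear functional. The Euclidean distances then transform with factor $\lambdamax^{1/2}$, and the radii with factor $\lambdamin^{1/2}$, giving exactly the stated $\lambdamax^{1/2}/(r\lambdamin^{1/2})$. The clean normalization inequality is the route I would ultimately commit to.
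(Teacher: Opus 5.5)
Your final route, the two-point normalization inequality after the $\Sigma^{1/2}$ change of variables, is correct and is essentially the paper's proof. The paper writes $\cos_\Sigma(u,v)-\cos_\Sigma(u',v)=\langle u/\|u\|_\Sigma - u'/\|u'\|_\Sigma,\, v/\|v\|_\Sigma\rangle_\Sigma$ and applies Cauchy--Schwarz together with the same inequality in $\|\cdot\|_\Sigma$. It then converts via $\|u-u'\|_\Sigma\le\lambdamax^{1/2}\|u-u'\|$ and $\|u\|_\Sigma\wedge\|u'\|_\Sigma\ge\lambdamin^{1/2}r$. Because that inequality is a pointwise bound between two points, the non-convexity worry and the gradient/great-circle detour are unnecessary, and your scaling computation $\|x-y'\|\le\|x-y\|$ already proves the inequality.
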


\begin{proof}
    Take $\|u\|, \|u'\| \geq r$. One has \begin{align*}
        |\cos_\Sigma(u, v) - \cos_\Sigma(u', v)| &= \left| \left\langle \frac{u}{\|u\|_{\Sigma}} - \frac{u'}{\|u'\|_\Sigma}, \frac{v}{\|v\|_\Sigma}\right\rangle_\Sigma \right|\\
        &\leq \left\| \frac{u}{\|u\|_{\Sigma}} - \frac{u'}{\|u'\|_\Sigma}\right\|_{\Sigma}\\
        &\leq \frac{1}{\|u\|_\Sigma \land \|u'\|_\Sigma} \cdot \|u-u'\|_\Sigma.
    \end{align*}
    To conclude, note that $\|u-u'\|_\Sigma \leq \lambdamax^{1/2}\|u-u'\|$ and $\|u\|_\Sigma \land \|u'\|_\Sigma \geq \lambdamin^{1/2}r$.
\end{proof}

\noindent We also make use of a well-known discrete variant of the Gr\"onwall inequality.

\begin{lemma}\label{lem:discrete-gronwall}
    Let $(a_n)_0^N$, $(b_n)_0^N$ and $c$ be nonnegative reals with $a_0 = 0$, such that \begin{align*}
        a_{n+1} \leq \sum_{k=0}^n b_k a_k + c.
    \end{align*}
    Then for $n \leq N-1$, it holds that \begin{align*}
        a_{n+1} \leq c \cdot \exp \sum_{k=0}^{n} b_k.
    \end{align*}
\end{lemma}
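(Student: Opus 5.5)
The natural route is induction on $n$, via the auxiliary partial-sum sequence. Define $S_n := \sum_{k=0}^{n} b_k a_k + c$ for $0 \leq n \leq N-1$, so that the hypothesis reads $a_{n+1} \leq S_n$. The goal is to show $S_n \leq c \exp \sum_{k=0}^n b_k$, which immediately gives the claim.

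First handle the base case. Since $a_0 = 0$, we have $S_0 = b_0 a_0 + c = c \leq c\, e^{b_0}$, using nonnegativity of $b_0$.

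For the inductive step, write
\begin{align*}
S_{n+1} = S_n + b_{n+1} a_{n+1} \leq S_n + b_{n+1} S_n = (1+b_{n+1}) S_n \leq e^{b_{n+1}} S_n,
\end{align*}
where the first inequality uses $a_{n+1} \leq S_n$ together with $b_{n+1} \geq 0$, and the last uses $1+x \leq e^x$ together with $S_n \geq 0$ (which holds because all quantities are nonnegative). Iterating, or invoking the inductive hypothesis, gives $S_n \leq c \prod_{k=0}^n e^{b_k} = c \exp \sum_{k=0}^n b_k$. Combining with $a_{n+1} \leq S_n$ finishes the argument. The index range only requires $b_{n+1}$ and $a_{n+1}$ to be defined for $n+1 \leq N$, which is consistent with the restriction $n \leq N-1$.

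There is essentially no obstacle here. The only points needing care are that nonnegativity is used both to multiply the inequality $a_{n+1} \leq S_n$ by $b_{n+1}$ and to apply $1+x\leq e^x$ to the nonnegative quantity $S_n$, and that the base case uses $a_0=0$ so that the bound starts at $c$ rather than at $c e^{b_0}$; the latter is still consistent with the stated bound.
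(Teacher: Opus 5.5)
Your proposal is correct. The induction on the partial sums $S_n$, using $a_{n+1} \leq S_n$, $b_{n+1} \geq 0$ and $1+x \leq e^x$, is the standard proof of this discrete Gr\"onwall inequality, and it even gives the slightly sharper bound $c \exp \sum_{k=1}^{n} b_k$. The paper gives no proof of its own, since it invokes the lemma as well known, so there is no alternative route to compare against.
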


\end{document}